\documentclass[twoside,11pt]{article}
\usepackage{jmlr2e}
\usepackage{mathtools}
\mathtoolsset{showonlyrefs}
\usepackage{enumitem}
\usepackage{xcolor}
\definecolor{linkcol}{HTML}{0A4D8C}
\definecolor{citecol}{HTML}{0F7F6E}
\definecolor{urlcol}{HTML}{0E6BB5}
\usepackage{hyperref}
\hypersetup{
  colorlinks=true,
  linkcolor=linkcol,
  citecolor=citecol,
  urlcolor=urlcol,
  breaklinks=true,
  pdfborder={0 0 0}
}
\firstpageno{1}

\makeatletter
\def\@starteditor{}%
\def\@endeditor{}%
\makeatother

\makeatletter

\newtheorem{theorem}{Theorem}[section]
\newtheorem{proposition}{Proposition}[section]
\newtheorem{lemma}{Lemma}[section]
\newtheorem{corollary}{Corollary}[section]
\newtheorem{definition}{Definition}[section]
\newtheorem{example}{Example}[section]
\newtheorem{remark}{Remark}[section]
\newtheorem{assumption}{Assumption}[section]
\makeatother

\newcommand{\KL}{\mathrm{KL}}
\newcommand{\Law}{\mathrm{Law}}
\newcommand{\cS}{\mathcal{S}}
\newcommand{\cA}{\mathcal{A}}
\newcommand{\cP}{\mathcal {P}}
\newcommand{\cG}{\mathcal{G}}
\newcommand{\T}{\mathcal{T}}
\newcommand{\dd}{\mathrm d}
\newcommand{\E}{\mathbb E}
\newcommand{\R}{\mathbb R}
\renewcommand{\H}{\mathsf{H}}

\newcommand{\osc}{\mathrm{osc}}



\begin{document}
\title{Global Convergence of Wasserstein Policy Gradient for Entropy-Regularized Reinforcement Learning}
\author{\name Zhaoyu Zhu \email zzy12345@sjtu.edu.cn \\
       \addr Shanghai Jiao Tong University
       \AND
       \name Rui Gao \email rui.gao@mccombs.utexas.edu \\
       \addr The University of Texas at Austin
       \AND
       \name Shuang Li \email lishuang@cuhk.edu.cn \\
       \addr The Chinese University of Hong Kong, Shenzhen
      }
\editor{}

\maketitle

\begin{abstract}%

Wasserstein policy gradient (WPG) is a policy optimization method for reinforcement learning (RL) that exploits the optimal-transport geometry of action distributions. For the entropy-regularized RL objective, WPG evolves each state-conditional policy by transporting it along the action gradient of the soft Q-function together with a Langevin-type diffusion. Despite its appeal for continuous-control problems, its global convergence properties remain poorly understood. Standard Langevin analyses do not directly apply, because the RL objective depends on the policy through the Bellman recursion rather than through a static convex functional, and the Langevin drift is determined by the soft Q-function, whose regularity must be controlled along the policy iterates.

In this paper, we develop a global convergence theory for WPG by exploiting the Bellman structure of entropy-regularized RL. We show that the role usually played by convexity can be replaced by a Bellman-based argument: the soft Bellman residual admits a statewise KL representation with respect to a Gibbs policy; Bellman contraction relates this residual to the global optimality gap; and a Bellman resolvent identity connects value improvement to relative Fisher information. Combined with a uniform log-Sobolev inequality (LSI) for the evolving Gibbs family, these ingredients yield a distributional Polyak--\L{}ojasiewicz condition. We further establish the regularity and uniform bounds needed to control the discretization error, thereby obtaining geometric contraction up to a discretization bias.
Conceptually, our analysis shows that although entropy-regularized RL is not convex in the usual flat sense, the Bellman recursion induces a favorable Polyak--\L{}ojasiewicz-type (PL) geometry that supports global convergence of WPG.
\end{abstract}

\section{Introduction}

Policy gradient (PG) methods are among the most widely used algorithms in reinforcement learning (RL). Beginning with REINFORCE \citep{williams1992simple} and the policy gradient theorem \citep{sutton2000policygradient}, classical PG performs Euclidean gradient ascent over a parameterized policy class. While simple and scalable, Euclidean PG is sensitive to step sizes and parameterization, which has motivated geometry-aware alternatives. Natural policy gradient replaces the Euclidean metric with the information geometry induced by the Fisher information \citep{amari1998natural,kakade2001natural}, and is closely related to trust-region (TR) methods. In deep RL, TRPO constrains the average KL divergence between successive policies \citep{schulman2015trpo}, while PPO provides a practical surrogate through clipping or adaptive KL penalties \citep{schulman2017ppo}. These developments underscore the central role of geometry in policy optimization.

Beyond information geometry, optimal transport (OT) provides a different geometry for policy space, one that is particularly natural for distributions over continuous action spaces. Existing Wasserstein policy optimization methods can be viewed as different implementations or approximations of Wasserstein gradient flow: through JKO schemes and particle approximations \citep{zhang2018wgf}, kernelized Wasserstein natural-gradient approximations \citep{arbel2020kwng,moskovitz2020efficientwng}, trust region formulations \citep{terpin2022ottrpo,song2024metrictr}, and Wasserstein gradient updates projected onto explicit \citep{pfau2025wpo} or implicit \citep{zhu2026wasserstein} policy classes. These methods share the same underlying mechanism: policy improvement is driven by transport in action space.

Entropy regularization is another powerful scheme in modern continuous-control RL. Maximum-entropy RL augments reward maximization with an entropy term on the policy distribution over actions, promoting exploration and robustness and leading to soft Bellman operators and Gibbs-type policy improvements \citep{ziebart2010maxent,haarnoja2018sac,geist2019regmdp}. From the OT perspective, this entropy term naturally induces diffusion. Consequently, Wasserstein policy optimization for the entropy-regularized objective takes the form of a drift-diffusion, or Fokker--Planck, equation, whose particle representation is a Langevin-type actor update.
More concretely, the Wasserstein Policy Gradient (WPG) flow reads
\begin{equation}\label{eq:intro_wpgf}
\partial_t \pi_t(a\mid s)
=
-\mathrm{div}_a\Big(\pi_t(a\mid s)\nabla_a\big(Q^{\pi_t}(s,a)-\tau\log\pi_t(a\mid s)\big)\Big),
\end{equation}
and its discrete-time Langevin counterpart is
\begin{equation}\label{eq:intro_wpgd}
A_{k+1}^s
=
A_k^s + \eta \nabla_a Q^{\pi_k}(s,A_k^s) + \sqrt{2\tau\eta}\,\xi_{k+1},
\qquad
\xi_{k+1}\sim \mathcal N(0,I_d),
\end{equation}
where $\pi_k(\cdot\mid s)=\Law(A_k^s)$. Thus, each state-conditional policy is transported along the action gradient of the current soft Q-function, while entropy regularization contributes to the Langevin diffusion.

Despite the algorithmic appeal of WPG, its global convergence behavior remains largely open. Existing global analyses of PG or natural PG typically rely on Euclidean or KL/Bregman geometry in policy space, using mirror-descent identity to relate policy updates to global suboptimality \citep{agarwal2021pgtheory,bhandari2024globalpg,cen2022fastnpg,bhandari2021linearpg,khodadadian2021cdc,khodadadian2022scl,yuan2023loglinear,xiao2022convergence,lan2023policy,zhan2023gpmd,lan2023bpmd}. These arguments do not transfer to Wasserstein geometry, which is non-Bregman and lacks the pointwise three-point structure underlying KL-based analyses. Existing OT-based convergence results also do not directly help for the Langevin WPG update in \eqref{eq:intro_wpgd}. The convergence result in \citet{zhang2018wgf} is asymptotic and places the entropy regularization on the policy parameters rather than on state-conditional policy distributions. The trust-region analysis of \citet{song2024metrictr} applies to finite tabular MDPs without entropy regularization, and \citet{zhu2026wasserstein} studies a JKO-type implicit proximal scheme. None of these analyses transfers to Langevin dynamics.

Our analysis is closely related to the broader literature on mean-field Langevin dynamics, which has been used to study training dynamics of large interacting particle systems and over-parameterized neural networks \citep{chizat2018global,mei2018meanfield,nitanda2022convex,chizat2022mean,cai2025convergence}. However, global convergence arguments in that literature typically rely on a convex objective and convexity-analysis based argument. 
RL objective has a different structure: the policy enters the objective through the Bellman recursion which destroys the convexity, and the soft Q-function entangles the reward and entropy term which make the regularity of the Langevin drift not taken for granted.

\paragraph{Contribution.}
Our main contribution is a non-asymptotic global convergence analysis of WPG for entropy-regularized continuous-action RL. Rather than relying on the convex-analysis machinery commonly used in standard Mean-field Langevin dynamics, we exploit the Bellman structure of the RL objective. The analysis is built on the following ingredients.

\begin{itemize}
\item \emph{Value improvement-to-Fisher information via a Bellman resolvent identity.}
We show that policy-space dissipation propagates through the Bellman equation via a resolvent identity. This allows us to lower bound value improvement by Fisher-information dissipation, replacing the standard energy-dissipation identity used in convex settings.

\item \emph{Optimality gap-to-KL via Bellman residuals and contraction.}
We represent the statewise soft Bellman residual as a KL divergence between the current policy and its associated Gibbs policy. Bellman contraction then relates this residual to the global optimality gap in sup-norm, replacing the entropy-sandwich arguments used in convex settings.

\item \emph{Uniform control of the moving Gibbs family.}
To exploit the commonly used uniform logarithmic Sobolev inequalities to connect the KL divergence (controlling the optimality gap) with the Fisher information (controlling value improvement), we establish uniform bound estimates along the WPG iterates. 
\end{itemize}

We first illustrate these ideas in the continuous-time WPG flow, where the Bellman residual, contraction, resolvent mechanisms appear most transparently. Then we provide careful analysis for the discrete-time Langevin update \eqref{eq:intro_wpgd}, which requires additional regularity estimates for the Wasserstein gradient and a one-step interpolation argument \citep{vempala2019rapid} to control time-discretization error. To this end, we establish refined uniform \emph{a priori} bounds on value functions, soft $Q$-functions, drift Lipschitz constants, action moments, drift moments, and the relevant KL quantities. Combining these ingredients yields geometric contraction of the global optimality gap up to a discretization bias.

\paragraph{Organization.}
Section~\ref{sec:model} introduces the entropy-regularized discounted RL model, the soft Bellman operators, and the Wasserstein/Langevin policy update. 
Section~\ref{sec:continuous} presents the continuous-time analysis, isolating the core Bellman mechanisms underlying the proof. 
Section~\ref{sec:discrete} proves the finite-time convergence theorem for the discrete-time update, with emphasis on the additional regularity estimates and discretization-error control. 
The appendices contain auxiliary results and detailed proofs.

\section{Model Setup}\label{sec:model}

We consider an infinite-horizon discounted Markov decision process
\[
\mathcal M=(\cS,\cA,P,r,\gamma,\rho_0),
\]
where $\cS$ is the state space, $\cA$ is the action space,
$\gamma\in(0,1)$ is the discount factor, $\rho_0\in\cP(\cS)$ is the initial state distribution,
$P(\dd s'\mid s,a)$ is a Markov transition kernel on $\cS$, and
$r:\cS\times\cA\to\mathbb R$ is a measurable reward function.
A (stationary Markov) policy is a conditional distribution $\pi(\dd a\mid s)$ that admits a density,
still denoted $\pi(a\mid s)$, with respect to Lebesgue measure on $\cA$.
Given $\pi$, a trajectory $(s_n,a_n)_{n\ge 0}$ is generated by
$s_0\sim\rho_0$, $a_n\sim\pi(\cdot\mid s_n)$, and $s_{n+1}\sim P(\cdot\mid s_n,a_n)$.
We write $\E_{\rho_0,\pi}[\cdot]$ for expectations under this rollout distribution.
The normalized discounted state occupancy of $\pi$ is the probability measure $d^\pi\in\cP(\cS)$ defined by
\begin{equation}\label{eq:occ_def}
d^\pi(B)=(1-\gamma)\sum_{n\ge 0}\gamma^n\mathbb P_{\rho_0,\pi}(s_n\in B),
\qquad B\in\mathcal B(\cS).
\end{equation}
We assume throughout that the initial distribution $\rho_0$ has full support on
$\cS$. Hence, for every policy $\pi$,
\[
d^\pi(B)
=
(1-\gamma)\sum_{t\ge0}\gamma^t\mathbb P_{\rho_0,\pi}(s_t\in B)
\ge
(1-\gamma)\rho_0(B),
\qquad B\in\mathcal B(\cS).
\]
Thus $d^\pi$ has full support whenever $\rho_0$ does. This ensures that the
statewise policy-gradient updates below are specified on the same state space on
which the sup-norm value bounds are proved.
We use the following standing measurability convention. The model kernels and their action derivatives admit jointly measurable versions such that all displayed integrals are jointly measurable. Under this convention, Bellman fixed points are selected as bounded measurable fixed points of contraction maps, and Gibbs and WPG kernels are taken in their induced jointly measurable versions. 

We work with an action space $\cA=\R^d$. We consider the regularized reward
\[
\tilde r(s,a)=r(s,a)-\frac{\beta}{2}\|a\|^2,
\]
where $\beta>0$ is the quadratic action penalty. 
This penalty term stabilizes the training \citep{lillicrap2015continuous,brockman2016openai,tunyasuvunakool2020dm_control}, and is also exploited in standard mean-field Langevin dynamics analysis \citep{yamamoto2024mean}, closely related to the dissipativity condition that helps to control the KL divergence along the trajectory.
Fix a temperature $\tau>0$, we study the discounted regularized objective
\begin{equation}\label{eq:J-def}
J(\pi)=\E_{\rho_0,\pi}\left[\sum_{n\ge 0}\gamma^n\left(\tilde r(s_n,a_n)-\tau\log\pi(a_n\mid s_n)\right)\right],
\end{equation}
restricting attention to policies for which the above expectation is finite. 
The quadratic action penalty is essential on the unbounded action space. Without this term, the entropy bonus can be made arbitrarily large by spreading
the policy mass over larger regions of \(\mathbb R^d\), so the entropy-regularized policy-improvement objective may be unbounded above. The
quadratic penalty rules out this degeneracy. Equivalently, the quadratic term induces the Gaussian reference
\begin{equation}\label{eq:rho_beta_def}
\rho_\beta(a)=Z_\beta^{-1}\exp\left(-\frac{\beta}{2\tau}\|a\|^2\right), \quad Z_\beta=\left(\frac{2\pi\tau}{\beta}\right)^{d/2}.
\end{equation}
For the discrete-time result, we work on the admissible class
\begin{equation}\label{eq:admissible_class}
\begin{aligned}
\Pi
:=\Big\{\pi:
&\ \pi(\cdot\mid s)\ll \dd a,
\ (s,a)\mapsto \pi(a\mid s)\text{ jointly measurable},
\ \sup_{s\in\cS}\KL(\pi(\cdot\mid s)\|\rho_\beta)<\infty\Big\}.
\end{aligned}
\end{equation}
Finite relative entropy gives finite second moments; see Lemma~\ref{lem:gaussian-KL-second-moment} in Appendix~\ref{app:analytic-tools}.

\subsection{Soft value functions and Bellman operators}

Given a policy $\pi$, define the soft value function
\begin{equation}\label{eq:Vpi_def}
V^\pi(s)=\E\left[\sum_{n\ge 0}\gamma^n\big(\tilde r(s_n,a_n)-\tau\log\pi(a_n\mid s_n)\big)\mid s_0=s\right],
\end{equation}
so that $J(\pi)=\int_{\cS}V^\pi(s)\rho_0(\dd s)$ whenever the integral is well-defined.
Define the state-action value function
\begin{equation}\label{eq:Qpi_def}
Q^\pi(s,a)=\tilde r(s,a)+\gamma\int_{\cS}V^\pi(s')P(\dd s'\mid s,a).
\end{equation}
Then $V^\pi$ satisfies the soft Bellman identity
\begin{equation}\label{eq:bellman_Vpi}
V^\pi(s)=\int_{\cA}\left(Q^\pi(s,a)-\tau\log\pi(a\mid s)\right)\pi(a\mid s)\dd a.
\end{equation}

For a fixed policy $\pi$, define the policy evaluation operator acting on bounded measurable $V:\cS\to\mathbb R$ by
\begin{equation}\label{eq:Tpi}
(\T^\pi V)(s)=\int_{\cA}\left(\tilde r(s,a)+\gamma\int_{\cS}V(s')P(\dd s'\mid s,a)-\tau\log\pi(a\mid s)\right)\pi(a\mid s)\dd a.
\end{equation}
Under standard measurability and integrability conditions, $\T^\pi$ is a $\gamma$-contraction in $\|\cdot\|_\infty$ and its unique fixed point is $V^\pi=\T^\pi V^\pi$.
For a bounded candidate value $V$, write
\[
Q_V(s,a):=\tilde r(s,a)+\gamma\int_{\cS}V(s')P(\dd s'\mid s,a).
\]
Define the soft optimality operator by
\begin{equation}\label{eq:Tstar}
(\T^\star V)(s)=\sup_{\mu}\ \int_{\R^d}\bigl(Q_V(s,a)-\tau\log\mu(a)\bigr)\mu(a)\dd a,
\end{equation}
where the supremum ranges over action densities for which the expression is finite. The supremum is attained uniquely at a Gibbs density, and therefore
\begin{equation}\label{eq:Tstar_logsumexp}
(\T^\star V)(s)=\tau\log\int_{\R^d}\exp\left(\frac{Q_V(s,a)}{\tau}\right)\dd a,
\qquad
\mathcal G[V](a\mid s)
=
\frac{\exp(Q_V(s,a)/\tau)}{\int_{\R^d}\exp(Q_V(s,\tilde a)/\tau)\dd\tilde a}.
\end{equation}
Moreover, $\T^\star$ is a $\gamma$-contraction in $\|\cdot\|_\infty$ and has a unique bounded fixed point $V^\star=\T^\star V^\star$. We set $Q^\star:=Q_{V^\star}$ and $\pi^\star:=\mathcal G[V^\star]$.

For each $(\pi,s)$, define the Gibbs density by
\begin{equation}\label{eq:gibbs_ref_density}
p_s^\pi(a)
:=\mathcal G[V^\pi](a\mid s)
=
\frac{\exp(Q^\pi(s,a)/\tau)}{\int_{\R^d}\exp(Q^\pi(s,\tilde a)/\tau)\dd \tilde a},
\qquad a\in\R^d.
\end{equation}
The following identity relates the Bellman residual to the KL divergence between the current policy and the Gibbs density.

\begin{lemma}[Bellman residual identity]\label{prop:residual_identity_ct}
For every $s \in \cS$,
\begin{equation}\label{eq:residual_identity_ct}
(\T^\star V^{\pi})(s) - V^{\pi}(s)
= \tau \KL(\pi(\cdot \mid s)\|p_s^{\pi})
\ge 0,
\end{equation}
where $p_s^{\pi}$ is the Gibbs density defined in \eqref{eq:gibbs_ref_density}.
\end{lemma}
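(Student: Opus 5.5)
The identity is a direct computation once $Q^\pi = Q_{V^\pi}$ is recognized; no contraction or variational argument is needed beyond the log-sum-exp formula \eqref{eq:Tstar_logsumexp}. Fix $s\in\cS$ and write $Z_s^\pi:=\int_{\R^d}\exp(Q^\pi(s,a)/\tau)\,\dd a$. By the definitions \eqref{eq:Qpi_def} and $Q_V$, we have $Q_{V^\pi}=Q^\pi$. Hence \eqref{eq:Tstar_logsumexp} gives $(\T^\star V^\pi)(s)=\tau\log Z_s^\pi$, and $p_s^\pi(a)=\exp(Q^\pi(s,a)/\tau)/Z_s^\pi$. Taking logarithms yields the pointwise relation
\[
Q^\pi(s,a)=\tau\log p_s^\pi(a)+\tau\log Z_s^\pi,\qquad a\in\R^d .
\]

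Next, substitute this relation into the soft Bellman identity \eqref{eq:bellman_Vpi}. Since $\pi(\cdot\mid s)$ is a probability density, the constant $\tau\log Z_s^\pi$ integrates to itself, and we obtain
\[
V^\pi(s)=\tau\log Z_s^\pi+\tau\int_{\R^d}\pi(a\mid s)\bigl(\log p_s^\pi(a)-\log\pi(a\mid s)\bigr)\dd a
=(\T^\star V^\pi)(s)-\tau\KL(\pi(\cdot\mid s)\|p_s^\pi).
\]
This is exactly \eqref{eq:residual_identity_ct}. Nonnegativity then follows from Gibbs' inequality, $\KL\ge 0$.

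The main obstacle is justifying the split of the integral. The manipulation is only legitimate if $Z_s^\pi<\infty$, so that $p_s^\pi$ is a genuine probability density, and if the integrals $\int Q^\pi\pi$ and $\int\pi\log\pi$ are separately finite. The first point should follow from boundedness of $r$ and $V^\pi$ together with the quadratic penalty in $\tilde r$, which gives
\[
\exp\bigl(Q^\pi(s,a)/\tau\bigr)\le C\exp\bigl(-\beta\|a\|^2/(2\tau)\bigr),
\]
so $Z_s^\pi$ is finite and positive. For the second point, I would restrict to policies with finite objective, or to the admissible class $\Pi$. There, finite $\KL(\pi(\cdot\mid s)\|\rho_\beta)$ gives a finite second moment via Lemma~\ref{lem:gaussian-KL-second-moment}, which makes $\int Q^\pi\pi$ finite, and then $\int\pi\log\pi$ is finite as well. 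With both integrals finite, $\int\pi\log p_s^\pi$ is finite and the difference may be split as above. If the integrability fails, both sides of \eqref{eq:residual_identity_ct} can be interpreted in $[0,\infty]$ and the identity persists.
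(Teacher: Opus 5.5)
Your proof is correct and is essentially the paper's argument. The paper also writes $Q^\pi(s,a)=\tau\log p_s^\pi(a)+\tau\log Z$ through the Gibbs identity $\int f\mu-\tau\int\mu\log\mu=\tau\log Z-\tau\KL(\mu\|p)$, uses it to identify $(\T^\star V^\pi)(s)=\tau\log Z$, and then applies it with $\mu=\pi(\cdot\mid s)$ in the fixed-point equation $V^\pi=\T^\pi V^\pi$. The only differences are that you cite the log-sum-exp formula for $\T^\star$ rather than re-deriving it from the variational definition, and that your integrability check (finiteness of $Z$ from the quadratic penalty, finite second moment and entropy from admissibility) makes explicit what the paper leaves implicit.
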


\subsection{Wasserstein Policy Gradient}\label{sec:wpg}

Fix a policy $\pi$. The first-variation calculation in
Appendix~\ref{app:wpg-first-variation} shows that, for statewise transport
perturbations of the action law, the ascent direction is
\[
\nabla_a\bigl(Q^\pi(s,a)-\tau\log\pi(a\mid s)\bigr).
\]
The discounted occupancy $d^\pi$ enters this calculation through the usual
policy-gradient weighting over states, as in the policy-gradient theorem and
trust-region/natural-gradient policy methods
\citep{sutton2000policygradient,kakade2001natural,schulman2015trpo}. Taking the
Wasserstein gradient of this first variation state by state gives the
Wasserstein policy-gradient flow
\begin{equation}\tag{\texttt{WPGF}}\label{eq:FP_actor}
\partial_t\pi_t(a\mid s)
=
-\nabla_a\cdot\left[
\pi_t(a\mid s)
\nabla_a\bigl(Q^{\pi_t}(s,a)-\tau\log\pi_t(a\mid s)\bigr)
\right].
\end{equation}
Using the Gibbs density $p_s^{\pi_t}$ in \eqref{eq:gibbs_ref_density}, this can be
rewritten as
\[
\partial_t\pi_t(a\mid s)
=
\tau\nabla_a\cdot\left(
\pi_t(a\mid s)
\nabla_a\log\frac{\pi_t(a\mid s)}{p_s^{\pi_t}(a)}
\right).
\]

From an equivalent particle viewpoint, for each fixed state \(s\), the above Fokker--Planck equation can be viewed formally as the forward equation of the nonlinear Langevin diffusion
\[
dA_t^s
=
\nabla_a Q^{\pi_t}(s,A_t^s)\,dt
+
\sqrt{2\tau}\,dB_t^s,
\qquad
\pi_t(\cdot\mid s):=\Law(A_t^s).
\]
Indeed, since
\[
\tau\nabla_a\log p_s^{\pi_t}(a)=\nabla_a Q^{\pi_t}(s,a),
\]

Applying an explicit Euler--Maruyama step to this statewise Langevin diffusion gives the discrete-time update below.

The discrete-time counterpart of \eqref{eq:FP_actor} is the Langevin policy update
\begin{equation}\tag{\texttt{WPGD}}\label{eq:ULA_update_dt}
A_{k+1}^s
=
A_k^s+\eta\nabla_a Q^{\pi_k}(s,A_k^s)
+
\sqrt{2\tau\eta}\,\xi_{k+1},
\qquad
\xi_{k+1}\sim N(0,I_d),
\end{equation}
where $A_k^s\sim\pi_k(\cdot\mid s)$ and
\[
\pi_{k+1}(\cdot\mid s):=\Law(A_{k+1}^s).
\]
Thus WPGD is a statewise policy-space update specified for every $s\in\cS$.
Under the full-support convention above, the discounted occupancy weighting does
not remove any state from the gradient calculation, and the convergence theorem
below controls the resulting value gap in sup-norm over the same state
space.

\section{Continuous-Time Analysis: Bellman Geometry}\label{sec:continuous}

This section presents a formal continuous-time analysis of the Wasserstein policy gradient flow \eqref{eq:FP_actor}. Its purpose is to make transparent the Bellman mechanisms that replace the standard mean-field Langevin dynamics proof chain. We do not claim here a complete well-posedness theorem for the nonlinear Fokker--Planck equation. Instead, the identities below are stated under a smooth-flow regularity convention that supplies the differentiability, integrability, and boundary-decay properties needed for the calculation.

\paragraph{Why the standard mean-field Langevin dynamics proof chain does not directly apply.}
For a mean-field Langevin dynamics problem, exponential convergence is often proved through the schematic chain \citep{nitanda2022convex,chizat2022mean}:
\begin{equation}\label{eq:standard_proof_chain}
\frac{\dd}{\dd t}\mathcal F(\mu_t)
\overset{\textup{(a)}}{=}
-\,\mathcal I(\mu_t\|\nu_t)
\overset{\textup{(b)}}{\lesssim}
-\,\KL(\mu_t\|\nu_t)
\overset{\textup{(c)}}{\lesssim}
-\,\bigl(\mathcal F(\mu_t)-\mathcal F^\star\bigr),    
\end{equation}
where \textup{(a)} is the gradient-flow dissipation identity, \textup{(b)} follows from a uniform LSI, and \textup{(c)} is through a convex-analysis based comparison from local KL to global suboptimality. In entropy-regularized RL, these relationships do not hold in these forms. First, \textup{(a)} is altered by the Bellman recursion: differentiating $V^{\pi_t}$ also differentiates $Q^{\pi_t}$ through the value function, so the value derivative is not merely a statewise Fisher-information dissipation. Lemma~\ref{prop:V_derivative_identity_ct} replaces this step by the value-resolvent identity $(I-\gamma P^{\pi_t})\dot V^{\pi_t}=g_t$, with $g_t$ proportional to a statewise Fisher-information term. Second, \textup{(b)} is not directly available from a convex functional, because the RL objective is nonconvex due to the Bellman recursion of the state-dependent policy distribution. Lemma~\ref{lem:uniform_bounds_ct} and  \ref{lem:uniform_LSI_ct} will derive a uniform LSI from uniform value/$Q$ bounds. Third, \textup{(c)} cannot be obtained from convexity due to the nonconvexity and aggregation over states in the RL objective. Instead, Lemma~\ref{prop:residual_identity_ct} identifies $\tau\KL(\pi_t(\cdot\mid s)\|p_s^{\pi_t})$ with the statewise Bellman residual, and Bellman contraction in the proof of Theorem \ref{thm:exp_rate} will compare this residual at the worst state directly with the $\|\cdot\|_\infty$-value gap.

\emph{Smooth-flow regularity convention.}
Throughout this section only, the flow \eqref{eq:FP_actor} is considered for smooth positive solutions $\pi_t(\cdot\mid s)$ satisfying the following properties on every finite time interval. For each state $s$, $t\mapsto V^{\pi_t}(s)$ is locally absolutely continuous, the density $\pi_t(\cdot\mid s)$ has the moments, entropy, Fisher information, and decay at infinity needed for all displayed integrations by parts, and $a\mapsto Q^{\pi_t}(s,a)$ is sufficiently smooth. Differentiation under the action/state integrals is assumed valid. 
These assumptions are used only for the formal PDE calculation in this section for illustration and will not be used in the discrete-time analysis in Section~\ref{sec:discrete}.

\begin{assumption}[Boundedness and initialization]\label{ass:ct-standing}
We assume the following throughout:
\begin{enumerate}[label=(\roman*)]
\item $|r(s,a)|\le R_{\max}$ for all $(s,a)\in\cS\times\cA$.
\item $\sup_{s\in\cS}\KL(\pi_0(\cdot\mid s)\|\rho_\beta)\le K_0$, where
$
\rho_\beta(a) \propto
\exp\left(-\frac{\beta}{2\tau}\|a\|^2\right)$.
\end{enumerate}
\end{assumption}

Our proof proceeds in three steps.
First, we establish a resolvent identity for the value derivative along the flow (Lemma~\ref{prop:V_derivative_identity_ct}), which reveals a dissipation given by Fisher information relative to the moving Gibbs family $p_s^{\pi_t}$ and yields monotone value improvement. 
Second, we use this monotonicity together with bounded rewards and the initialization condition to obtain uniform-in-time bounds on $V^{\pi_t}$ and $Q^{\pi_t}$ (Lemma~\ref{lem:uniform_bounds_ct}); these bounds imply that each $p_s^{\pi_t}$ is a bounded perturbation of a Gaussian, enabling a uniform log-Sobolev inequality along the optimization trajectory via Holley-Stroock Perturbation Lemma
(Lemma~\ref{lem:uniform_LSI_ct}). Third, we combine the uniform LSI with the Bellman-residual identity, Lemma~\ref{prop:residual_identity_ct}, and Bellman contraction to convert KL dissipation into an exponential-decay estimate for the optimality gap (Theorem~\ref{thm:exp_rate}).

Note that RL policy differs from the standard mean-field Langevin dynamics in that it is concerned with a family of state-dependent policy distributions $\{\pi(\cdot\mid s)\}_{s\in\cS}$ rather than a single distribution. 
Thus, adapting the usual proof chain ~
\eqref{eq:standard_proof_chain} to RL settings requires choosing how to aggregate statewise quantities. Another possible route is to aggregate the KL terms in (c) by a discounted state-visitation distribution via the performance difference lemma~\ref{lem:performance_difference}, as in occupancy-weighted policy-gradient analyses. 
However, this creates a mismatch between the state distribution used to measure the optimality gap and the state aggregation naturally induced by the value derivative or resolvent identity in (a). Closing this gap typically requires density-ratio or coverage assumptions between state distributions \citep{xiao2022convergence,agarwal2021pgtheory,zhang2020variational,ding2022global}. Our argument above circumvents this route by keeping the Bellman residual identity statewise and using Bellman contraction to lower bound the residual in terms of the sup-norm value gap, and thus does not require additional coverage assumptions.

For a policy $\pi$, define the induced state kernel
\[
P^\pi(\dd s'\mid s)
:=
\int_{\mathbb R^d} P(\dd s'\mid s,a)\pi(a\mid s)\dd a.
\]

\begin{definition}[KL divergence and relative Fisher information]
Let \(\mu\) and \(\nu\) be probability measures on \(\mathbb R^d\) with \(\mu\ll \nu\). 
The Kullback--Leibler divergence of \(\mu\) from \(\nu\) is defined by
\[
\KL(\mu\|\nu)
:=
\int_{\mathbb R^d}
\log\frac{d\mu}{d\nu}\,d\mu .
\]
The relative Fisher information of
\(\mu\) with respect to \(\nu\) is defined by
\[
\mathcal I(\mu\|\nu)
:=
\int_{\mathbb R^d}
\left\|
\nabla\log\frac{d\mu}{d\nu}
\right\|^2 d\mu .
\]
\end{definition}

\begin{definition}[Log-Sobolev inequality]
Let \(\nu\) be a probability measure on \(\mathbb R^d\) with positive density.
We say that \(\nu\) satisfies a log-Sobolev inequality with constant
\(\alpha>0\), or \(\alpha\)-LSI, if
\[
\mathcal I(\mu\|\nu)
\ge
2\alpha\,\KL(\mu\|\nu),
\qquad
\forall \mu\ll \nu
\]
for every probability measure \(\mu\) for which the two sides are well-defined.
\end{definition}

We first establish an identity for the time derivative of $V^{\pi_t}$ along the flow \eqref{eq:FP_actor}.

\begin{lemma}[Monotonicity of $V^{\pi_t}$]
\label{prop:V_derivative_identity_ct}
Under Assumption~\ref{ass:ct-standing} and the smooth-flow regularity convention, for every $t\ge 0$,
\begin{equation}\label{eq:V_derivative_resolvent}
(I-\gamma P^{\pi_t})\,\frac{\dd}{\dd t}V^{\pi_t}
=
g_t,
\qquad
g_t(s)
:=
\tau^2 \mathcal I\big(\pi_t(\cdot\mid s)\|p_s^{\pi_t}\big)\ge0,
\end{equation}
where $\mathcal I\big(\pi_t(\cdot\mid s)\|p_s^{\pi_t}\big)$ is the relative Fisher information. Moreover,
\begin{equation}\label{eq:dotV_ge_g}
\frac{\dd}{\dd t}V^{\pi_t}(s)\ge g_t(s)\ge0
\qquad \forall s.
\end{equation}
\end{lemma}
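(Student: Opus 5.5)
We differentiate the soft Bellman identity \eqref{eq:bellman_Vpi} in time and separate the contribution of the moving policy from that of the moving $Q$-function. Write $\dot V_t:=\frac{\dd}{\dd t}V^{\pi_t}$ and $\dot\pi_t:=\partial_t\pi_t$. Differentiating
$V^{\pi_t}(s)=\int(Q^{\pi_t}(s,a)-\tau\log\pi_t(a\mid s))\pi_t(a\mid s)\dd a$
under the integral (justified by the smooth-flow convention) produces three terms. The first is $\int \partial_t Q^{\pi_t}(s,a)\,\pi_t\,\dd a$. The second is $\int (Q^{\pi_t}-\tau\log\pi_t)\dot\pi_t\,\dd a$. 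The third is $-\tau\int \dot\pi_t\,\dd a$, which vanishes because mass is conserved. By \eqref{eq:Qpi_def}, $\partial_t Q^{\pi_t}(s,a)=\gamma\int \dot V_t(s')P(\dd s'\mid s,a)$, so the first term equals $\gamma (P^{\pi_t}\dot V_t)(s)$. This is exactly the Bellman feedback term that produces the resolvent.

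For the second term, insert the flow \eqref{eq:FP_actor}, namely $\dot\pi_t=-\nabla_a\cdot(\pi_t\nabla_a\phi_t)$ with $\phi_t:=Q^{\pi_t}-\tau\log\pi_t$. Integrating by parts, with boundary terms vanishing under the decay convention, gives $\int\phi_t\dot\pi_t\,\dd a=\int\|\nabla_a\phi_t\|^2\pi_t\,\dd a$. Since $\tau\nabla_a\log p_s^{\pi_t}=\nabla_a Q^{\pi_t}$ (the normalizer of the Gibbs density is $a$-independent), we have $\nabla_a\phi_t=-\tau\nabla_a\log(\pi_t/p_s^{\pi_t})$. Hence this term equals $\tau^2\mathcal I(\pi_t(\cdot\mid s)\|p_s^{\pi_t})=g_t(s)$. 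Combining the pieces yields $\dot V_t=\gamma P^{\pi_t}\dot V_t+g_t$, which is \eqref{eq:V_derivative_resolvent}.

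For \eqref{eq:dotV_ge_g}, we invert the resolvent. Provided $\dot V_t$ is bounded, iteration gives $\dot V_t=\sum_{n\ge0}\gamma^n(P^{\pi_t})^n g_t$, because $P^{\pi_t}$ is a Markov kernel and $(\gamma P^{\pi_t})^N\dot V_t\to0$ in sup-norm. Each summand is nonnegative since $P^{\pi_t}$ preserves positivity, and the $n=0$ term is $g_t$. Therefore $\dot V_t\ge g_t\ge0$. If $g_t$ is not bounded, we would instead use the Neumann series in the extended sense for nonnegative functions: we show that any locally bounded solution equals the monotone limit of $\sum_{n\le N}\gamma^n(P^{\pi_t})^n g_t+\gamma^{N+1}(P^{\pi_t})^{N+1}\dot V_t$, and that the remainder vanishes.

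I expect the main obstacle to be justifying this inversion: we need $\dot V_t$ to be a bounded measurable function, or at least to satisfy a growth condition making $(\gamma P^{\pi_t})^N\dot V_t\to0$, so that the resolvent identity characterizes $\dot V_t$ uniquely. I would obtain this from the smooth-flow convention, which gives differentiability of $V^{\pi_t}$ uniformly in $s$, together with the contraction property of $\T^{\pi_t}$. The integration by parts itself is routine under that convention.
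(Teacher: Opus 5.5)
Your proposal is correct and follows essentially the same route as the paper: you differentiate the soft Bellman identity, identify the $\partial_t Q^{\pi_t}$ contribution as $\gamma P^{\pi_t}\dot V_t$, convert the density-variation term into $\tau^2\mathcal I(\pi_t(\cdot\mid s)\|p_s^{\pi_t})$ by integration by parts and the Gibbs score identity, and invert $I-\gamma P^{\pi_t}$ through its positivity-preserving Neumann series. The differences are cosmetic: you integrate by parts against $\phi_t=Q^{\pi_t}-\tau\log\pi_t$ and pass to the Gibbs score afterwards, whereas the paper first rewrites the term as $-\tau\int\log(\pi_t/p_s^{\pi_t})\,\partial_t\pi_t\,\dd a$; your remark that the inversion needs boundedness of $\dot V_t$ (in fact a lower bound suffices for \eqref{eq:dotV_ge_g}) makes explicit a point the paper leaves to the smooth-flow convention.
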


Lemma~\ref{prop:V_derivative_identity_ct} shows that the value derivative satisfies
\[
\frac{\dd}{\dd t}V^{\pi_t}
=
(I-\gamma P^{\pi_t})^{-1}g_t
\]
and is therefore pointwise nonnegative along the flow \eqref{eq:FP_actor}.
Note that this identity can be viewed as the statewise resolvent form of the entropy-regularized performance-difference lemma: integrating it against an initial distribution recovers the usual occupancy-weighted performance-difference formula, whereas keeping it pointwise is what enables the subsequent sup-norm contraction argument in \eqref{eq:delta_contraction} below without introducing state-distribution mismatch coefficients.
Based on Lemma~\ref{prop:V_derivative_identity_ct}, we derive uniform-in-time bounds on the value function, which further imply uniform bounds on the action-value Q-function.

\begin{lemma}[Uniform bounds on $V^{\pi_t}$ and $Q^{\pi_t}$]
\label{lem:uniform_bounds_ct}
Under Assumption~\ref{ass:ct-standing} and the smooth-flow regularity convention above, for all $t\ge0$,
\begin{equation}\label{eq:V_upper_bound}
\sup_{s\in\cS}V^{\pi_t}(s)
\le
\frac{R_{\max}+\tau\log Z_\beta}{1-\gamma}.
\end{equation}
Moreover,
\begin{equation}\label{eq:V0_lower_bound}
\inf_{s\in\cS}V^{\pi_t}(s)
\ge
\inf_{s\in\cS}V^{\pi_0}(s)
\ge
\frac{-R_{\max}+\tau\log Z_\beta-\tau K_0}{1-\gamma}.
\end{equation}
Consequently, for all $t\ge0$, $s\in\cS$, and $a\in\cA$,
\begin{equation}\label{eq:Q_pointwise_bounds}
-R_{\max}-\frac{\beta}{2}\|a\|^2+\gamma\inf_{x\in\cS}V^{\pi_0}(x)
\le
Q^{\pi_t}(s,a)
\le
R_{\max}-\frac{\beta}{2}\|a\|^2+\gamma\sup_{x\in\cS}V^{\pi_t}(x).
\end{equation}
\end{lemma}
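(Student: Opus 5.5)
The upper bound \eqref{eq:V_upper_bound} is a comparison with the optimal value, and I will obtain it by rewriting the per-step reward in Gaussian-reference form. For any density $\mu$ on $\R^d$, the definition of $\rho_\beta$ in \eqref{eq:rho_beta_def} gives the identity
\[
\int\Bigl(-\tfrac{\beta}{2}\|a\|^2-\tau\log\mu(a)\Bigr)\mu(a)\,\dd a
=\tau\log Z_\beta-\tau\KL(\mu\|\rho_\beta),
\]
since $\log\rho_\beta(a)=-\log Z_\beta-\frac{\beta}{2\tau}\|a\|^2$. Inserting this into \eqref{eq:bellman_Vpi}, for any admissible $\pi$ and any $s$,
\[
V^\pi(s)=\int r(s,a)\pi(a\mid s)\dd a+\tau\log Z_\beta-\tau\KL(\pi(\cdot\mid s)\|\rho_\beta)+\gamma (P^\pi V^\pi)(s).
\]
Bounding $r\le R_{\max}$ and dropping the nonpositive $-\tau\KL$ term yields $\sup V^\pi\le R_{\max}+\tau\log Z_\beta+\gamma\sup V^\pi$. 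This rearranges to the claim, provided $\sup V^\pi<\infty$. The proviso holds because $V^{\pi_t}$ is the bounded fixed point of $\T^{\pi_t}$ under the regularity convention. Alternatively, one can unroll the series \eqref{eq:Vpi_def} directly, which avoids the proviso. The bound is therefore valid for every $\pi$, not only along the flow, and in particular for all $t$.

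For the lower bound \eqref{eq:V0_lower_bound}, the first inequality comes from Lemma~\ref{prop:V_derivative_identity_ct}. By \eqref{eq:dotV_ge_g}, $\frac{\dd}{\dd t}V^{\pi_t}(s)\ge0$ for every $s$. Local absolute continuity in $t$ then gives $V^{\pi_t}(s)\ge V^{\pi_0}(s)$ pointwise, and taking infima gives the first inequality. For the second, I use the same identity at $\pi=\pi_0$, now with $r\ge -R_{\max}$ and $\KL(\pi_0(\cdot\mid s)\|\rho_\beta)\le K_0$ from Assumption~\ref{ass:ct-standing}(ii). This gives $\inf V^{\pi_0}\ge -R_{\max}+\tau\log Z_\beta-\tau K_0+\gamma\inf V^{\pi_0}$, and rearranging gives the bound, again provided the infimum is finite. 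Finiteness can be justified by unrolling: the expectation of each step's reward-plus-entropy term is bounded below uniformly in $s$ by $-R_{\max}+\tau\log Z_\beta-\tau K_0$, and summing the geometric series gives the bound directly.

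For \eqref{eq:Q_pointwise_bounds}, I plug into \eqref{eq:Qpi_def}: $Q^{\pi_t}(s,a)=r(s,a)-\frac{\beta}{2}\|a\|^2+\gamma\int V^{\pi_t}\,\dd P$. The integral of $V^{\pi_t}$ against the probability kernel lies between $\inf V^{\pi_t}$ and $\sup V^{\pi_t}$. Using $\inf V^{\pi_t}\ge\inf V^{\pi_0}$ from the previous step together with $|r|\le R_{\max}$ gives both sides.

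The main obstacle I expect is the integrability bookkeeping rather than any inequality. The Gaussian-reference identity requires $\pi(\cdot\mid s)$ to have finite second moment and entropy, so that the splitting into $r$, KL and $\log Z_\beta$ terms is legitimate. I will supply this from finite KL via Lemma~\ref{lem:gaussian-KL-second-moment} at $t=0$, and from the smooth-flow convention for $t>0$. The other point to handle is exchanging the sum and the expectation when unrolling, which is justified because every term is bounded above or below uniformly in $s$.
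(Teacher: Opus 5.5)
Your proposal is correct and follows essentially the same route as the paper. The Gaussian-reference form of the one-step soft reward (cf.\ \eqref{eq:rbar_admissible_bound} and Lemma~\ref{lem:basic-value-bounds}) gives the upper bound and the $K_0$-dependent lower bound at $\pi_0$; pointwise monotonicity from Lemma~\ref{prop:V_derivative_identity_ct} extends the lower bound to all $t$; and the $Q$ bounds are read off from \eqref{eq:Qpi_def}. The only cosmetic difference is that the paper closes the value bounds by monotone iteration of $\T^\pi$ from a constant function ($\T^\pi U\le U$, so $(\T^\pi)^nU\le U$ and $(\T^\pi)^nU\to V^\pi$) rather than by taking $\sup$/$\inf$ and rearranging; the two are equivalent once $V^\pi$ is known to be the bounded fixed point.
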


Thanks to the bound \eqref{eq:Q_pointwise_bounds} on the Q-function, the Gibbs densities $p_s^{\pi_t}$ are bounded perturbations of Gaussian densities, and thus satisfy log-Sobolev inequalities with uniform constants along the optimization trajectory due to the Holley-Stroock perturbation lemma.

\begin{lemma}[Trajectory-uniform LSI for Gibbs policies]\label{lem:uniform_LSI_ct}
Define
\[
V_{\max}
=
\max\left\{
\frac{R_{\max}+\tau\log Z_\beta}{1-\gamma},
\frac{R_{\max}-\tau\log Z_\beta+\tau K_0}{1-\gamma}
\right\}
\]
and
\[
U_{\max}=R_{\max}+\gamma V_{\max}.
\]
Then for every $t\ge0$ and $s\in\cS$, the Gibbs density $p_s^{\pi_t}$ satisfies the log-Sobolev inequality
\[
\mathcal I(\nu\|p_s^{\pi_t})
\ge
2\alpha\,\KL(\nu\|p_s^{\pi_t})
\qquad
\forall \nu\ll p_s^{\pi_t},
\]
with the uniform constant
\[
\alpha
=
\frac{\beta}{\tau}
\exp\left(-\frac{2U_{\max}}{\tau}\right).
\]
\end{lemma}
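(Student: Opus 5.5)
We write the Gibbs density $p_s^{\pi_t}$ as a bounded perturbation of the Gaussian reference $\rho_\beta$ and apply the Holley--Stroock perturbation lemma.

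\emph{Decomposition.} Define
\[
W_t(s,a):=Q^{\pi_t}(s,a)+\frac{\beta}{2}\|a\|^2 = r(s,a)+\gamma\int_{\cS}V^{\pi_t}(s')P(\dd s'\mid s,a).
\]
Then
\[
p_s^{\pi_t}(a)\propto \exp\left(-\frac{\beta}{2\tau}\|a\|^2\right)\exp\left(\frac{W_t(s,a)}{\tau}\right),
\]
so $p_s^{\pi_t}=\rho_\beta e^{W_t/\tau}/\int\rho_\beta e^{W_t/\tau}$. By Lemma~\ref{lem:uniform_bounds_ct}, for all $t\ge0$ we have $\sup_s V^{\pi_t}\le (R_{\max}+\tau\log Z_\beta)/(1-\gamma)\le V_{\max}$. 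We also have $\inf_s V^{\pi_t}\ge (-R_{\max}+\tau\log Z_\beta-\tau K_0)/(1-\gamma)\ge -V_{\max}$, since the negative of that lower bound is exactly the second entry in the max defining $V_{\max}$. Hence $\|V^{\pi_t}\|_\infty\le V_{\max}$. Since $|r|\le R_{\max}$, this gives $|W_t(s,a)|\le R_{\max}+\gamma V_{\max}=U_{\max}$ uniformly in $t,s,a$. Consequently $\osc_a(W_t(s,\cdot)/\tau)\le 2U_{\max}/\tau$.

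\emph{Base LSI and perturbation.} The Gaussian $\rho_\beta=N(0,(\tau/\beta)I_d)$ is $(\beta/\tau)$-strongly log-concave. By Bakry--Émery it satisfies $\mathcal I(\mu\|\rho_\beta)\ge 2(\beta/\tau)\KL(\mu\|\rho_\beta)$, which is the paper's LSI convention with constant $\beta/\tau$. The Holley--Stroock lemma then says that if $\dd\nu\propto e^{-\psi}\dd\rho$ with $\osc\psi<\infty$ and $\rho$ satisfies an $\alpha_0$-LSI, then $\nu$ satisfies an LSI with constant $\alpha_0 e^{-\osc\psi}$. We apply it with $\psi=-W_t(s,\cdot)/\tau$ and obtain the constant $(\beta/\tau)\exp(-2U_{\max}/\tau)=\alpha$, uniform in $t$ and $s$.

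The main point of care is matching conventions. The Holley--Stroock lemma is usually stated for the entropy form $\mathrm{Ent}_\nu(f^2)\le \frac{2}{\alpha}\int|\nabla f|^2\dd\nu$. We translate it to the KL/relative-Fisher form by taking $f^2=\dd\mu/\dd\nu$: then $\mathcal I(\mu\|\nu)=4\int|\nabla f|^2\dd\nu$ and $\KL(\mu\|\nu)=\mathrm{Ent}_\nu(f^2)$. Under this translation the factor $2\alpha$ in the definition corresponds to the standard normalization, and the perturbation factor $e^{-\osc\psi}$ carries over unchanged. This matters because some statements use $e^{-2\sup|\psi|}$ in place of $e^{-\osc\psi}$; here both equal $e^{-2U_{\max}/\tau}$, so either version yields the stated $\alpha$.

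A final technicality is that the LSI is only claimed for $\nu\ll p_s^{\pi_t}$ for which both sides are defined. We handle this by the usual density argument: prove the inequality for smooth compactly supported $f$, then extend by monotone approximation.
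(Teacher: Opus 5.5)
Your proof is correct and is essentially the paper's argument: $p_s^{\pi_t}$ is a bounded tilt of $\rho_\beta$. The value bounds of Lemma~\ref{lem:uniform_bounds_ct} (with the second entry of $V_{\max}$ absorbing the lower bound) give $\osc \le 2U_{\max}/\tau$, and Holley--Stroock then transfers the Gaussian $(\beta/\tau)$-LSI, exactly as in the proof of Lemma~\ref{lem:uniform_LSI_dt}.

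One small correction: $e^{-\osc\psi}$ and $e^{-2\sup|\psi|}$ are each bounded below by $e^{-2U_{\max}/\tau}$, not equal to it. That lower bound is all the argument needs.
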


We now state the exponential-decay consequence of the preceding identities. 
Let $V^\star$ be the unique fixed point of $\T^\star$, and let $\pi^\star$ denote the optimal Gibbs policy attaining $V^\star$. Since $V^\star$ is the optimal value function and the flow remains in the admissible policy class, $V^\star(s)\ge V^{\pi_t}(s)$ for all $s,t$.
Define the residual
\[
R_t(s)
:=
(\T^\star V^{\pi_t})(s)-V^{\pi_t}(s)\ge0.
\]
By Lemma~\ref{prop:residual_identity_ct},
\begin{equation}\label{eq:residual_as_KL}
R_t(s)
=
\tau \KL\big(\pi_t(\cdot\mid s)\|p_s^{\pi_t}\big).
\end{equation}
Define
\[
W_t(s)
:=
V^\star(s)-V^{\pi_t}(s)\ge0,
\qquad
E(t):=\|W_t\|_\infty.
\]
Since $V^\star=\T^\star V^\star$, $\T^\star$ is monotone, and $\T^\star$ is a
$\gamma$-contraction in $\|\cdot\|_\infty$,
\begin{equation}\label{eq:delta_contraction}
0
\le
V^\star(s)-(\T^\star V^{\pi_t})(s)
\le
\gamma E(t)
\qquad
\forall s\in\cS,\ t\ge0.
\end{equation}

\begin{theorem}[Exponential convergence along the smooth WPG flow]\label{thm:exp_rate}
Under Assumption~\ref{ass:ct-standing} and the smooth-flow regularity convention
above, for all $t\ge0$,
\[
J(\pi^\star)-J(\pi_t)
\le
\|V^\star-V^{\pi_t}\|_\infty
\le
e^{-2\alpha\tau(1-\gamma)t}
\|V^\star-V^{\pi_0}\|_\infty.
\]
\end{theorem}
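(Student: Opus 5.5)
The first inequality is immediate: $J(\pi)=\int V^\pi\,\dd\rho_0$ and $W_t\ge0$, so $J(\pi^\star)-J(\pi_t)=\int W_t\,\dd\rho_0\le\|W_t\|_\infty$. The substance is the exponential decay of $E(t)=\|W_t\|_\infty$. The plan is to derive a pointwise differential inequality for $W_t(s)$ at every state, and then pass to the supremum.

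\emph{Step 1 (dissipation to KL).} By Lemma~\ref{prop:V_derivative_identity_ct}, $\frac{\dd}{\dd t}V^{\pi_t}(s)\ge g_t(s)=\tau^2\mathcal I(\pi_t(\cdot\mid s)\|p_s^{\pi_t})$. Apply Lemma~\ref{lem:uniform_LSI_ct} with $\nu=\pi_t(\cdot\mid s)$ to get $g_t(s)\ge 2\alpha\tau^2\KL(\pi_t(\cdot\mid s)\|p_s^{\pi_t})$. Then \eqref{eq:residual_as_KL} gives
\[
\frac{\dd}{\dd t}V^{\pi_t}(s)\ge 2\alpha\tau R_t(s).
\]

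\emph{Step 2 (residual to global gap).} Write $R_t(s)=W_t(s)-\bigl(V^\star(s)-(\T^\star V^{\pi_t})(s)\bigr)$. By \eqref{eq:delta_contraction}, $R_t(s)\ge W_t(s)-\gamma E(t)$. Therefore
\[
\frac{\dd}{\dd t}W_t(s)\le -2\alpha\tau\bigl(W_t(s)-\gamma E(t)\bigr)\quad\text{for all } s.
\]

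\emph{Step 3 (sup-norm Gr\"onwall).} Here the difficulty is that $E$ is a supremum and may not be differentiable. I would bound the upper Dini derivative. Fix $t$ and $\varepsilon>0$. Consider states with $W_t(s)\ge E(t)-\varepsilon$: for these, the right-hand side above is at most $-2\alpha\tau((1-\gamma)E(t)-\varepsilon)$. For states with $W_t(s)<E(t)-\varepsilon$, the derivative bound still gives $\dot W\le 2\alpha\tau\gamma E$, so over a short time these states cannot overtake the near-maximizers.

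Making this uniform in $s$ requires some uniformity of $\dot W_t(s)$ across states. It seems cleaner to avoid the issue via an integral form. Set $\lambda=2\alpha\tau$ and integrate with the factor $e^{\lambda t}$:
\[
W_t(s)\le e^{-\lambda t}W_0(s)+\lambda\gamma\int_0^t e^{-\lambda(t-u)}E(u)\,\dd u.
\]
Taking the supremum over $s$ gives
\[
E(t)\le e^{-\lambda t}E(0)+\lambda\gamma\int_0^t e^{-\lambda(t-u)}E(u)\,\dd u.
\]
Define $F(t)=e^{\lambda t}E(t)$. Then $F(t)\le F(0)+\lambda\gamma\int_0^t F(u)\,\dd u$, and the integral Gr\"onwall inequality yields $F(t)\le F(0)e^{\lambda\gamma t}$. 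Hence
\[
E(t)\le e^{-\lambda(1-\gamma)t}E(0)=e^{-2\alpha\tau(1-\gamma)t}E(0),
\]
which is the claim. This route requires $E$ to be measurable and locally bounded. Boundedness follows from Lemma~\ref{lem:uniform_bounds_ct}, and the local absolute continuity of $t\mapsto V^{\pi_t}(s)$ comes from the regularity convention.

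The main obstacle is this passage from pointwise inequalities to the sup-norm without differentiability of $E$; the integral-form Gr\"onwall above is the step I would rely on.
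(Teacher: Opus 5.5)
Your proposal is correct and follows the paper's proof essentially step for step. The LSI plus the residual identity gives $\partial_t W_t(s)\le -2\alpha\tau R_t(s)$, Bellman contraction gives $R_t(s)\ge W_t(s)-\gamma E(t)$, and the integrating-factor/Volterra inequality followed by Gr\"onwall applied to $e^{2\alpha\tau t}E(t)$ is exactly the paper's route. On your measurability concern: $E(t)=\sup_s W_t(s)$ is a supremum of functions that are continuous in $t$ under the regularity convention, so it is lower semicontinuous and hence Borel measurable, and the integral form needs no further assumption.
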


\begin{proof}
By Lemma~\ref{prop:V_derivative_identity_ct}, specifically
\eqref{eq:V_derivative_resolvent} and \eqref{eq:dotV_ge_g}, for almost every
$t$,
\[
\partial_t W_t(s)
=
-\frac{\dd}{\dd t}V^{\pi_t}(s)
\le
-g_t(s)
=
-\tau^2 \mathcal I\big(\pi_t(\cdot\mid s)\|p_s^{\pi_t}\big).
\]
By Lemma~\ref{lem:uniform_LSI_ct} and \eqref{eq:residual_as_KL},
\[
\mathcal I\big(\pi_t(\cdot\mid s)\|p_s^{\pi_t}\big)
\ge
2\alpha\,\KL\big(\pi_t(\cdot\mid s)\|p_s^{\pi_t}\big)
=
\frac{2\alpha}{\tau}R_t(s).
\]
Therefore,
\begin{equation}\label{eq:ct_residual_dissipation}
\partial_t W_t(s)
\le
-2\alpha\tau R_t(s)
\end{equation}

Next, using the definition of $W_t$ and $R_t$,
\[
W_t(s)
=
\big(V^\star(s)-(\T^\star V^{\pi_t})(s)\big)+R_t(s).
\]
Hence, by \eqref{eq:delta_contraction},
\begin{equation}\label{eq:ct_residual_lower_pointwise}
R_t(s)
=
W_t(s)-\big(V^\star(s)-(\T^\star V^{\pi_t})(s)\big)
\ge
W_t(s)-\gamma E(t).
\end{equation}
Combining \eqref{eq:ct_residual_dissipation} and
\eqref{eq:ct_residual_lower_pointwise} gives the pointwise differential
inequality
\begin{equation}\label{eq:ct_pointwise_gap_ode}
\partial_t W_t(s)
\le
-2\alpha\tau W_t(s)+2\alpha\tau\gamma E(t),
\qquad
s\in\cS,
\end{equation}
for almost every $t$.

Multiplying \eqref{eq:ct_pointwise_gap_ode} by the integrating factor \(e^{2\alpha\tau t}\) and integrating from \(0\) to \(t\) yields
\[
e^{2\alpha\tau t}W_t(s)-W_0(s)
\le
2\alpha\tau\gamma\int_0^t e^{2\alpha\tau u}E(u)\,\dd u.
\]
Equivalently,
\begin{equation}\label{eq:ct_volterra_pointwise}
W_t(s)
\le
e^{-2\alpha\tau t}W_0(s)
+
2\alpha\tau\gamma\int_0^t e^{-2\alpha\tau(t-u)}E(u)\,\dd u.
\end{equation}
Taking the supremum over $s\in\cS$ in \eqref{eq:ct_volterra_pointwise} gives
\begin{equation}\label{eq:ct_volterra_sup}
E(t)
\le
e^{-2\alpha\tau t}E(0)
+
2\alpha\tau\gamma\int_0^t e^{-2\alpha\tau(t-u)}E(u)\,\dd u.
\end{equation}
Define
\[
H(t):=e^{2\alpha\tau t}E(t).
\]
Multiplying \eqref{eq:ct_volterra_sup} by $e^{2\alpha\tau t}$ gives
\[
H(t)
\le
E(0)+2\alpha\tau\gamma\int_0^t H(u)\,\dd u.
\]
By Gronwall's inequality,
\[
H(t)\le E(0)e^{2\alpha\tau\gamma t}.
\]
Therefore,
\[
E(t)
\le
E(0)e^{-2\alpha\tau(1-\gamma)t}
=
\|V^\star-V^{\pi_0}\|_\infty
\exp\{-2\alpha\tau(1-\gamma)t\}.
\]

Finally, since $\rho_0$ is a probability measure and $W_t\ge0$ pointwise,
\[
J(\pi^\star)-J(\pi_t)
=
\int_{\cS}\big(V^\star(s)-V^{\pi_t}(s)\big)\rho_0(\dd s)
\le
\|V^\star-V^{\pi_t}\|_\infty.
\]
This proves the theorem.
\end{proof}

Theorem~\ref{thm:exp_rate} is a formal continuous-time convergence for smooth solutions of \eqref{eq:FP_actor}. It identifies the rate obtained from the value-resolvent identity, uniform value/$Q$ bounds, the uniform LSI, and the statewise Bellman-residual identity. Its role is explanatory: it isolates the Bellman-residual mechanism and motivates the discrete-time argument.

After the first version of this paper was submitted for review, we became aware of a closely related note by \citet{siska2026convergence}. Their analysis and resulting guarantee are structurally different from ours. In their cost-minimization notation, the argument uses an entropy-sandwich inequality to control the scalar distributional gap $V^{\pi_t}(\rho)-V^{\pi^\star}(\rho)$ and states a bound of the form
\[
V^{\pi_t}(\rho)-V^{\pi^\star}(\rho)
\le
\bar\kappa
\left(
\int_{\cS}
\bigl(V^{\pi_0}(s)-V^{\pi^\star}(s)\bigr)
\,d_\rho^{\pi^\star}(ds)
\right)
e^{-2\kappa\alpha\tau(1-\gamma) t},
\]
where $
\bar\kappa
:=
\sup_{s\in\cS}
\frac{d\rho}{d d_\rho^{\pi^\star}}(s)
$,
$
\kappa
:=
\inf_{s\in\cS}
\frac{d\rho}{d d_\rho^{\pi^\star}}(s)\le 1.
$
Thus their estimate is distribution-dependent and requires coverage-type density-ratio control, in particular $\kappa>0$, to obtain a positive exponential rate. By contrast, Theorem~\ref{thm:exp_rate} proves a state-uniform bound and therefore controls $J(\pi^\star)-J(\pi_t)$ for any initial state distribution. 
The proof mechanisms are also different: rather than passing through an occupancy-weighted entropy sandwich, we use the exact statewise Bellman-residual identity and Bellman contraction to compare the residual directly with the $\|\cdot\|_\infty$-optimality gap. This removes the density-ratio constants $\kappa$ from the exponential convergence rate, although with a different initial gap. Finally, both their and our analyses involve formal calculations; the main rigorous guarantee in the present paper is the discrete-time Langevin analysis in Section~\ref{sec:discrete}.

\section{Discrete-Time Global Convergence Analysis}\label{sec:discrete}

This section proves the non-asymptotic convergence guarantee for the explicit Langevin update \eqref{eq:ULA_update_dt}. The continuous-time section isolates the Bellman residual/resolvent mechanism. Apart from their counterparts in discrete time, an additional task is to control the finite-step Langevin discretization and the policy-dependent drift uniformly along the iterates.

\subsection{Assumptions}\label{sec:discrete-assumptions}

The assumptions below are the quantitative inputs for the finite-step analysis. 
To apply the one step drift interpolation argument, we need uniform control of the drift Lipschitz constants, action moments, drift second moments, finite KL quantities, and the LSI constants of the Gibbs family. Since
\[
b_k(s,a):=\nabla_a Q^{\pi_k}(s,a)=\nabla_a r(s,a)-\beta a+
\gamma\int_{\cS}V^{\pi_k}(s')\nabla_a p(s'\mid s,a)\lambda(\dd s'),
\]
these constants cannot be read off from the Wasserstein gradient of a static functional; they must be derived from the Bellman recursion and propagated along the WPG iterates. Assumptions~\ref{ass:standing}-\ref{ass:action-regularity}, together with the a priori bounds that will be proved in Proposition~\ref{prop:WPG_uniform_stability}, provide this uniform control.

\begin{assumption}[Bounded rewards and initialization]\label{ass:standing}
We assume:
\begin{enumerate}[label=(\roman*)]
\item $|r(s,a)|\le R_{\max}$ for all $(s,a)\in\cS\times\R^d$.
\item $\sup_{s\in\cS}\KL(\pi_0(\cdot\mid s)\|\rho_\beta)\le K_0$.
\end{enumerate}
For constants used later, set
\[
M_0:=\sup_{s\in\cS}\mathbb E\|A_0^s\|^2,
\qquad A_0^s\sim\pi_0(\cdot\mid s).
\]
This quantity is finite by Lemma~\ref{lem:gaussian-KL-second-moment} applied uniformly in $s$. In particular, applying that lemma with $c=\beta/(4\tau)$ and using Assumption~\ref{ass:standing}\textup{(ii)} gives
\begin{equation}\label{eq:M0_explicit_bound}
M_0
\le
\frac{4\tau}{\beta}
\left(K_0+\log\int_{\R^d}
\exp\left(\frac{\beta}{4\tau}\|a\|^2\right)\rho_\beta(a)\,da\right)
=
\frac{4\tau}{\beta}\left(K_0+\frac d2\log 2\right).
\end{equation}
\end{assumption}

\begin{assumption}[Action regularity]\label{ass:action-regularity}
There is a $\sigma$-finite reference measure $\lambda$ on $\cS$ such that $P(\dd s'\mid s,a)=p(s'\mid s,a)\lambda(\dd s')$. There are finite constants $L_r,L_p,G_r,G_p$ such that:
\begin{enumerate}[label=(\roman*)]
\item For every $s\in\cS$, the map $a\mapsto r(s,a)$ is continuously differentiable and
\[
\|\nabla_a r(s,a)\|\le G_r,
\qquad
\|\nabla_a r(s,a)-\nabla_a r(s,\bar a)\|\le L_r\|a-\bar a\|.
\]
\item For every $s\in\cS$ and $\lambda$-a.e. $s'\in\cS$, the map $a\mapsto p(s'\mid s,a)$ is continuously differentiable, and
\[
\begin{aligned}
\int_{\cS}\|\nabla_a p(s'\mid s,a)\|\lambda(\dd s')&\le G_p,\\
\int_{\cS}\|\nabla_a p(s'\mid s,a)-\nabla_a p(s'\mid s,\bar a)\|\lambda(\dd s')
&\le L_p\|a-\bar a\|.
\end{aligned}
\]
\end{enumerate}
\end{assumption}

A concrete example satisfying the transition part of Assumption~\ref{ass:action-regularity} is the following additive-noise model. Let $\cS=\R^m$, let $\lambda$ be Lebesgue measure, and suppose
\[
S'=F(s,a)+\sigma\zeta,
\]
where $\sigma\in\R^{m\times m}$ is invertible and $\zeta$ has a smooth density $q$. Write
\[
q_\sigma(x):=|\det\sigma|^{-1}q(\sigma^{-1}x)
\]
for the density of $\sigma\zeta$, and assume
\[
\|\nabla q_\sigma\|_{L^1}<\infty,
\qquad
\|D^2q_\sigma\|_{L^1}:=
\int_{\R^m}\|D^2q_\sigma(x)\|_{\mathrm{op}}\,\dd x<\infty .
\]
Assume further that, uniformly in $s$,
\[
\|D_aF(s,a)\|_{\mathrm{op}}\le M_1,
\qquad
\|D_aF(s,a)-D_aF(s,\bar a)\|_{\mathrm{op}}\le M_2\|a-\bar a\|.
\]
Then
\[
p(y\mid s,a)=q_\sigma(y-F(s,a))
\]
satisfies Assumption~\ref{ass:action-regularity}\textup{(ii)}. Indeed,
\[
\nabla_a p(y\mid s,a)
=
-D_aF(s,a)^\top \nabla q_\sigma(y-F(s,a)),
\]
so one may take
\[
G_p=M_1\|\nabla q_\sigma\|_{L^1},
\qquad
L_p=M_2\|\nabla q_\sigma\|_{L^1}+M_1^2\|D^2q_\sigma\|_{L^1}.
\]
Nondegenerate Gaussian noise is a special case, since its first derivative and Hessian are integrable. Together with the reward condition in Assumption~\ref{ass:action-regularity}\textup{(i)}, this gives a concrete class of controlled transition models covered by the theorem. Deterministic or degenerate-noise dynamics are not covered by this example.

With these assumptions in place, write
\[
W_k:=V^\star-V^{\pi_k},\qquad E_k:=\|W_k\|_\infty,
\]
and define the current-policy Langevin drift
\begin{equation}\label{eq:bk_pk_def_dt}
b_k(s,a):=\nabla_a Q^{\pi_k}(s,a).
\end{equation}
The current-policy Gibbs stationarity condition is
\begin{equation}\label{eq:stationarity_pk_dt}
\tau\nabla_a\log p_s^{\pi_k}(a)=\nabla_a Q^{\pi_k}(s,a)=b_k(s,a).
\end{equation}

\subsection{Proof sketch and main theorem}\label{sec:proof-sketch-main}

\subsubsection{Bellman residual-to-optimality gap mechanism and uniform LSI}\label{sec:proof-lsi}

We first record the discrete counterparts of the residual and resolvent mechanisms from Section~\ref{sec:continuous}. They show that the KL to the current-policy Gibbs law is exactly the Bellman residual, that this residual controls the worst-state value gap, and that decreasing the same KL produces value improvement after applying the Bellman resolvent.

\begin{lemma}[Bellman contraction of the residual]\label{lem:residual_lower_epsmax_dt}
Fix $k\ge 0$. Let $R_k:=(\T^\star V^{\pi_k})-V^{\pi_k}$.  Then, for every $s\in\cS$,
\[
R_k(s)\ge W_k(s)-\gamma E_k.
\]
\end{lemma}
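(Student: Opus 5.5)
The plan is to split the value gap at state $s$ through the intermediate quantity $(\T^\star V^{\pi_k})(s)$, exactly as in the continuous-time argument leading to \eqref{eq:ct_residual_lower_pointwise}. Write
\[
W_k(s)=V^\star(s)-V^{\pi_k}(s)=\bigl(V^\star(s)-(\T^\star V^{\pi_k})(s)\bigr)+R_k(s).
\]
It then suffices to show the one-sided bound $V^\star(s)-(\T^\star V^{\pi_k})(s)\le\gamma E_k$ for every $s$, which is the discrete analogue of \eqref{eq:delta_contraction}. Rearranging gives $R_k(s)\ge W_k(s)-\gamma E_k$.

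For the one-sided bound, use $V^\star=\T^\star V^\star$ together with the $\gamma$-contraction of $\T^\star$ in $\|\cdot\|_\infty$:
\[
V^\star(s)-(\T^\star V^{\pi_k})(s)\le\|\T^\star V^\star-\T^\star V^{\pi_k}\|_\infty\le\gamma\|V^\star-V^{\pi_k}\|_\infty=\gamma E_k.
\]
Only the upper bound is needed here. Neither the nonnegativity of $W_k$ nor monotonicity of $\T^\star$ enters, although both hold.

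The main obstacle is checking that the contraction statement applies to $V^{\pi_k}$. That is, $V^{\pi_k}$ must be a bounded measurable function, so that $\T^\star V^{\pi_k}$ is finite and given by the log-sum-exp formula \eqref{eq:Tstar_logsumexp}. I would take this from the admissibility $\pi_k\in\Pi$, since $\sup_s\KL(\pi_k(\cdot\mid s)\|\rho_\beta)<\infty$, combined with bounded rewards. The key identity is
\[
\E_{\pi(\cdot\mid s)}\bigl[-\tfrac{\beta}{2}\|a\|^2-\tau\log\pi(a\mid s)\bigr]=\tau\log Z_\beta-\tau\KL(\pi(\cdot\mid s)\|\rho_\beta).
\]
It shows that the per-step regularized reward is bounded uniformly in $s$, so $V^{\pi_k}$ is bounded by a geometric series; this is the same computation behind Lemma~\ref{lem:uniform_bounds_ct}. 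If uniform-in-$k$ KL control is needed, I would defer to the a priori bounds of Proposition~\ref{prop:WPG_uniform_stability}. For a single fixed $k$, however, finiteness suffices.

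For completeness I would verify the contraction directly. For bounded $V,V'$ we have $|Q_V-Q_{V'}|\le\gamma\|V-V'\|_\infty$ pointwise, and the map $Q\mapsto\tau\log\int e^{Q/\tau}$ is monotone and commutes with adding constants. This yields the estimate.
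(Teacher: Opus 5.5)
Your proposal is correct and matches the paper's proof: it adds and subtracts $(\T^\star V^{\pi_k})(s)$ to write $W_k(s)=\bigl(V^\star-\T^\star V^{\pi_k}\bigr)(s)+R_k(s)$. It then bounds the first term by $\gamma E_k$ using $V^\star=\T^\star V^\star$ and the $\gamma$-contraction of $\T^\star$. Your extra check that $V^{\pi_k}$ is bounded for admissible $\pi_k$ is consistent with the paper's well-posedness lemma, and the paper simply takes it as given.
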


\begin{lemma}[Discrete Bellman resolvent]\label{lem:V_increment_resolvent_dt}
Fix $k\ge 0$. Suppose $\pi_k$ and the WPG update $\pi_{k+1}$ are admissible, with measurability covered by the convention in Section~\ref{sec:model}, and the KL terms below are finite. Then
\begin{equation}\label{eq:V_increment_resolvent_dt}
(I-\gamma P^{\pi_{k+1}})\big(V^{\pi_{k+1}}-V^{\pi_k}\big)=g_k,
\qquad
 g_k(s):=(\T^{\pi_{k+1}}V^{\pi_k})(s)-V^{\pi_k}(s),
\end{equation}
and, for every $s\in\cS$,
\begin{equation}\label{eq:gk_as_KL_dt}
g_k(s)=
\tau\Big(
\KL(\pi_k(\cdot\mid s)\|p_s^{\pi_k})-
\KL(\pi_{k+1}(\cdot\mid s)\|p_s^{\pi_k})
\Big).
\end{equation}
\end{lemma}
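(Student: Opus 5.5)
The plan has two parts: first the resolvent identity \eqref{eq:V_increment_resolvent_dt}, which is the Bellman fixed-point equations rearranged, and then the KL representation \eqref{eq:gk_as_KL_dt}, which is a Gibbs variational identity.

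\emph{Resolvent identity.} Write $V^{\pi_{k+1}}=\T^{\pi_{k+1}}V^{\pi_{k+1}}$ and subtract $\T^{\pi_{k+1}}V^{\pi_k}$. Since $\T^{\pi_{k+1}}$ is affine in $V$ with linear part $\gamma P^{\pi_{k+1}}$, we get
\[
V^{\pi_{k+1}}-\T^{\pi_{k+1}}V^{\pi_k}=\gamma P^{\pi_{k+1}}\big(V^{\pi_{k+1}}-V^{\pi_k}\big).
\]
Both the reward term and the $-\tau\log\pi_{k+1}$ term cancel exactly, because they involve the same policy $\pi_{k+1}$. Subtracting $V^{\pi_k}$ from both sides and rearranging gives $(I-\gamma P^{\pi_{k+1}})(V^{\pi_{k+1}}-V^{\pi_k})=\T^{\pi_{k+1}}V^{\pi_k}-V^{\pi_k}=g_k$. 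The only care needed is that every quantity is finite. Admissibility gives $\sup_s\KL(\pi_{k+1}(\cdot\mid s)\|\rho_\beta)<\infty$. Together with Lemma~\ref{lem:gaussian-KL-second-moment}, this controls the entropy and the quadratic penalty uniformly in $s$, so $\T^{\pi_{k+1}}$ maps bounded functions to bounded functions. The value functions are bounded fixed points under the measurability convention, so the subtraction is legitimate.

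\emph{KL representation.} For fixed $s$, write $\mu=\pi_{k+1}(\cdot\mid s)$. By definition, $(\T^{\pi_{k+1}}V^{\pi_k})(s)=\int (Q^{\pi_k}(s,a)-\tau\log\mu(a))\mu(a)\dd a$, since $Q_{V^{\pi_k}}=Q^{\pi_k}$. Insert
\[
Q^{\pi_k}(s,a)=\tau\log p_s^{\pi_k}(a)+(\T^\star V^{\pi_k})(s),
\]
which follows from \eqref{eq:Tstar_logsumexp} and \eqref{eq:gibbs_ref_density}. This yields
\[
(\T^{\pi_{k+1}}V^{\pi_k})(s)=(\T^\star V^{\pi_k})(s)-\tau\KL(\mu\|p_s^{\pi_k}).
\]
The same computation with $\pi_k$ in place of $\pi_{k+1}$, using $V^{\pi_k}=\T^{\pi_k}V^{\pi_k}$ (equivalently, Lemma~\ref{prop:residual_identity_ct}), gives $V^{\pi_k}(s)=(\T^\star V^{\pi_k})(s)-\tau\KL(\pi_k(\cdot\mid s)\|p_s^{\pi_k})$. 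Subtracting the two expressions gives \eqref{eq:gk_as_KL_dt}.

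\emph{Main obstacle.} The delicate point is splitting the integral $\int(\tau\log p-\tau\log\mu)\,\mu$ into separate finite pieces. This requires $\log p_s^{\pi_k}$ to be $\mu$-integrable. It is: $Q^{\pi_k}$ is bounded above and below by $\pm\mathrm{const}-\frac{\beta}{2}\|a\|^2$ (as in \eqref{eq:Q_pointwise_bounds}, using boundedness of $V^{\pi_k}$), and the normalizer $\T^\star V^{\pi_k}(s)$ is finite. Hence $|\log p_s^{\pi_k}(a)|\le C+\frac{\beta}{2\tau}\|a\|^2$, which is integrable against $\mu$ because $\mu$ has a finite second moment. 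The entropy of $\mu$ is finite by the finite KL to $\rho_\beta$. With both pieces finite, the split is valid and the identity holds pointwise in $s$.
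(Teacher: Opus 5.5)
Your proposal is correct and follows the paper's proof essentially step for step: the resolvent identity comes from the affine form $\T^{\pi}V=\bar r^{\pi}+\gamma P^{\pi}V$ applied to the fixed-point equation for $\pi_{k+1}$, and the KL representation comes from the Gibbs identity applied to both $\pi_{k+1}(\cdot\mid s)$ and $\pi_k(\cdot\mid s)$ with the common normalizer $\tau\log Z=(\T^\star V^{\pi_k})(s)$. Your integrability argument, bounding $|\log p_s^{\pi_k}(a)|\le C+\frac{\beta}{2\tau}\|a\|^2$ and using the finite second moment from admissibility, justifies splitting the integrals, a point the paper leaves implicit; it also shows that the finite-KL hypothesis already follows from admissibility.
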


As a consequence of these two lemmas, once $p_s^{\pi_k}$ satisfies an LSI with constant $\bar\alpha$, the Bellman residual yields the following P\L{}-type residual-to-gap inequality at an $\varepsilon$-maximizer of $W_k$:
\begin{equation}\label{eq:bellman_PL_main}
\mathcal I(\pi_k(\cdot\mid s)\|p_s^{\pi_k})
\ge 2\bar\alpha\KL(\pi_k(\cdot\mid s)\|p_s^{\pi_k})
=\frac{2\bar\alpha}{\tau}R_k(s)
\ge \frac{2\bar\alpha}{\tau}\big((1-\gamma)E_k-\varepsilon\big).
\end{equation}
This is a P\L{} inequality for the Bellman gap, not a consequence of flat convexity of $J$.

The LSI used in \eqref{eq:bellman_PL_main} follows from the same Gaussian perturbation mechanism as in continuous time, but it must be made uniform along the discrete trajectory. The a priori bounds prove a uniform value bound; once $\|V^{\pi_k}\|_\infty\le V_{\max}$,
\begin{equation}\label{eq:pk_bounded_tilt_main}
p_s^{\pi_k}(a)=\frac{e^{\psi_{k,s}(a)}\rho_\beta(a)}{\int e^{\psi_{k,s}(\tilde a)}\rho_\beta(\tilde a)\dd\tilde a},
\qquad
\psi_{k,s}(a):=\frac{r(s,a)+\gamma\int V^{\pi_k}(s')P(\dd s'\mid s,a)}{\tau}.
\end{equation}
The perturbation $\psi_{k,s}$ may be nonconvex, but bounded rewards and the value bound give $\|\psi_{k,s}\|_\infty\le (R_{\max}+\gamma V_{\max})/\tau$ and hence $\osc(\psi_{k,s})\le 2(R_{\max}+\gamma V_{\max})/\tau$. The Holley-Stroock bounded-perturbation principle \citep{holley_stroock_1987} then transfers the Gaussian LSI of $\rho_\beta$ to every $p_s^{\pi_k}$ uniformly.

\begin{lemma}[Uniform LSI]\label{lem:uniform_LSI_dt}
Fix $k\ge 0$ and suppose $\|V^{\pi_k}\|_\infty\le V_{\max}$. Then, for every $s\in\cS$, the current-policy Gibbs density $p_s^{\pi_k}$ satisfies
\[
\mathcal I(\mu\|p_s^{\pi_k})\ge 2\alpha(V_{\max})\KL(\mu\|p_s^{\pi_k})
\qquad\forall\mu\ll p_s^{\pi_k},
\]
where
\begin{equation}\label{eq:alpha_of_Vmax_dt}
\alpha(V_{\max})=\frac{\beta}{\tau}\exp\left(-\frac{2(R_{\max}+\gamma V_{\max})}{\tau}\right).
\end{equation}
\end{lemma}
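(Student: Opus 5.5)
The plan is to write $p_s^{\pi_k}$ as a bounded tilt of the Gaussian reference $\rho_\beta$ and apply the Holley--Stroock perturbation lemma. The argument has three steps.

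\textbf{Step 1 (Gaussian LSI).} The reference $\rho_\beta$ is the Gaussian $N(0,(\tau/\beta)I_d)$, which is $(\beta/\tau)$-strongly log-concave. By the Bakry--\'Emery criterion it satisfies the LSI
\[
\mathcal I(\mu\|\rho_\beta)\ge 2\frac{\beta}{\tau}\KL(\mu\|\rho_\beta),
\]
in the normalization of the paper's LSI definition.

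\textbf{Step 2 (bounded tilt).} Using $\tilde r=r-\frac\beta2\|a\|^2$, I will rewrite
\[
\frac{Q^{\pi_k}(s,a)}{\tau}=-\frac{\beta}{2\tau}\|a\|^2+\psi_{k,s}(a),
\]
with $\psi_{k,s}$ as in \eqref{eq:pk_bounded_tilt_main}. This gives $p_s^{\pi_k}\propto e^{\psi_{k,s}}\rho_\beta$. The normalizing constant is finite and positive because $\psi_{k,s}$ is bounded, which is shown next. Since $|r|\le R_{\max}$ by Assumption~\ref{ass:standing}(i) and $\|V^{\pi_k}\|_\infty\le V_{\max}$ by hypothesis, and since $P(\cdot\mid s,a)$ is a probability measure,
\[
|\psi_{k,s}(a)|\le \frac{R_{\max}+\gamma V_{\max}}{\tau}
\]
for every $a$. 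Hence $\osc(\psi_{k,s})=\sup\psi_{k,s}-\inf\psi_{k,s}\le 2(R_{\max}+\gamma V_{\max})/\tau$, uniformly in $s$ and $k$. Measurability is covered by the standing convention.

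\textbf{Step 3 (Holley--Stroock).} If $\nu$ satisfies an LSI with constant $\alpha_0$ and $\tilde\nu\propto e^{\psi}\nu$ with $\psi$ bounded, then $\tilde\nu$ satisfies an LSI with constant $\alpha_0 e^{-\osc(\psi)}$. Taking $\nu=\rho_\beta$ and $\alpha_0=\beta/\tau$ yields exactly
\[
\alpha(V_{\max})=\frac{\beta}{\tau}\exp\left(-\frac{2(R_{\max}+\gamma V_{\max})}{\tau}\right),
\]
which is \eqref{eq:alpha_of_Vmax_dt}.

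\textbf{Main obstacle.} The only delicate point is checking that the Holley--Stroock lemma applies in the Fisher-information form $\mathcal I\ge 2\alpha\KL$ required here. I would derive this form from the classical entropy statement
\[
\mathrm{Ent}_\nu(f^2)\le \frac{2}{\alpha}\int|\nabla f|^2\,d\nu,
\]
applied with $f=\sqrt{d\mu/d\tilde\nu}$. For the perturbation step itself, I would use the variational characterization
\[
\mathrm{Ent}_{\tilde\nu}(g)=\inf_{c>0}\int\bigl(g\log(g/c)-g+c\bigr)\,d\tilde\nu,
\]
whose integrand is nonnegative. Changing measure from $\tilde\nu$ to $\nu$ then costs a factor of at most $e^{\sup\psi-\inf\psi}$, split across the entropy side and the Dirichlet side. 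For $\mu$ with infinite Fisher information the inequality is trivial. Otherwise a standard truncation/approximation argument extends it from smooth $f$ to the general case, which is what ``well-defined'' in the definition of the LSI permits.
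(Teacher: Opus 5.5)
Your proposal is correct and follows the paper's proof: both write $p_s^{\pi_k}$ as an $e^{\psi_{k,s}}$-tilt of $\rho_\beta$, use the Gaussian LSI with constant $\beta/\tau$, bound $\osc(\psi_{k,s})\le 2(R_{\max}+\gamma V_{\max})/\tau$, and apply Holley--Stroock. Your extra steps, the change-of-measure sketch of Holley--Stroock and the passage from the entropy form to the form $\mathcal I\ge 2\alpha\KL$ via $f=\sqrt{d\mu/dp_s^{\pi_k}}$, only fill in details that the paper cites as standard (Lemma~\ref{lem:Holley-stroock}).
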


\subsubsection{Drift regularity and the discretization error}\label{sec:proof-regularity}

We now control the finite-step error introduced by the  Langevin discretization. As explained in Section~\ref{sec:discrete-assumptions}, the drift Langevin interpolation requires uniform action-Lipschitz and dissipativity estimates for the policy-dependent drift. The following lemma converts the action-regularity assumptions and a value bound into those estimates.

\begin{lemma}[Bellman drift regularity]\label{lem:bellman-drift-regularity}
Assume Assumption~\ref{ass:action-regularity}. Fix $k\ge0$ and suppose $\|V^{\pi_k}\|_\infty\le V_{\max}<\infty$. Then, for every $s\in\cS$,
\begin{equation}\label{eq:dissipative_decomp_dt}
 b_k(s,a)=-\beta a+h_k(s,a),
\end{equation}
with
\begin{equation}\label{eq:hk_bound_dt}
 \sup_{s,a}\|h_k(s,a)\|\le G(V_{\max}):=G_r+\gamma G_pV_{\max}.
\end{equation}
Moreover, for all $a,\bar a\in\R^d$,
\[
\|b_k(s,a)-b_k(s,\bar a)\|
\le L_b(V_{\max})\|a-\bar a\|,
\qquad
L_b(V_{\max}):=\beta+L_r+\gamma L_pV_{\max}.
\]
\end{lemma}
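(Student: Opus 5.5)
The plan is to compute the drift explicitly from the definition of $Q^{\pi_k}$ and then bound each piece using Assumption~\ref{ass:action-regularity} and the value bound. First I will justify differentiating under the integral. By \eqref{eq:Qpi_def} and $P(\dd s'\mid s,a)=p(s'\mid s,a)\lambda(\dd s')$,
\[
Q^{\pi_k}(s,a)=r(s,a)-\frac{\beta}{2}\|a\|^2+\gamma\int_{\cS}V^{\pi_k}(s')p(s'\mid s,a)\lambda(\dd s').
\]
Since $|V^{\pi_k}|\le V_{\max}$ and $a\mapsto p(s'\mid s,a)$ is $C^1$, I will show that $\Phi_k(s,a):=\int V^{\pi_k}(s')p(s'\mid s,a)\lambda(\dd s')$ is differentiable with $\nabla_a\Phi_k=\int V^{\pi_k}(s')\nabla_a p(s'\mid s,a)\lambda(\dd s')$. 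I would do this by writing
\[
p(s'\mid s,a+u)-p(s'\mid s,a)=\int_0^1\nabla_a p(s'\mid s,a+\theta u)\cdot u\,\dd\theta
\]
and applying Fubini; this is legitimate because $\int\|\nabla_a p\|\,\dd\lambda\le G_p$. The $L^1$-Lipschitz bound on $\nabla_a p$ then gives convergence of the difference quotient to the claimed gradient in $L^1(\lambda)$, uniformly in direction. This yields the formula for $b_k$ displayed in Section~\ref{sec:discrete-assumptions}.

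Next I set $h_k(s,a):=\nabla_a r(s,a)+\gamma\int V^{\pi_k}(s')\nabla_a p(s'\mid s,a)\lambda(\dd s')$, which gives the decomposition \eqref{eq:dissipative_decomp_dt}. The triangle inequality gives
\[
\|h_k\|\le G_r+\gamma V_{\max}\int\|\nabla_a p\|\,\dd\lambda\le G_r+\gamma G_pV_{\max},
\]
which is \eqref{eq:hk_bound_dt}. For the Lipschitz estimate I write
\[
b_k(s,a)-b_k(s,\bar a)=-\beta(a-\bar a)+\big(\nabla_a r(s,a)-\nabla_a r(s,\bar a)\big)+\gamma\int V^{\pi_k}(s')\big(\nabla_a p(s'\mid s,a)-\nabla_a p(s'\mid s,\bar a)\big)\lambda(\dd s').
\]
I then bound the three terms by $\beta\|a-\bar a\|$, $L_r\|a-\bar a\|$, and $\gamma V_{\max}L_p\|a-\bar a\|$, respectively.

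The only delicate step is the interchange of differentiation and integration. The hypotheses give only $L^1(\lambda)$ control of $\nabla_a p$, with no pointwise dominating function, so I would rely on the integral-form mean value argument with Fubini rather than on dominated convergence. Joint measurability of $(s',\theta)\mapsto\nabla_a p(s'\mid s,a+\theta u)$ is supplied by the standing measurability convention. Everything else is a direct triangle-inequality computation.
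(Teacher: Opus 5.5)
Your proposal is correct and follows essentially the same route as the paper. The paper derives the gradient formula for $Q^{\pi_k}$ in Lemma~\ref{lem:well-posedness-from-quantitative-assumptions}(ii) by the same integral-form mean-value argument, bounding the difference-quotient error by $\tfrac12\|V\|_\infty L_p|\varepsilon|\|h\|^2$ through the $L^1(\lambda)$-Lipschitz condition on $\nabla_a p$. It then defines $h_k$ and obtains the $G(V_{\max})$ and $L_b(V_{\max})$ bounds by exactly the triangle-inequality estimates you describe.
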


The Lipschitz bound controls the Euler interpolation error; the dissipative decomposition controls the second moment and hence the squared drift moment.  Together with the LSI from Lemma~\ref{lem:uniform_LSI_dt}, they give the only place where discrete time loses exact monotonicity.  
The analytic part of the next lemma is based on a one-step fixed-drift interpolation. The full calculation is given in Appendix~\ref{app:fixed-drift-ula}.

\begin{lemma}[One-step KL contraction with discretization error]\label{lem:KL_contraction_frozen}
Fix $k\ge 0$ and $s\in\cS$. Suppose $p_s^{\pi_k}>0$, $\log p_s^{\pi_k}\in C^1(\R^d)$, $p_s^{\pi_k}$ satisfies an LSI with constant $\alpha>0$, and the Gibbs-score identity
\[
\tau\nabla_a\log p_s^{\pi_k}(a)=b_k(s,a)
\qquad\text{for all }a\in\R^d
\]
holds. Suppose also that $b_k(s,\cdot)$ is $L_b$-Lipschitz and
\[
\E_{a\sim\pi_k(\cdot\mid s)}\|a\|^2<\infty,
\qquad
\E_{a\sim\pi_k(\cdot\mid s)}\|b_k(s,a)\|^2\le B^2,
\qquad
\KL(\pi_k(\cdot\mid s)\|p_s^{\pi_k})<\infty.
\]
Set
\[
\delta_\eta(L_b,B^2):=\frac{L_b^2 d}{2}\eta^2+\frac{L_b^2B^2}{6\tau}\eta^3.
\]
Then
\begin{equation}\label{eq:KL_contraction_step}
\KL\big(\pi_{k+1}(\cdot\mid s)\|p_s^{\pi_k}\big)
\le e^{-\alpha\tau\eta} \KL\big(\pi_k(\cdot\mid s)\|p_s^{\pi_k}\big)+\delta_\eta(L_b,B^2).
\end{equation}
\end{lemma}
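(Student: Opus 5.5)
We follow the one-step interpolation argument of \citet{vempala2019rapid}, specialized to a fixed state $s$ and the frozen target $\nu:=p_s^{\pi_k}$. Since $s$ is fixed and the drift $b:=b_k(s,\cdot)$ does not change during the step, the update \eqref{eq:ULA_update_dt} is one step of an unadjusted Langevin algorithm for $\nu$. By the Gibbs-score identity, $\tau\nabla\log\nu = b$.

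\textbf{Interpolation and KL evolution.} Define the continuous interpolation
\[
A_t = A_0 + t\,b(A_0) + \sqrt{2\tau}\,B_t, \qquad t\in[0,\eta],
\]
with $A_0\sim\pi_k(\cdot\mid s)$ and $B$ a Brownian motion independent of $A_0$. Then $\Law(A_\eta)=\pi_{k+1}(\cdot\mid s)$. Let $\mu_t:=\Law(A_t)$ and $\mu_{0t}$ the joint law of $(A_0,A_t)$. The marginal $\mu_t$ solves a Fokker--Planck equation with drift $\E[b(A_0)\mid A_t=x]$. Differentiating $\KL(\mu_t\|\nu)$ and integrating by parts (justified by the finite second moment, finite KL and the $C^1$ log-density) gives
\[
\frac{\dd}{\dd t}\KL(\mu_t\|\nu) = -\tau\,\mathcal I(\mu_t\|\nu) + \E\Big[\big\langle b(A_0)-b(A_t),\ \nabla\log\tfrac{\mu_t}{\nu}(A_t)\big\rangle\Big].
\]
Here the first term comes from $\tau\Delta$ and $\tau\nabla\log\nu=b$, and the second term is the drift mismatch. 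Young's inequality bounds the cross term by $\tfrac{\tau}{2}\mathcal I(\mu_t\|\nu)+\tfrac{1}{2\tau}\E\|b(A_0)-b(A_t)\|^2$.

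\textbf{Bounding the mismatch.} By the $L_b$-Lipschitz property,
\[
\E\|b(A_0)-b(A_t)\|^2 \le L_b^2\,\E\|t\,b(A_0)+\sqrt{2\tau}B_t\|^2 = L_b^2\big(t^2\E\|b(A_0)\|^2+2\tau d t\big) \le L_b^2\big(t^2B^2+2\tau d t\big),
\]
where the cross term vanishes by independence. The LSI for $\nu$ then gives
\[
\frac{\dd}{\dd t}\KL(\mu_t\|\nu) \le -\alpha\tau\,\KL(\mu_t\|\nu) + L_b^2 d\,t + \frac{L_b^2B^2}{2\tau}\,t^2 .
\]
Applying Gronwall with integrating factor $e^{\alpha\tau t}$ on $[0,\eta]$ and using $e^{-\alpha\tau(\eta-t)}\le1$ yields
\[
\KL(\mu_\eta\|\nu) \le e^{-\alpha\tau\eta}\KL(\mu_0\|\nu) + \frac{L_b^2d}{2}\eta^2 + \frac{L_b^2B^2}{6\tau}\eta^3,
\]
which is exactly $\delta_\eta(L_b,B^2)$.

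\textbf{Main obstacle.} The delicate step is justifying the KL derivative formula rigorously. This requires that $\mu_t$ has a smooth positive density (true for $t>0$ thanks to the Gaussian convolution), that boundary terms vanish, and that the derivative is absolutely continuous near $t=0$. We would handle this by working on $[\epsilon,\eta]$ and letting $\epsilon\downarrow0$ via lower semicontinuity of KL, with the moment bounds and $\KL(\pi_k\|\nu)<\infty$ providing the needed integrability. The rest of the argument is routine.
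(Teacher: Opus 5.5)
Your proposal is correct and follows essentially the same route as the paper's proof. The steps match: the interpolation $A_t=A_0+t\,b(A_0)+\sqrt{2\tau}B_t$, the Fokker--Planck equation with conditional drift, the KL derivative identity via $\tau\nabla\log\nu=b$, Young plus LSI, the bound $\E\|A_t-A_0\|^2\le t^2B^2+2\tau d\,t$, and Gronwall. The only cosmetic difference is that you apply Young directly to the joint expectation $\E\langle b(A_0)-b(A_t),\nabla\log(\mu_t/\nu)(A_t)\rangle$, whereas the paper writes it with $u_t(a)=\E[b(A_0)\mid A_t=a]$ and then uses conditional Jensen; the two are equivalent by the tower property.

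One correction to your rigor remark: lower semicontinuity only gives $\KL(\mu_0\|\nu)\le\liminf_{\epsilon\downarrow0}\KL(\mu_\epsilon\|\nu)$. That is the wrong direction for letting $\epsilon\downarrow0$ in $\KL(\mu_\eta\|\nu)\le e^{-\alpha\tau(\eta-\epsilon)}\KL(\mu_\epsilon\|\nu)+\cdots$; you need $\limsup_{\epsilon\downarrow0}\KL(\mu_\epsilon\|\nu)\le\KL(\mu_0\|\nu)$. This does hold here, for two reasons. First, with $h$ the differential entropy, $h(\mu_\epsilon)\ge h(\mu_0)+d\log(1-\epsilon L_b)$: the map $a\mapsto a+\epsilon b(a)$ is injective with Jacobian determinant at least $(1-\epsilon L_b)^d$, and adding independent Gaussian noise does not decrease entropy. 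Second, $\E_{\mu_\epsilon}\log\nu\to\E_{\mu_0}\log\nu$, because $A_\epsilon\to A_0$ in $L^2$ and $\log\nu$ has the Lipschitz gradient $b/\tau$.
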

\begin{lemma}[From KL contraction to Bellman improvement]\label{lem:KL_contraction_bellman_consequence}
Fix $k\ge0$ and $s\in\cS$. Suppose the admissibility and finite-KL hypotheses of Lemma~\ref{lem:V_increment_resolvent_dt} hold for $(\pi_k,\pi_{k+1})$. If, for some $c\in[0,1]$ and $\delta\ge0$,
\begin{equation}\label{eq:abstract_KL_contraction_to_g_assumption}
\KL(\pi_{k+1}(\cdot\mid s)\|p_s^{\pi_k})
\le
(1-c)\KL(\pi_k(\cdot\mid s)\|p_s^{\pi_k})+\delta,
\end{equation}
then, with $R_k(s)=\tau\KL(\pi_k(\cdot\mid s)\|p_s^{\pi_k})$ and $g_k$ as in Lemma~\ref{lem:V_increment_resolvent_dt},
\begin{equation}\label{eq:gk_lower_from_KL_contraction}
g_k(s)\ge cR_k(s)-\tau\delta.
\end{equation}
In particular, Lemma~\ref{lem:KL_contraction_frozen} supplies \eqref{eq:abstract_KL_contraction_to_g_assumption} with $c=1-e^{-\alpha\tau\eta}$ and $\delta=\delta_\eta(L_b,B^2)$ whenever its analytic hypotheses also hold.
\end{lemma}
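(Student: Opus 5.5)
The argument is purely algebraic once Lemma~\ref{lem:V_increment_resolvent_dt} is in hand. I would start from the KL representation \eqref{eq:gk_as_KL_dt} of $g_k(s)$, which holds under the stated admissibility and finite-KL hypotheses:
\[
g_k(s)=\tau\Big(\KL(\pi_k(\cdot\mid s)\|p_s^{\pi_k})-\KL(\pi_{k+1}(\cdot\mid s)\|p_s^{\pi_k})\Big).
\]
The only input needed about $\pi_{k+1}$ is an upper bound on its KL to the frozen Gibbs law $p_s^{\pi_k}$, and that is exactly what \eqref{eq:abstract_KL_contraction_to_g_assumption} provides.

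Substituting \eqref{eq:abstract_KL_contraction_to_g_assumption} into \eqref{eq:gk_as_KL_dt} gives
\[
g_k(s)\ge \tau\KL(\pi_k(\cdot\mid s)\|p_s^{\pi_k})-\tau(1-c)\KL(\pi_k(\cdot\mid s)\|p_s^{\pi_k})-\tau\delta
=c\,\tau\KL(\pi_k(\cdot\mid s)\|p_s^{\pi_k})-\tau\delta .
\]
By Lemma~\ref{prop:residual_identity_ct}, applied to $\pi=\pi_k$, we have $\tau\KL(\pi_k(\cdot\mid s)\|p_s^{\pi_k})=R_k(s)$. This yields \eqref{eq:gk_lower_from_KL_contraction}. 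All KL terms are finite by hypothesis, so the subtractions are legitimate and no $\infty-\infty$ issue arises.

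For the ``in particular'' clause, I would just match constants. Lemma~\ref{lem:KL_contraction_frozen} gives \eqref{eq:KL_contraction_step}, which has the form \eqref{eq:abstract_KL_contraction_to_g_assumption} with $1-c=e^{-\alpha\tau\eta}$ and $\delta=\delta_\eta(L_b,B^2)\ge0$. Since $\alpha,\tau,\eta>0$, this $c=1-e^{-\alpha\tau\eta}$ lies in $[0,1]$.

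There is no real obstacle here. The one point to state carefully is that both KL terms in \eqref{eq:gk_as_KL_dt} are taken relative to the same reference $p_s^{\pi_k}$, the current-policy Gibbs law, not $p_s^{\pi_{k+1}}$. This is precisely why the frozen-drift contraction of Lemma~\ref{lem:KL_contraction_frozen} plugs in directly.
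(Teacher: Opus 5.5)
Your proposal is correct and matches the paper's proof essentially line for line. You substitute the contraction hypothesis into the KL representation of $g_k(s)$ from Lemma~\ref{lem:V_increment_resolvent_dt} and identify $\tau\KL(\pi_k(\cdot\mid s)\|p_s^{\pi_k})$ with $R_k(s)$, and you obtain the ``in particular'' clause by matching $1-c=e^{-\alpha\tau\eta}$ and $\delta=\delta_\eta(L_b,B^2)$.
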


With the uniform constants supplied by Proposition~\ref{prop:WPG_uniform_stability},
the preceding lemmas yield a closed one-step recursion for the sup-norm gap.  Throughout
this argument, $\bar\alpha$ denotes the trajectory-uniform LSI constant, $\bar L_b$ the
uniform Lipschitz constant of the drift, $\bar B^2$ the uniform second-moment bound for
the drift, and $\bar\delta_\eta$ the corresponding one-step Langevin discretization error.
Set
\[
    c_\eta := 1-e^{-\bar\alpha\tau\eta}.
\]
Then, for every $s\in\cS$, the estimates compose as follows:
\begin{equation}\label{eq:four-line-proof-chain-main}
\begin{aligned}
\KL(\pi_{k+1}(\cdot\mid s)\|p_s^{\pi_k})
&\le
(1-c_\eta)\KL(\pi_k(\cdot\mid s)\|p_s^{\pi_k})
+\bar\delta_\eta
\\[-1mm]
&\hspace{35mm}
\text{by Lemma~\ref{lem:KL_contraction_frozen},}
\\[1mm]
&\Longrightarrow\quad
g_k(s)\ge c_\eta R_k(s)-\tau\bar\delta_\eta
\\[-1mm]
&\hspace{35mm}
\text{by Lemma~\ref{lem:KL_contraction_bellman_consequence},}
\\[1mm]
&\Longrightarrow\quad
V^{\pi_{k+1}}(s)-V^{\pi_k}(s)
\ge
c_\eta R_k(s)-\frac{\tau}{1-\gamma}\bar\delta_\eta
\\[-1mm]
&\hspace{35mm}
\text{by Lemma~\ref{lem:V_increment_resolvent_dt},}
\\[1mm]
&\Longrightarrow\quad
E_{k+1}
\le
\bigl(1-(1-\gamma)c_\eta\bigr)E_k
+\frac{\tau}{1-\gamma}\bar\delta_\eta
\\[-1mm]
&\hspace{35mm}
\text{by Lemma~\ref{lem:residual_lower_epsmax_dt}.}
\end{aligned}
\end{equation}

Here the KL terms are statewise, $R_k=\T^\star V^{\pi_k}-V^{\pi_k}$, and $E_k=\|V^\star-V^{\pi_k}\|_\infty$.  The substantive RL step is the middle of \eqref{eq:four-line-proof-chain-main}: a local KL decrease becomes a global value-gap contraction through the Bellman residual and the value resolvent, not through flat convexity of a probability functional.
The implications in \eqref{eq:four-line-proof-chain-main} reduce the convergence proof to estimates that hold uniformly along the WPGD trajectory. The constants entering this recursion are policy dependent, since both $p_s^{\pi_k}$ and $b_k$ are defined through $Q^{\pi_k}$ and hence through the Bellman fixed point
$V^{\pi_k}$. The next proposition supplies the required a priori estimates. Approximate value monotonicity controls the lower value
envelope, the quadratic action penalty gives uniform action-moment bounds, and
these estimates feed back into uniform drift regularity, finite KL controls,
and a trajectory-uniform LSI for the moving Gibbs family.

\begin{proposition}[Uniform a priori bounds for the WPGD iterates]
\label{prop:WPG_uniform_stability}\label{lem:apriori-estimates-closure-dt}
Under Assumptions~\ref{ass:standing} and~\ref{ass:action-regularity}, there exist explicit finite constants
\[
\bar V,\quad \bar L_b,\quad \bar G,\quad \bar M,\quad \bar B,\quad \bar\alpha>0,
\quad \eta_0>0,
\]
depending only on the problem data and the initialization such that, for every $0<\eta\le\eta_0$, the WPG iterates are admissible and, for all $k\ge0$,
\[
\|V^{\pi_k}\|_\infty\le\bar V,
\qquad
\sup_{s\in\cS}\E\|A_k^s\|^2\le \bar M,
\qquad
\sup_{s\in\cS}\E_{\pi_k}\|\nabla_a Q^{\pi_k}(s,A)\|^2\le \bar B^2.
\]
Moreover, $a\mapsto \nabla_a Q^{\pi_k}(s,a)$ is $\bar L_b$-Lipschitz uniformly in $(k,s)$, every current-policy Gibbs density $p_s^{\pi_k}$ satisfies LSI with constant at least $\bar\alpha$, and the generated policies satisfy
\begin{align}
\sup_{k\ge0}\sup_{s\in\cS}\KL(\pi_k(\cdot\mid s)\|\rho_\beta)&\le \bar K_\eta,\label{eq:uniform_gaussian_KL_main}\\
\sup_{k\ge0}\sup_{s\in\cS}\KL(\pi_k(\cdot\mid s)\|p_s^{\pi_k})&\le \bar H_\eta.\label{eq:uniform_frozen_KL_main}
\end{align}
All constants and the feasible step-size range are given in Appendix~\ref{app:regularity-closure}.
\end{proposition}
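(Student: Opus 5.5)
The proof is an induction on $k$, closing a bootstrap on a fixed set of candidate constants. Fix a value envelope $\bar V$, a Gaussian-KL level $\bar K$, and a step size $\eta_0$; they will be tuned at the end. The inductive hypothesis at step $k$ is:
\begin{itemize}
\item $\pi_k$ is admissible;
\item $\sup_s\KL(\pi_k(\cdot\mid s)\|\rho_\beta)\le\bar K$;
\item $\|V^{\pi_k}\|_\infty\le\bar V$.
\end{itemize}

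\textbf{Step 1: the value envelope.} The upper value bound is unconditional. Since $r\le R_{\max}$ and $\int(-\frac\beta2\|a\|^2-\tau\log\mu)\mu\,\dd a=\tau\log Z_\beta-\tau\KL(\mu\|\rho_\beta)\le\tau\log Z_\beta$, the contraction of $\T^\pi$ gives $V^{\pi}\le (R_{\max}+\tau\log Z_\beta)/(1-\gamma)$, as in Lemma~\ref{lem:uniform_bounds_ct}. The same identity gives a lower bound
\[
V^{\pi_k}\ge \frac{-R_{\max}+\tau\log Z_\beta-\tau\sup_s\KL(\pi_k(\cdot\mid s)\|\rho_\beta)}{1-\gamma},
\]
so the value bound follows once the Gaussian KL is controlled. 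Accordingly, I take $\bar V$ as the $V_{\max}$ of Lemma~\ref{lem:uniform_LSI_ct} with $K_0$ replaced by $\bar K$.

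\textbf{Step 2: drift and LSI constants.} With $\|V^{\pi_k}\|_\infty\le\bar V$, Lemma~\ref{lem:bellman-drift-regularity} gives $\bar L_b=L_b(\bar V)$ and the dissipative decomposition $b_k=-\beta a+h_k$ with $\|h_k\|\le\bar G=G(\bar V)$. Lemma~\ref{lem:uniform_LSI_dt} gives $\bar\alpha=\alpha(\bar V)$.

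\textbf{Step 3: second moments.} From the update,
\[
\E\|A_{k+1}\|^2=\E\|(1-\eta\beta)A_k+\eta h_k\|^2+2\tau\eta d .
\]
By Young's inequality, for $\eta\le\eta_0\le 1/\beta$ this is at most $(1-\eta\beta/2)\E\|A_k\|^2+C\eta(\bar G^2/\beta+\tau d)$. Induction then yields
\[
\sup_s\E\|A_k^s\|^2\le \bar M:=\max\{M_0,\,C'(\bar G^2/\beta^2+\tau d/\beta)\}.
\]
The drift moment follows from $\|b_k\|^2\le 2\beta^2\|a\|^2+2\bar G^2$, giving $\bar B^2=2\beta^2\bar M+2\bar G^2$.

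\textbf{Step 4: the KL bounds.} This is the main obstacle, because it closes the loop. For the Gibbs KL, Lemma~\ref{lem:KL_contraction_frozen} gives
\[
\KL(\pi_{k+1}\|p^{\pi_k}_s)\le e^{-\bar\alpha\tau\eta}\KL(\pi_k\|p_s^{\pi_k})+\bar\delta_\eta .
\]
The reference measure changes from $p^{\pi_k}$ to $p^{\pi_{k+1}}$, and I would avoid tracking this change directly. Instead I transfer through $\rho_\beta$ using the bounded tilt \eqref{eq:pk_bounded_tilt_main}:
\[
\bigl|\KL(\mu\|p_s^{\pi})-\KL(\mu\|\rho_\beta)\bigr|\le \osc(\psi)\le 2(R_{\max}+\gamma\bar V)/\tau .
\]
It therefore suffices to bound the Gaussian KL uniformly, and then $\bar H_\eta:=\bar K+2(R_{\max}+\gamma\bar V)/\tau$.

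For the Gaussian KL, the same tilt comparison turns the contraction into
\[
\KL(\pi_{k+1}\|\rho_\beta)\le e^{-\bar\alpha\tau\eta}\KL(\pi_k\|\rho_\beta)+(1-e^{-\bar\alpha\tau\eta})\,2\osc+\bar\delta_\eta .
\]
Iterating this bounds $\sup_k\KL(\pi_k\|\rho_\beta)$ by $K_0+2\osc+\bar\delta_\eta/(1-e^{-\bar\alpha\tau\eta})$. The last term is $O(\eta)$. There is a circularity: $\osc$ and $\bar\alpha$ depend on $\bar V$, which depends on $\bar K$. I would break it by noting that the upper bound on $V$ is unconditional. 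Plugging $V\le (R_{\max}+\tau\log Z_\beta)/(1-\gamma)$ into the tilt only needs $\psi\le$ an a priori constant, with $\psi$ bounded below via the lower value bound.

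If exact closure still fails, I would use approximate monotonicity instead. Lemmas~\ref{lem:V_increment_resolvent_dt} and~\ref{lem:KL_contraction_bellman_consequence} give $V^{\pi_{k+1}}\ge V^{\pi_k}-\tau\bar\delta_\eta/(1-\gamma)$. Combined with the contraction in \eqref{eq:four-line-proof-chain-main}, this keeps $\inf_s V^{\pi_k}\ge \inf_s V^{\pi_0}-O(\eta)$ uniformly. That fixes $\bar V$ independently of $\bar K$, and $\bar K$ is then defined afterwards.

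\textbf{Step 5: choosing $\eta_0$.} Finally I choose $\eta_0$ small enough that $\eta_0\le1/\beta$ and that the $O(\eta)$ terms fit inside the chosen envelopes. This completes the induction. Admissibility of $\pi_{k+1}$ follows because Gaussian convolution of a pushforward under a Lipschitz map yields a jointly measurable positive density with finite KL.
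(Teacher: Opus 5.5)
\textbf{Step 4 contains an inequality that is wrong, and the route you lead with there cannot close.}

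Start from the identity $\KL(\mu\|\rho_\beta)=\KL(\mu\|p_s^{\pi_k})+\int\psi_{k,s}\,\dd\mu-\log\int e^{\psi_{k,s}}\,\dd\rho_\beta$. When you substitute it into the KL contraction, the tilt correction appears twice: once at $\mu=\pi_{k+1}$, and once at $\mu=\pi_k$ multiplied by $e^{-\bar\alpha\tau\eta}$. These two terms do not cancel. The correct one-step bound is therefore
$\KL(\pi_{k+1}\|\rho_\beta)\le e^{-\bar\alpha\tau\eta}\KL(\pi_k\|\rho_\beta)+(1+e^{-\bar\alpha\tau\eta})\osc(\psi_{k,s})+\bar\delta_\eta$,
not one with error $(1-e^{-\bar\alpha\tau\eta})\,2\osc$. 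Iterating it gives a level of order $\osc/(\bar\alpha\tau\eta)$, not $K_0+2\osc+O(\eta)$.

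This makes the plan of building $\bar V$ from a Gaussian-KL level $\bar K$ essentially circular. $\bar K$ would blow up as $\eta\to0$. It also feeds back into $\bar V$ through $\osc$ and through $\bar\alpha(\bar V)$, which is exponentially small in $\bar V$. Even with your constant, the $\bar V$--$\bar K$ fixed point closes only for small $\gamma$. Appealing to the unconditional bound $V\le U$ does not help, because it is the \emph{lower} value bound that needs $\bar K$. So your conditional fallback is not optional; it is the proof.

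\textbf{The fallback is the paper's mechanism, but $\bar V$ must be fixed before anything else.} Take $\bar V:=\max\{1,U,\bar E_0-L_\star+1\}$, and let the induction hypothesis be only admissibility and $\|V^{\pi_j}\|_\infty\le\bar V$ for $j\le k$.
\begin{itemize}
\item This fixes $\bar\alpha,\bar L_b,\bar G,\bar M,\bar B$.
\item Lemmas~\ref{lem:KL_contraction_frozen}, \ref{lem:KL_contraction_bellman_consequence}, \ref{lem:V_increment_resolvent_dt} and~\ref{lem:residual_lower_epsmax_dt} give $E_{j+1}\le(1-(1-\gamma)c_\eta)E_j+\frac{\tau}{1-\gamma}\bar\delta_\eta$. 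This uses only finiteness of $\KL(\pi_j(\cdot\mid s)\|p_s^{\pi_j})$, not a quantitative level.
\item Choose $\eta_0$ so that $\frac{\tau\bar\delta_\eta}{(1-\gamma)^2c_\eta}\le1$. Then $E_{k+1}\le\bar E_0+1$, so $V^{\pi_{k+1}}\ge V^\star-E_{k+1}\ge L_\star-\bar E_0-1\ge-\bar V$.
\end{itemize}
What this delivers is a sup-norm gap bound, not your comparison $\inf_sV^{\pi_k}\ge\inf_sV^{\pi_0}-O(\eta)$. Approximate monotonicity alone would accumulate an error linear in $k$.

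\textbf{Admissibility and the Gaussian-KL level come from smoothing, not from the contraction.} In the paper, $\pi_{k+1}(\cdot\mid s)$ is the law of $X+\sqrt{2\tau\eta}\,\xi$, so its differential entropy is at least $\frac d2\log(4\pi e\tau\eta)$. Combined with $\E\|A_{k+1}^s\|^2\le\bar M$, this gives $\KL(\pi_{k+1}(\cdot\mid s)\|\rho_\beta)\le\frac{\beta\bar M}{2\tau}+\log Z_\beta-\frac d2\log(4\pi e\tau\eta)$ uniformly in $s$. This level grows like $\log(1/\eta)$, so it is not $\eta$-uniform as you claimed. Alternatively, once $\bar V$ is fixed, your corrected tilt recursion also yields a finite level of order $1/\eta$, which suffices qualitatively.

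Your Steps~2--3 and the choice $\bar H_\eta=\bar K_\eta+2(R_{\max}+\gamma\bar V)/\tau$ are fine.
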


Proposition~\ref{prop:WPG_uniform_stability} provides the uniform constants
needed to run the one-step recursion in \eqref{eq:four-line-proof-chain-main} at every
iteration. The value bound yields a common Holley--Stroock perturbation bound
and hence a uniform LSI constant $\bar\alpha$; the drift Lipschitz and
second-moment estimates give a uniform discretization error
$\bar\delta_\eta=O(\eta^2)$; and the admissibility and finite-KL bounds ensure
that the resolvent identity and the one step drift interpolation estimate apply
throughout the trajectory. Substituting these uniform estimates into
\eqref{eq:four-line-proof-chain-main} gives a closed scalar recursion for the sup-norm
optimality gap, which yields the geometric contraction up to the accumulated
Langevin discretization bias stated in the main theorem.

\begin{theorem}[Global convergence of WPGD up to discretization bias]\label{thm:main_discrete}
Under Assumptions~\ref{ass:standing} and~\ref{ass:action-regularity}, there exist explicit finite constants $\bar\alpha,C_\delta,\eta_0>0$, such that for every $0<\eta\le\eta_0$ and every $k\ge0$,
\begin{equation}\label{eq:main_discrete_bound}
J(\pi^\star)-J(\pi_k)
\le \|V^\star-V^{\pi_k}\|_\infty
\le
\exp\Big(-\tfrac12\bar\alpha\tau(1-\gamma)\eta k\Big)
\|V^\star-V^{\pi_0}\|_\infty
+
\frac{2C_\delta}{\bar\alpha(1-\gamma)^2}\,\eta.
\end{equation}
\end{theorem}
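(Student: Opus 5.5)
The proof assembles the one-step chain \eqref{eq:four-line-proof-chain-main} using the uniform constants of Proposition~\ref{prop:WPG_uniform_stability}, and then unrolls the resulting scalar recursion.

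\textbf{Step 1: uniform hypotheses.} Fix $\eta\le\eta_0$. Proposition~\ref{prop:WPG_uniform_stability} gives, for every $k$:
\begin{itemize}
\item admissibility of $\pi_k$ and $\pi_{k+1}$;
\item the bounds $\|V^{\pi_k}\|_\infty\le\bar V$, $\E\|A_k^s\|^2\le\bar M$ and $\E\|b_k\|^2\le\bar B^2$;
\item the $\bar L_b$-Lipschitz property of $b_k$ and the LSI constant $\bar\alpha$ for $p_s^{\pi_k}$;
\item finiteness of $\KL(\pi_k\|p_s^{\pi_k})\le\bar H_\eta$.
\end{itemize}
Positivity and $C^1$ regularity of $\log p_s^{\pi_k}$ follow from \eqref{eq:pk_bounded_tilt_main} and Lemma~\ref{lem:bellman-drift-regularity}. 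The Gibbs-score identity is \eqref{eq:stationarity_pk_dt}. Hence Lemma~\ref{lem:KL_contraction_frozen} applies at every $(k,s)$ with $\bar\delta_\eta=\delta_\eta(\bar L_b,\bar B^2)$. For $\eta\le\eta_0\le1$ we have $\bar\delta_\eta\le C_\delta\eta^2$, with $C_\delta:=\bar L_b^2 d/2+\bar L_b^2\bar B^2/(6\tau)$.

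\textbf{Step 2: value improvement.} Lemma~\ref{lem:KL_contraction_bellman_consequence} with $c_\eta=1-e^{-\bar\alpha\tau\eta}$ gives $g_k\ge c_\eta R_k-\tau\bar\delta_\eta$ pointwise. Invert the resolvent in Lemma~\ref{lem:V_increment_resolvent_dt} as a Neumann series:
\[
V^{\pi_{k+1}}-V^{\pi_k}=\sum_{n\ge0}\gamma^n(P^{\pi_{k+1}})^n g_k .
\]
This is legitimate because $g_k$ and the $V$'s are bounded. Since $P^{\pi_{k+1}}$ is a positive Markov operator, the $n=0$ term contributes $c_\eta R_k$, the positive parts of the $n\ge1$ terms can be dropped, and each negative part contributes at most $\gamma^n\tau\bar\delta_\eta$. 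This yields
\[
V^{\pi_{k+1}}(s)-V^{\pi_k}(s)\ge c_\eta R_k(s)-\frac{\tau\bar\delta_\eta}{1-\gamma}.
\]

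\textbf{Step 3: sup-norm gap recursion.} By Lemma~\ref{lem:residual_lower_epsmax_dt}, $R_k(s)\ge W_k(s)-\gamma E_k$. Then
\[
W_{k+1}(s)\le (1-c_\eta)W_k(s)+c_\eta\gamma E_k+\frac{\tau\bar\delta_\eta}{1-\gamma}.
\]
Taking the supremum gives
\[
E_{k+1}\le(1-(1-\gamma)c_\eta)E_k+\frac{\tau C_\delta\eta^2}{1-\gamma}.
\]
Here we use $W_k\ge0$, because $V^\star$ dominates the value of any admissible policy.

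\textbf{Step 4: unrolling.} Shrink $\eta_0$ so that $\bar\alpha\tau\eta\le1$. Then $c_\eta\ge\bar\alpha\tau\eta/2$, using $1-e^{-x}\ge x/2$ on $[0,1]$. Writing $q:=1-\tfrac12(1-\gamma)\bar\alpha\tau\eta$, the recursion unrolls to
\[
E_k\le q^kE_0+\frac{\tau C_\delta\eta^2}{(1-\gamma)(1-q)}.
\]
Use $q^k\le\exp(-\tfrac12\bar\alpha\tau(1-\gamma)\eta k)$ for the first term. For the bias, substituting $1-q=\tfrac12(1-\gamma)\bar\alpha\tau\eta$ gives $2C_\delta\eta/(\bar\alpha(1-\gamma)^2)$, which is exactly the constant in \eqref{eq:main_discrete_bound}. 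Finally, $J(\pi^\star)-J(\pi_k)=\int W_k\,\dd\rho_0\le E_k$.

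\textbf{Main obstacle.} All the real difficulty sits in Proposition~\ref{prop:WPG_uniform_stability}, which is taken as given here. Within the argument itself, the delicate point is Step 2. Because $g_k$ can be negative, positivity of the resolvent only lets us bound the error tail crudely; a one-sided bound must be extracted rather than an exact identity. We must also make sure Lemma~\ref{lem:residual_lower_epsmax_dt} applies without an $\varepsilon$-maximizer loss. Working pointwise and taking the supremum at the end, as above, avoids that loss.
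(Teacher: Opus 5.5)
Your proposal is correct and follows the paper's proof essentially step for step. It takes the uniform hypotheses from Proposition~\ref{prop:WPG_uniform_stability}, turns the KL contraction into $g_k\ge c_\eta R_k-\tau\bar\delta_\eta$, uses the Neumann-series bound $V^{\pi_{k+1}}-V^{\pi_k}\ge c_\eta R_k-\tau\bar\delta_\eta/(1-\gamma)$ and the residual bound $R_k\ge W_k-\gamma E_k$, and reaches the same scalar recursion; the only cosmetic difference is that you replace $\kappa_\eta$ by the larger $q=1-\tfrac12(1-\gamma)\bar\alpha\tau\eta$ before unrolling, whereas the paper unrolls with $\kappa_\eta$ and bounds $\bar\delta_\eta/c_\eta$ afterwards, and both give the same constant.
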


Theorem~\ref{thm:main_discrete} gives a finite-step geometric contraction for
the WPGD update \eqref{eq:ULA_update_dt}, up to a fixed-step discretization bias. The drift KL estimate has an additive error $\bar\delta_\eta=O(\eta^2)$, and the scalar gap recursion divides this error by $1-e^{-\bar\alpha\tau\eta}\asymp \bar\alpha\tau\eta$. The proof in
Appendix~\ref{app:proof-main-discrete} keeps the sharper quantity
$\bar\delta_\eta$ throughout and substitutes $\bar\delta_\eta\le C_\delta\eta^2$
only at the end.

The theorem is closest in form to discrete-time mean-field Langevin guarantees,
but the source of the contraction is different. Mean-field convex analyses
convert KL decrease to objective decrease through flat convexity of a
distributional objective \citep{nitanda2022convex,chizat2022mean,suzuki2023mfld}.
Here the Gibbs law $p_s^{\pi_k}$ moves with the policy through the soft Bellman equation that depends on both the policy gradient and the KL regularization.
The required P{\L}-type relation is therefore reconstructed from the RL-specific
ingredients.

The result is also complementary to entropy-regularized natural policy gradient
and policy mirror descent, where KL/Bregman geometry gives a closed-form
policy-improvement step and Bellman monotonicity
\citep{geist2019regmdp,cen2022fastnpg,bhandari2021linearpg,lan2023policy,zhan2023gpmd}.
WPGD instead uses Wasserstein geometry: local progress is Fisher-information
dissipation in action space, and the main technical burden is to show that this
transport-based dissipation remains uniformly controlled under the
policy-dependent Bellman recursion and relate it to the global optimality gap.

\section{Conclusion}

We developed a convergence analysis framework for Wasserstein policy gradient in entropy-regularized discounted RL with continuous actions. The proof combines Bellman-residual identities, value-resolvent relation, one-step interpolation, uniform value/Q-function and moment bounds, and uniform LSI for the iterated Gibbs family. These ingredients yield exponential convergence for the continuous-time WPG flow and
geometric convergence for the discrete-time WPG Langevin dynamics up to a discretization bias.
An important direction for future work is to integrate these policy-space guarantees with practical approximations, such as projection onto parametric policy classes and finite-sample critic estimation.

\bibliography{ref}

\appendix

\section{Auxiliary Results}

\subsection{Entropy-Regularized Performance Difference Lemma}
\begin{lemma}[Entropy-Regularized Performance Difference Lemma]
\citep[Lemma2]{lan2023policy}
\label{lem:performance_difference}
Let $\rho_0\in\mathcal P(S)$ be the initial state distribution, and write
$J(\pi):=J_{\rho_0}(\pi)=\int_S V^\pi(s)\rho_0(ds)$. For any two feasible policies
$\pi$ and $\pi'$, we have
\[
J(\pi')-J(\pi)
=
\frac{1}{1-\gamma}\int_S
\left[
\int_{\mathbb R^d} Q^\pi(s,a)\bigl(\pi'(a\mid s)-\pi(a\mid s)\bigr)\,da
-\tau\H^{\pi'}(s)+\tau\H^\pi(s)
\right]d_{\rho_0}^{\pi'}(ds),
\]
where
\[
\H^\pi(s):=\int_{\mathbb R^d}\pi(a\mid s)\log\pi(a\mid s)\,da
\]
is the statewise negative entropy, and
\[
d_{\rho_0}^{\pi'}(B):=(1-\gamma)\sum_{t=0}^\infty
\gamma^t\mathbb P_{\rho,\pi'}(s_t\in B),\qquad B\in\mathcal B(S),
\]
is the normalized discounted state occupancy measure of $\pi'$ initialized
from $\rho$.
\end{lemma}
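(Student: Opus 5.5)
The plan is to prove the identity statewise first, in resolvent form, and only then integrate against $\rho_0$. This is the same mechanism as in Lemma~\ref{lem:V_increment_resolvent_dt}, with $\pi'$ in place of $\pi_{k+1}$.

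\emph{Step 1 (one-step comparison).} Fix feasible $\pi,\pi'$ and define
\[
g(s):=(\T^{\pi'}V^{\pi})(s)-V^{\pi}(s).
\]
Combine the definition \eqref{eq:Tpi} of $\T^{\pi'}$, the definition \eqref{eq:Qpi_def} of $Q^\pi$, and the soft Bellman identity \eqref{eq:bellman_Vpi} for $V^\pi$. This gives
\[
g(s)=\int_{\R^d}Q^\pi(s,a)\bigl(\pi'(a\mid s)-\pi(a\mid s)\bigr)\dd a-\tau\H^{\pi'}(s)+\tau\H^{\pi}(s),
\]
which is exactly the integrand in the claim.

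\emph{Step 2 (resolvent identity).} Use the fixed-point property $V^{\pi'}=\T^{\pi'}V^{\pi'}$. Since $\T^{\pi'}$ is affine with linear part $\gamma P^{\pi'}$,
\[
V^{\pi'}-V^\pi=\bigl(\T^{\pi'}V^{\pi'}-\T^{\pi'}V^{\pi}\bigr)+g=\gamma P^{\pi'}(V^{\pi'}-V^\pi)+g .
\]
Hence $(I-\gamma P^{\pi'})(V^{\pi'}-V^\pi)=g$. Because $V^{\pi'}-V^\pi$ is bounded and $P^{\pi'}$ is a Markov kernel, the Neumann series converges in $\|\cdot\|_\infty$. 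Therefore
\[
V^{\pi'}-V^\pi=\sum_{t\ge0}\gamma^t(P^{\pi'})^t g .
\]

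\emph{Step 3 (integration).} Integrate this series against $\rho_0$. Since $\rho_0(P^{\pi'})^t$ is the law of $s_t$ under the rollout of $\pi'$, the definition \eqref{eq:occ_def} of the occupancy measure gives
\[
J(\pi')-J(\pi)=\frac1{1-\gamma}\int_{\cS} g\,\dd d^{\pi'}_{\rho_0},
\]
which is the stated formula.

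\emph{Main obstacle: integrability.} On the unbounded action space, $Q^\pi(s,a)$ behaves like $-\tfrac\beta2\|a\|^2$. So the splitting in Step 1 into separate $Q^\pi$- and entropy-integrals, and the interchange of sum and integral in Step 3, both need justification. I would handle this as follows.
\begin{itemize}
\item Interpret ``feasible'' as membership in the admissible class \eqref{eq:admissible_class}, so $\sup_s\KL(\pi'(\cdot\mid s)\|\rho_\beta)<\infty$, and likewise for $\pi$.
\item By Lemma~\ref{lem:gaussian-KL-second-moment}, this gives second moments of $\pi'(\cdot\mid s)$ that are bounded uniformly in $s$.
\item Write $\H^{\pi'}(s)=\KL(\pi'(\cdot\mid s)\|\rho_\beta)-\log Z_\beta-\tfrac{\beta}{2\tau}\E_{\pi'}\|a\|^2$. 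Then each of the two pieces is finite and bounded uniformly in $s$, given $|r|\le R_{\max}$ and bounded $V^\pi$.
\item Consequently $g$ is a bounded measurable function. This justifies the term-by-term manipulations and, via Fubini/Tonelli together with absolute convergence of $\sum_t\gamma^t\|g\|_\infty$, the exchange of sum and integral.
\end{itemize}
The remaining measurability issues are covered by the standing convention of Section~\ref{sec:model}.
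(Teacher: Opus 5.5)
Your proof is correct, but it takes the operator route rather than the paper's. The paper argues along sample paths. It adds and subtracts $V^\pi(s_t)$ inside the discounted sum over the $\pi'$-rollout and cancels the boundary term $\E[V^\pi(s_0)]$. It then conditions on $(s_t,a_t)$ to turn $\tilde r+\gamma V^\pi(s_{t+1})$ into $Q^\pi(s_t,a_t)$, conditions on $s_t$ to produce the integrand, and finally reads off $d^{\pi'}_{\rho_0}$. You instead first prove the pointwise identity $V^{\pi'}-V^\pi=\sum_{t\ge0}\gamma^t(P^{\pi'})^t g$, with $g=\T^{\pi'}V^\pi-V^\pi$, and only afterwards integrate against $\rho_0$.

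The two computations coincide term by term: $\gamma^t\rho_0(P^{\pi'})^tg$ is exactly the $t$-th conditioned summand in the paper's telescoped series. Your version gains two things.
\begin{enumerate}
\item It yields the stronger statewise resolvent identity. This is precisely the form the paper actually uses in Lemma~\ref{lem:V_increment_resolvent_dt}, and the one it describes, after Lemma~\ref{prop:V_derivative_identity_ct}, as the object whose $\rho_0$-integral is the performance-difference formula.
\item It reduces every interchange of sums and integrals to bounded functions. Your integrability bookkeeping (uniform second moments via Lemma~\ref{lem:gaussian-KL-second-moment}, and writing $\H^{\pi'}$ through $\KL(\pi'(\cdot\mid s)\|\rho_\beta)$) thereby makes explicit what the paper leaves implicit. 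The paper splits and rearranges infinite sums involving the unbounded terms $-\tfrac{\beta}{2}\|a_t\|^2$ and $\log\pi'(a_t\mid s_t)$ without justifying those steps.
\end{enumerate}
The paper's route, for its part, is the classical rollout argument and does not need the contraction and Neumann-series structure of $\T^{\pi'}$ on bounded functions.
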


For completeness, we provide the proof here and adapt it to our notation.\\

\begin{proof}
Let $\Gamma^{\pi'}(\rho_0)$ denote the trajectory $(s_t,a_t,s_{t+1})_{t\ge0}$
generated by $s_0\sim\rho_0$, $a_t\sim\pi'(\cdot\mid s_t)$, and
$s_{t+1}\sim P(\cdot\mid s_t,a_t)$. By the definition of $J$, we have
\[
J(\pi')-J(\pi)
=
\mathbb E_{\Gamma^{\pi'}(\rho_0)}
\left[
\sum_{t=0}^\infty\gamma^t
\bigl(\tilde r(s_t,a_t)-\tau\log\pi'(a_t\mid s_t)\bigr)
\right]
-\mathbb E_{s_0\sim\rho_0}[V^\pi(s_0)].
\]
Adding and subtracting $V^\pi(s_t)$ inside the summation and rearranging the
telescoping value terms gives
\[
\begin{aligned}
J(\pi')-J(\pi)
&=
\mathbb E_{\Gamma^{\pi'}(\rho_0)}
\left[
\sum_{t=0}^\infty\gamma^t
\bigl(\tilde r(s_t,a_t)-\tau\log\pi'(a_t\mid s_t)
+\gamma V^\pi(s_{t+1})-V^\pi(s_t)\bigr)
\right]  \\
&\quad
+\mathbb E_{\Gamma^{\pi'}(\rho_0)}[V^\pi(s_0)]
-\mathbb E_{s_0\sim\rho_0}[V^\pi(s_0)].
\end{aligned}
\]
The last two terms cancel. Taking conditional expectation over $s_{t+1}$ given
$(s_t,a_t)$ and using
\[
Q^\pi(s_t,a_t)=\tilde r(s_t,a_t)+\gamma\mathbb E[V^\pi(s_{t+1})\mid s_t,a_t],
\]
we obtain
\[
J(\pi')-J(\pi)
=
\mathbb E_{\Gamma^{\pi'}(\rho_0)}
\left[
\sum_{t=0}^\infty\gamma^t
\bigl(Q^\pi(s_t,a_t)-V^\pi(s_t)-\tau\log\pi'(a_t\mid s_t)\bigr)
\right].
\]
For each fixed $s_t$, since $a_t\sim\pi'(\cdot\mid s_t)$ and
\[
V^\pi(s_t)=\int_{\mathbb R^d}
\bigl(Q^\pi(s_t,a)-\tau\log\pi(a\mid s_t)\bigr)\pi(a\mid s_t)\,da,
\]
we have
\[
\begin{aligned}
&\mathbb E\bigl[
Q^\pi(s_t,a_t)-V^\pi(s_t)-\tau\log\pi'(a_t\mid s_t)
\,\bigm|\,s_t
\bigr] \\
&\quad =
\int_{\mathbb R^d}Q^\pi(s_t,a)\bigl(\pi'(a\mid s_t)-\pi(a\mid s_t)\bigr)\,da
-\tau\H^{\pi'}(s_t)+\tau\H^\pi(s_t).
\end{aligned}
\]
Substituting this identity into the trajectory expectation yields
\[
\begin{aligned}
J(\pi')-J(\pi)=
\mathbb E_{\Gamma^{\pi'}(\rho_0)}
\left[
\sum_{t=0}^\infty\gamma^t
\left(
\int_{\mathbb R^d}Q^\pi(s_t,a)\bigl(\pi'(a\mid s_t)-\pi(a\mid s_t)\bigr)\,da
-\tau\H^{\pi'}(s_t)+\tau\H^\pi(s_t)
\right)
\right].
\end{aligned}
\]
Finally, by the definition of $d_{\rho_0}^{\pi'}$, the last display is equivalent to
\[
J(\pi')-J(\pi)
=
\frac{1}{1-\gamma}\int_S
\left[
\int_{\mathbb R^d}Q^\pi(s,a)\bigl(\pi'(a\mid s)-\pi(a\mid s)\bigr)\,da
-\tau\H^{\pi'}(s)+\tau\H^\pi(s)
\right]d_{\rho_0}^{\pi'}(ds).
\]
This proves the lemma.
\end{proof}

\subsection{Derivation of the first variation and Wasserstein policy-gradient direction}
\label{app:wpg-first-variation}

We justify the statewise direction used in Section~\ref{sec:wpg}. Throughout this
subsection, we write \(J(\pi)=J_{\rho_0}(\pi)\) for the entropy-regularized objective with initial distribution \(\rho_0\). Fix a policy \(\pi\), and let \(d^\pi=d_{\rho_0}^\pi\)
be its normalized discounted state occupancy. 

For each state \(s\), consider a
smooth transport perturbation \(t\mapsto \pi_t(\cdot\mid s)\) with
\(\pi_0=\pi\) and velocity field \(u(s,\cdot)\), so that
\begin{equation}\label{eq:app_transport_perturbation}
\partial_t\pi_t(a\mid s)\big|_{t=0}
=-\nabla_a\cdot\bigl(\pi(a\mid s)u(s,a)\bigr).
\end{equation}
The following proposition identifies the first variation of \(J\) along such
statewise perturbations. 

\begin{proposition}[First variation under statewise transport perturbations]
\label{prop:statewise_wpg_first_variation}
Let \(t\mapsto \pi_t\) be a smooth curve of feasible policies with
\(\pi_0=\pi\), and write
\[
\dot\pi(a\mid s):=\partial_t\pi_t(a\mid s)\big|_{t=0}.
\] Then
\begin{equation}\label{eq:app_first_variation_density}
\frac{\dd}{\dd t}J(\pi_t)\bigg|_{t=0}
=
\frac{1}{1-\gamma}\int_{\cS}\int_{\mathbb R^d}
\bigl(Q^\pi(s,a)-\tau\log\pi(a\mid s)\bigr)\dot\pi(a\mid s)
\,da\,d^\pi(ds).
\end{equation}
If, in addition, \(\dot\pi\) is generated by the transport perturbation
\eqref{eq:app_transport_perturbation}, then
\begin{equation}\label{eq:app_first_variation_velocity}
\frac{\dd}{\dd t}J(\pi_t)\bigg|_{t=0}
=
\frac{1}{1-\gamma}\int_{\cS}\int_{\mathbb R^d}
\left\langle
\nabla_a\bigl(Q^\pi(s,a)-\tau\log\pi(a\mid s)\bigr),u(s,a)
\right\rangle
\pi(a\mid s)\,da\,d^\pi(ds).
\end{equation}
\end{proposition}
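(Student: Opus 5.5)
The plan is to derive the first variation from the performance difference lemma (Lemma~\ref{lem:performance_difference}) applied to the pair $(\pi,\pi_t)$, and then pass to the limit $t\to0$. Taking $\pi'=\pi_t$ gives
\[
J(\pi_t)-J(\pi)=\frac{1}{1-\gamma}\int_{\cS}\Big[\int_{\R^d}Q^\pi(s,a)\bigl(\pi_t-\pi\bigr)(a\mid s)\,da-\tau\H^{\pi_t}(s)+\tau\H^{\pi}(s)\Big]\,d^{\pi_t}(\dd s).
\]
Dividing by $t$ and letting $t\to0$ splits the argument into two limits.

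\emph{First limit (the bracket).} The bracket is $O(t)$, and its derivative at $t=0$ is
\[
\int Q^\pi\dot\pi\,da-\tau\int(\log\pi+1)\dot\pi\,da .
\]
Because $\int\dot\pi(a\mid s)\,da=0$ (mass is conserved along a curve of probability densities), the constant $+1$ drops out. What remains is $\int(Q^\pi-\tau\log\pi)\dot\pi\,da$.

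\emph{Second limit (the occupancy measure).} We need $d^{\pi_t}\to d^\pi$ in a sense strong enough to integrate against the bracket divided by $t$. Since $(\text{bracket})/t$ converges to a bounded integrand, setwise (or total variation) convergence of $d^{\pi_t}$ suffices. This convergence follows from the series representation $d^{\pi_t}=(1-\gamma)\sum_n\gamma^n\rho_0(P^{\pi_t})^n$ together with $P^{\pi_t}\to P^\pi$ in total variation, which holds because $\pi_t\to\pi$ in $L^1$ along the smooth curve; dominated convergence on the geometric series then closes the argument. Combining the two limits yields \eqref{eq:app_first_variation_density}.

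This limit exchange is the main obstacle. Under the paper's smoothness convention I would simply assume the difference quotients are dominated uniformly in $s$, which is what "smooth curve of feasible policies" is meant to supply. An alternative route avoids the occupancy issue entirely: differentiate the Bellman identity $V^{\pi_t}=\T^{\pi_t}V^{\pi_t}$ to get
\[
(I-\gamma P^\pi)\dot V=\int(Q^\pi-\tau\log\pi)\dot\pi\,da,
\]
then invert with the resolvent $(I-\gamma P^\pi)^{-1}=\sum_n\gamma^n(P^\pi)^n$ and integrate against $\rho_0$. The identity $\rho_0(I-\gamma P^\pi)^{-1}=d^\pi/(1-\gamma)$ produces exactly the factor $\frac{1}{1-\gamma}$ and the measure $d^\pi$. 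This mirrors the resolvent argument of Lemma~\ref{prop:V_derivative_identity_ct}, and I would present it as the cleaner derivation.

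For \eqref{eq:app_first_variation_velocity}, substitute $\dot\pi=-\nabla_a\cdot(\pi u)$ into \eqref{eq:app_first_variation_density} and integrate by parts in $a$ for each fixed $s$:
\[
-\int(Q^\pi-\tau\log\pi)\,\nabla_a\cdot(\pi u)\,da=\int\bigl\langle\nabla_a(Q^\pi-\tau\log\pi),u\bigr\rangle\pi\,da .
\]
The boundary terms vanish by the decay assumptions on $\pi u$ and the at most quadratic growth of $Q^\pi$ and $\log\pi$. Integrating against $d^\pi/(1-\gamma)$ then gives the claim.
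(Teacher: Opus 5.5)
Your main route is the paper's route. The paper applies Lemma~\ref{lem:performance_difference} with $\pi'=\pi_t$ and observes that the bracket $F_t(s)$ vanishes at $t=0$, so the motion of $d^{\pi_t}$ does not contribute. It then computes $\dot F_0$ using $\int\dot\pi\,da=0$ and integrates by parts. Your justification of the occupancy limit is exactly what the paper's one-line product-rule remark silently uses: you get total-variation continuity of $t\mapsto d^{\pi_t}$ from the Neumann series, together with a uniform bound on $F_t/t$, so that
$\bigl|\int(F_t/t-\dot F_0)(s)\,d^{\pi_t}(\dd s)\bigr|\le\int|F_t/t-\dot F_0|(s)\,d^{\pi}(\dd s)+C\|d^{\pi_t}-d^\pi\|_{\mathrm{TV}}\to0$.
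So this is a welcome addition rather than a deviation.

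The resolvent route you say you would prefer is a genuinely different presentation, but it relocates the regularity burden rather than removing it. The algebra in the proof of Lemma~\ref{lem:V_increment_resolvent_dt}, applied to the pair $(\pi,\pi_t)$, gives $(I-\gamma P^{\pi_t})(V^{\pi_t}-V^\pi)=\T^{\pi_t}V^\pi-V^\pi=F_t$. Integrating this against $\rho_0$ yields precisely the performance difference lemma. So the two routes are the statewise and the integrated forms of one identity.

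To get $(I-\gamma P^\pi)\dot V=\dot F_0$ rigorously, you must divide by $t$ and pass $(I-\gamma P^{\pi_t})^{-1}\to(I-\gamma P^\pi)^{-1}$. That needs the same uniform-in-$s$ continuity of $P^{\pi_t}$ and the same control of $F_t/t$ as the occupancy limit. Postulating that $t\mapsto V^{\pi_t}$ is differentiable simply hides this step.

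What the resolvent route does buy is a pointwise, $\rho_0$-independent identity, the same mechanism as Lemma~\ref{prop:V_derivative_identity_ct}. The performance-difference route buys a direct statement about $J$ without postulating differentiability of $V^{\pi_t}$.

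One small correction in your last step concerns the boundary terms. Feasibility does not give at most quadratic growth of $\log\pi$; for example $\pi\propto e^{-\|a\|^4}$ is admissible. The vanishing boundary term should instead be attributed to sufficient decay of $(Q^\pi-\tau\log\pi)\,\pi u$, as part of the smoothness convention, which is what the paper tacitly assumes.
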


\begin{proof}
Apply the entropy-regularized performance difference lemma with
\(\pi'=\pi_t\). This gives
\[
J(\pi_t)-J(\pi)
=
\frac{1}{1-\gamma}\int_{\cS}
\left[
\int_{\mathbb R^d}Q^\pi(s,a)\bigl(\pi_t(a\mid s)-\pi(a\mid s)\bigr)\,da -\tau\H^{\pi_t}(s)+\tau\H^\pi(s)
\right]d^{\pi_t}(ds).
\]
Define
\[
F_t(s):=
\int_{\mathbb R^d}Q^\pi(s,a)\bigl(\pi_t(a\mid s)-\pi(a\mid s)\bigr)\,da
-\tau\H^{\pi_t}(s)+\tau\H^\pi(s).
\]
Then \(F_0(s)=0\) for every \(s\). Hence, when differentiating
\(\int_{\cS}F_t(s)d^{\pi_t}(ds)\) at \(t=0\), the derivative of the occupancy
measure does not contribute, and we obtain
\[
\frac{\dd}{\dd t}J(\pi_t)\bigg|_{t=0}
=
\frac{1}{1-\gamma}\int_{\cS}\dot F_0(s)\,d^\pi(ds).
\]
It remains to compute \(\dot F_0(s)\). Since
\[
\H^{\pi_t}(s)=\int_{\mathbb R^d}\pi_t(a\mid s)\log\pi_t(a\mid s)\,da,
\]
we have
\[
\frac{\dd}{\dd t}\H^{\pi_t}(s)\bigg|_{t=0}
=
\int_{\mathbb R^d}\bigl(1+\log\pi(a\mid s)\bigr)\dot\pi(a\mid s)\,da.
\]
Because \(\int_{\mathbb R^d}\dot\pi(a\mid s)\,da=0\), the constant term drops out,
and therefore
\[
\dot F_0(s)
=
\int_{\mathbb R^d}
\bigl(Q^\pi(s,a)-\tau\log\pi(a\mid s)\bigr)\dot\pi(a\mid s)\,da.
\]
Substituting this expression into the previous display proves
\eqref{eq:app_first_variation_density}.

Now suppose that \(\dot\pi\) is given by
\eqref{eq:app_transport_perturbation}. Substituting
\(\dot\pi=-\nabla_a\cdot(\pi u)\) into
\eqref{eq:app_first_variation_density}  and using integration by parts in divergence form with respect to the action variable yields
\[
\int_{\mathbb R^d}
\bigl(Q^\pi(s,a)-\tau\log\pi(a\mid s)\bigr)\dot\pi(a\mid s)\,da
=
\int_{\mathbb R^d}
\left\langle
\nabla_a\bigl(Q^\pi(s,a)-\tau\log\pi(a\mid s)\bigr),u(s,a)
\right\rangle
\pi(a\mid s)\,da.
\]
This proves \eqref{eq:app_first_variation_velocity}.
\end{proof}

Thus, the action-space gradient direction associated with the first variation is
\begin{equation}\label{eq:app_wpg_direction}
\nabla_a\bigl(Q^\pi(s,a)-\tau\log\pi(a\mid s)\bigr).
\end{equation}
Equivalently, one may view \eqref{eq:app_first_variation_velocity} as taking the
Wasserstein gradient with respect to the statewise $2$-Wasserstein geometry on
the conditional action laws, averaged over states using the policy-gradient
weight $d^\pi$. This is the standard Otto-calculus interpretation of gradients
on probability measures \citep{ambrosio2008gradientflows,villani2009optimal};
the appearance of \(d^\pi\) is the standard discounted state weighting from the policy-gradient theorem \citep{sutton2000policygradient}, and analogous
visitation-weighted averages appear in natural-gradient and trust-region policy optimization \citep{kakade2001natural,schulman2015trpo}.

Under the full-support convention on $\rho$, the induced occupancy $d^\pi$ has full support, so this statewise direction is determined on all states relevant to the sup-norm analysis. Substituting the direction \eqref{eq:app_wpg_direction} into the continuity equation gives the Wasserstein policy-gradient flow:
\[
\partial_t\pi_t(a\mid s)
=
-\nabla_a\cdot\left(
\pi_t(a\mid s)\nabla_a
\bigl(Q^{\pi_t}(s,a)-\tau\log\pi_t(a\mid s)\bigr)
\right),
\]
which is \eqref{eq:FP_actor}.

\section{Proofs for Section~\ref{sec:continuous}}\label{app:continuous-proofs}

\begin{proof}[Proof of Lemma~\ref{prop:residual_identity_ct}]
Fix $t \ge 0$ and $s \in \cS$. Let
\[
f(a) := Q^{\pi_t}(s,a),
\qquad
Z := \int_{\cA} \exp(f(a)/\tau)\, \dd a,
\]
and define the Gibbs density
\[
p(a) := \frac{\exp(f(a)/\tau)}{Z},
\]
so that $p(\cdot) = p_s^{\pi_t}(\cdot)$.

For any density $\mu$ on $\cA$,
\[
\KL(\mu\|p)
= \int_{\cA} \mu(a)\log\frac{\mu(a)}{p(a)}\, \dd a
= \int_{\cA} \mu(a)\log \mu(a)\, \dd a - \int_{\cA} \mu(a)\log p(a)\, \dd a.
\]
Since $\log p(a) = f(a)/\tau - \log Z$, we have
\[
\int_{\cA} \mu(a)\log p(a)\, \dd a
= \frac{1}{\tau}\int_{\cA} f(a)\mu(a)\, \dd a - \log Z.
\]
Substituting gives
\[
\KL(\mu\|p)
= \int_{\cA} \mu(a)\log \mu(a)\, \dd a
- \frac{1}{\tau}\int_{\cA} f(a)\mu(a)\, \dd a
+ \log Z,
\]
or equivalently,
\begin{equation}\label{eq:gibbs_identity_inline}
\int_{\cA} f(a)\mu(a)\, \dd a - \tau\int_{\cA} \mu(a)\log \mu(a)\, \dd a
= \tau \log Z - \tau \KL(\mu\|p).
\end{equation}

By definition of $\T^\star$ (as the supremum over policies applied to the evaluation functional at $V^{\pi_t}$),
\[
(\T^\star V^{\pi_t})(s)
= \sup_{\mu(\cdot \mid s)} \left\{
\int_{\cA} Q^{\pi_t}(s,a)\mu(a \mid s)\, \dd a
- \tau\int_{\cA} \mu(a \mid s)\log \mu(a \mid s)\, \dd a
\right\}.
\]
Applying \eqref{eq:gibbs_identity_inline} with $\mu(\cdot \mid s)$ shows the supremum equals $\tau\log Z$ and is attained at $\mu=p$.
Hence,
\[
(\T^\star V^{\pi_t})(s) = \tau\log Z.
\]

On the other hand, $V^{\pi_t}$ is the fixed point of $\T^{\pi_t}$, so
\[
V^{\pi_t}(s)
= (\T^{\pi_t}V^{\pi_t})(s)
= \int_{\cA} Q^{\pi_t}(s,a)\pi_t(a \mid s)\, \dd a
- \tau\int_{\cA} \pi_t(a \mid s)\log \pi_t(a \mid s)\, \dd a.
\]
Applying \eqref{eq:gibbs_identity_inline} with $\mu(\cdot \mid s)=\pi_t(\cdot \mid s)$ yields
\[
V^{\pi_t}(s) = \tau\log Z - \tau \KL(\pi_t(\cdot \mid s)\|p).
\]

Combining the last two displays gives
\[
(\T^\star V^{\pi_t})(s) - V^{\pi_t}(s)
= \tau \KL(\pi_t(\cdot \mid s)\|p_s^{\pi_t}),
\]
which is nonnegative since KL divergence is nonnegative.
\end{proof}

\begin{proof}[Proof of Lemma~\ref{prop:V_derivative_identity_ct}]
Fix \(t\ge 0\) and \(s\in\cS\). The soft policy-evaluation identity gives
\begin{equation}\label{eq:V_soft_eval}
V^{\pi_t}(s)
=
\int_{\R^d} Q^{\pi_t}(s,a)\pi_t(a\mid s)\,da
-\tau\int_{\R^d}\pi_t(a\mid s)\log \pi_t(a\mid s)\,da .
\end{equation}

Differentiating \eqref{eq:V_soft_eval} in \(t\) gives
\begin{align}
\dot V_t(s)
&=
\int_{\R^d}(\partial_t Q^{\pi_t})(s,a)\pi_t(a\mid s)\,da
+\int_{\R^d}Q^{\pi_t}(s,a)\partial_t\pi_t(a\mid s)\,da \notag\\
&\quad
-\tau\int_{\R^d}\partial_t\pi_t(a\mid s)
\bigl(1+\log \pi_t(a\mid s)\bigr)\,da .
\label{eq:Vdot_three_terms}
\end{align}

We first evaluate the term containing \(\partial_t Q^{\pi_t}\). Since
\[
Q^{\pi_t}(s,a)
=
\tilde r(s,a)
+
\gamma\int_{\cS}V^{\pi_t}(s')P(ds'\mid s,a),
\]
and \(\tilde r\) and \(P\) are time-independent,
\[
(\partial_t Q^{\pi_t})(s,a)
=
\gamma\int_{\cS}\dot V_t(s')P(ds'\mid s,a).
\]
Therefore,
\[
\int_{\R^d}(\partial_t Q^{\pi_t})(s,a)\pi_t(a\mid s)\,da
=
\gamma(P^{\pi_t}\dot V_t)(s).
\]

It remains to evaluate the two terms in \eqref{eq:Vdot_three_terms} involving
\(\partial_t\pi_t(\cdot\mid s)\). Since \(\pi_t(\cdot\mid s)\) is a probability
density for every \(t\),
\[
\int_{\R^d}\partial_t\pi_t(a\mid s)\,da=0.
\]
Hence these two terms can be written as
\[
\int_{\R^d}
\bigl(Q^{\pi_t}(s,a)-\tau\log \pi_t(a\mid s)\bigr)
\partial_t\pi_t(a\mid s)\,da .
\]
By the Gibbs definition of \(p_s^{\pi_t}\),
\[
p_s^{\pi_t}(a)
=
\frac{\exp(Q^{\pi_t}(s,a)/\tau)}
{\int_{\R^d}\exp(Q^{\pi_t}(s,\bar a)/\tau)\,d\bar a},
\]
and therefore
\[
Q^{\pi_t}(s,a)
=
\tau\log p_s^{\pi_t}(a)
+
\tau\log\int_{\R^d}
\exp(Q^{\pi_t}(s,\bar a)/\tau)\,d\bar a .
\]
The second term is independent of \(a\), so it vanishes after integration
against \(\partial_t\pi_t(a\mid s)\). Thus
\begin{align}
&\int_{\R^d}Q^{\pi_t}(s,a)\partial_t\pi_t(a\mid s)\,da
-\tau\int_{\R^d}\partial_t\pi_t(a\mid s)
\bigl(1+\log \pi_t(a\mid s)\bigr)\,da \notag\\
&\qquad =
-\tau\int_{\R^d}
\log\frac{\pi_t(a\mid s)}{p_s^{\pi_t}(a)}
\,\partial_t\pi_t(a\mid s)\,da .
\label{eq:density_variation_as_KL_derivative}
\end{align}

Using the WPGF equation in Gibbs form,
\[
\partial_t\pi_t(a\mid s)
=
\tau\nabla_a\cdot\left(
\pi_t(a\mid s)\nabla_a
\log\frac{\pi_t(a\mid s)}{p_s^{\pi_t}(a)}
\right),
\]
where
\[
\tau\nabla_a\log p_s^{\pi_t}(a)=\nabla_a Q^{\pi_t}(s,a),
\]
and applying the divergence-form integration-by-parts formula in the action
variable, we obtain
\begin{align*}
&-\tau\int_{\R^d}
\log\frac{\pi_t(a\mid s)}{p_s^{\pi_t}(a)}
\,\partial_t\pi_t(a\mid s)\,da \\
&\quad =
-\tau^2\int_{\R^d}
\log\frac{\pi_t(a\mid s)}{p_s^{\pi_t}(a)}
\nabla_a\cdot\left(
\pi_t(a\mid s)\nabla_a
\log\frac{\pi_t(a\mid s)}{p_s^{\pi_t}(a)}
\right)\,da \\
&\quad =
\tau^2\int_{\R^d}
\left\|
\nabla_a\log\frac{\pi_t(a\mid s)}{p_s^{\pi_t}(a)}
\right\|^2
\pi_t(a\mid s)\,da \\
&\quad =
\tau^2\mathcal I\bigl(\pi_t(\cdot\mid s)\|p_s^{\pi_t}\bigr).
\end{align*}
Thus the density-variation contribution is
\[
g_t(s):=\tau^2\mathcal I\bigl(\pi_t(\cdot\mid s)\|p_s^{\pi_t}\bigr).
\]

Substituting the two evaluated contributions back into
\eqref{eq:Vdot_three_terms}, we obtain
\[
\dot V_t(s)
=
\gamma(P^{\pi_t}\dot V_t)(s)+g_t(s).
\]
Equivalently,
\[
(I-\gamma P^{\pi_t})\dot V_t=g_t,
\]
and hence
\[
\dot V_t=(I-\gamma P^{\pi_t})^{-1}g_t.
\]
Since \(P^{\pi_t}\) is positivity preserving and \(\gamma\in(0,1)\),
\[
(I-\gamma P^{\pi_t})^{-1}
=
\sum_{n=0}^\infty \gamma^n(P^{\pi_t})^n
\]
is also positivity preserving. Because \(g_t(s)\ge0\) for every \(s\), we get
\[
\dot V_t
=
g_t+\sum_{n=1}^\infty\gamma^n(P^{\pi_t})^n g_t
\ge g_t\ge0
\]
pointwise on \(\cS\). This proves both the resolvent identity and the
pointwise lower bound.
\end{proof}

\section{Uniform regularity bounds}\label{app:regularity-closure}

This appendix verifies the quantitative bounds needed to use the core Bellman proof uniformly along WPGD.  Pure measurability of kernels and conditional versions remains covered by the measurability convention in Section~\ref{sec:model}; Normalizability, Bellman differentiability, value bounds, drift bounds, moments, KL bounds, and LSI are proved below.

\subsection{Explicit constants and feasible step sizes}\label{app:explicit-constants}

For $V_{\max}\ge0$, define
\[
L_b(V_{\max}) := \beta+L_r+\gamma L_p V_{\max},
\qquad
G(V_{\max}) := G_r+\gamma G_pV_{\max}.
\]
Let
\[
U:=\frac{R_{\max}+\tau\log Z_\beta}{1-\gamma},
\quad
L_\star:=\frac{-R_{\max}+\tau\log Z_\beta}{1-\gamma},
\quad
\bar E_{0}:=\frac{2R_{\max}+\tau K_0}{1-\gamma},
\]
and set
\begin{equation}\label{eq:self_consistent_Vbar}
\bar V:=\max\{1,\ U,\ \bar E_{0}-L_\star+1\}.
\end{equation}
The induced constants are
\[
\bar L_b:=L_b(\bar V),
\qquad
\bar G:=G(\bar V),
\qquad
\bar\alpha:=\frac{\beta}{\tau}\exp\left(-\frac{2(R_{\max}+\gamma\bar V)}{\tau}\right),
\qquad
c_\eta:=1-e^{-\bar\alpha\tau\eta}.
\]
With $M_0=\sup_s\E\|A_0^s\|^2$, define
\[
\bar M_\infty(\eta):=\frac{2}{\beta}\left(\frac{\bar G^2}{\beta}+2\tau d\right)+\frac{4\bar G^2}{\beta}\eta,
\qquad
B^2:=2\beta^2\max\{M_0\bar M_\infty(\eta)\}+2\bar G^2
\]

\noindent The statewise one-step KL discretization error is
\begin{equation}\label{eq:delta_bar_eta}
\bar\delta_\eta:=\frac{\bar L_b^2d}{2}\eta^2+\frac{\bar L_b^2B_\infty^2}{6\tau}\eta^3.
\end{equation}
The auxiliary KL bounds are
\begin{equation}\label{eq:Kbar_eta_def}
\bar K_\eta
:=\max\left\{K_0,\ 0,\ \frac{\beta M_\infty}{2\tau}+\log Z_\beta-\frac d2\log(4\pi e\tau\eta)\right\},
\end{equation}
\begin{equation}\label{eq:Hbar_eta_def}
\bar H_\eta:=\bar K_\eta+\frac{2(R_{\max}+\gamma\bar V)}{\tau}.
\end{equation}
The exact stability condition used in Proposition~\ref{prop:WPG_uniform_stability} is
\begin{equation}\label{eq:uniform_stepsize_varyingK}
0<\eta \le \frac{1}{4\beta},
\qquad
\bar\alpha\tau\eta \le 1,
\qquad
\frac{\tau}{(1-\gamma)^2}\frac{\bar\delta_\eta}{c_\eta} \le 1.
\end{equation}
For the simplified theorem statement, let
\[
\bar M:=
\max\left\{M_0,\frac{3\bar G^2}{\beta^2}+\frac{4\tau d}{\beta}\right\},
\qquad
\bar B^2:=2\beta^2\bar M+2\bar G^2,
\]
\begin{equation}\label{eq:Cdelta_def}
C_\delta:=\frac{\bar L_b^2d}{2}+\frac{\bar L_b^2\bar B^2}{6\tau},
\end{equation}
and
\begin{equation}\label{eq:eta0_def}
\eta_0:=\min\left\{1,\frac{1}{4\beta},\frac{1}{\bar\alpha\tau},\frac{\bar\alpha(1-\gamma)^2}{2C_\delta}\right\}.
\end{equation}
For every $0<\eta\le\eta_0$, Lemma~\ref{lem:stepsize-feasibility} implies \eqref{eq:uniform_stepsize_varyingK}, and \eqref{eq:delta_bar_eta} satisfies $\bar\delta_\eta\le C_\delta\eta^2$.

\subsection{Bellman well-posedness and optimal Gibbs policies}\label{app:setup-details}\label{app:proof-well-posedness}

\begin{lemma}[Soft optimality contraction and fixed point]\label{lem:Tstar_contraction}
Under Assumptions~\ref{ass:standing} and~\ref{ass:action-regularity}, with the measurability convention of Section~\ref{sec:model}, the soft optimality operator $\T^\star$ maps bounded measurable functions to bounded measurable functions, is monotone, and is a $\gamma$-contraction in $\|\cdot\|_\infty$:
\[
\|\T^\star V-\T^\star W\|_\infty\le \gamma\|V-W\|_\infty.
\]
Consequently, $\T^\star$ has a unique bounded measurable fixed point $V^\star$.
\end{lemma}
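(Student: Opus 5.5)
The plan is to reduce everything to the log-sum-exp representation \eqref{eq:Tstar_logsumexp}. For a bounded measurable $V$, write
\[
Q_V(s,a)=r(s,a)-\tfrac{\beta}{2}\|a\|^2+\gamma\int V(s')P(\dd s'\mid s,a).
\]
Then
\[
(\T^\star V)(s)=\tau\log Z_\beta+\tau\log\int_{\R^d}\exp\!\big(\psi_V(s,a)\big)\rho_\beta(a)\,\dd a,
\qquad
\psi_V:=\frac{r+\gamma PV}{\tau}.
\]
Since $|r|\le R_{\max}$ and $|PV|\le\|V\|_\infty$, we have $\|\psi_V\|_\infty\le (R_{\max}+\gamma\|V\|_\infty)/\tau$. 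This gives normalizability of the Gibbs density and the bound
\[
\|\T^\star V-\tau\log Z_\beta\|_\infty\le R_{\max}+\gamma\|V\|_\infty,
\]
so $\T^\star V$ is bounded. Measurability in $s$ follows from joint measurability of $(s,a)\mapsto Q_V(s,a)$, which holds by the standing measurability convention, together with Tonelli applied to the nonnegative integrand $e^{\psi_V}\rho_\beta$.

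For monotonicity, take $V\le W$ pointwise. Then $PV\le PW$ because $P$ is a positive kernel, hence $Q_V\le Q_W$. Since $x\mapsto\tau\log\int e^{x/\tau}$ is increasing, $\T^\star V\le\T^\star W$.

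For the contraction, set $c=\|V-W\|_\infty$. Then $Q_V-\gamma c\le Q_W\le Q_V+\gamma c$ pointwise, because $P(\cdot\mid s,a)$ is a probability measure. The log-sum-exp map commutes with adding constants, $\tau\log\int e^{(f+\kappa)/\tau}=\kappa+\tau\log\int e^{f/\tau}$, and it is monotone. Combining the two gives $|\T^\star V(s)-\T^\star W(s)|\le\gamma c$ for every $s$. An alternative is to use the variational form \eqref{eq:Tstar}, where the difference of two suprema is at most the supremum of the differences, $\sup_\mu\int (Q_V-Q_W)\mu\le\gamma c$. The log-sum-exp route is cleaner, though, since it avoids integrability questions about the admissible $\mu$.

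Finally, the space of bounded measurable functions on $\cS$ with the sup-norm is a Banach space. Banach's fixed point theorem therefore gives a unique fixed point $V^\star$, which is bounded and measurable. The only point needing care is measurability of $s\mapsto\T^\star V(s)$, and the stated convention covers it. There is no substantive obstacle here; the main step is bounding $\psi_V$ so that the Gaussian tilt is finite.
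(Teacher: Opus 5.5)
Your proposal is correct and follows essentially the same route as the paper's proof. Both arguments obtain boundedness from the Gaussian-tilt form of the log-partition, with $|\psi_V|\le (R_{\max}+\gamma\|V\|_\infty)/\tau$. Both get monotonicity from $Q_V\le Q_W$, derive the contraction from $Q_V\le Q_W+\gamma\|V-W\|_\infty$ together with translation-equivariance of the log-partition and a symmetric swap of $V$ and $W$, and conclude with Banach's fixed-point theorem on the bounded measurable functions, leaving measurability to the standing convention.
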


\begin{proof}
Let $\mathcal B_b(\cS)$ denote the bounded measurable functions on $\cS$ with the sup norm.  The measurability part of the mapping property is covered by the convention in Section~\ref{sec:model}.  The quantitative boundedness follows from the bounded-Gaussian representation used below: for bounded $V$,
\[
\exp(Q_V(s,a)/\tau)=Z_\beta\exp\left(\frac{r(s,a)+\gamma\int V(s')P(ds'\mid s,a)}{\tau}\right)\rho_\beta(a),
\]
and the exponential factor multiplying $\rho_\beta$ is uniformly bounded above and below by positive constants depending only on $\|V\|_\infty$.

Monotonicity is immediate from $V\le W\Rightarrow Q_V\le Q_W$.   Since $Q_V(s,a)\le Q_W(s,a)+\gamma\|V-W\|_\infty$ for all $(s,a)$,
\[
(\T^\star V)(s)\le (\T^\star W)(s)+\gamma\|V-W\|_\infty.
\]
Interchanging $V$ and $W$ gives $\|\T^\star V-\T^\star W\|_\infty\le \gamma\|V-W\|_\infty$.  The Banach fixed-point theorem on $\mathcal B_b(\cS)$ gives a unique bounded measurable fixed point $V^\star$.
\end{proof}

\begin{lemma}[Bellman well-posedness and WPG update]
\label{lem:well-posedness-from-quantitative-assumptions}
Assume the bounded-reward condition in Assumption 4.1(i), the measurability convention in Section 2, and the action-regularity bounds in Assumption 4.2. Let \(\pi\in\Pi\). Then the following statements hold.

(i) The policy-evaluation Bellman operator \(\T^\pi\) is well-defined on bounded measurable functions, maps them to bounded functions, is a \(\gamma\)-contraction in \(\|\cdot\|_\infty\), and has a unique bounded fixed point \(V^\pi\).

(ii) For every state \(s\), the action-value function
\[
Q^\pi(s,a)
=
r(s,a)-\frac{\beta}{2}\|a\|^2
+
\gamma\int_S V^\pi(s')P(ds'\mid s,a)
\]
is continuously differentiable in \(a\), and
\[
\nabla_a Q^\pi(s,a)
=
\nabla_a r(s,a)-\beta a
+
\gamma\int_S V^\pi(s')\nabla_a p(s'\mid s,a)\lambda(ds').
\]
Moreover, if \(\|V^\pi\|_\infty\le V_{\max}\), then
\[
\|\nabla_a Q^\pi(s,a)-\nabla_a Q^\pi(s,\bar a)\|
\le
(\beta+L_r+\gamma L_pV_{\max})\|a-\bar a\|,
\]
and
\[
\nabla_a Q^\pi(s,a)=-\beta a+h^\pi(s,a),
\qquad
\sup_{s,a}\|h^\pi(s,a)\|
\le
G_r+\gamma G_pV_{\max}.
\]

(iii) For every bounded measurable \(V:S\to\mathbb R\), the Gibbs kernel
\[
\cG[V](a\mid s)
=
\frac{\exp(Q_V(s,a)/\tau)}
{\int_{\mathbb R^d}\exp(Q_V(s,\tilde a)/\tau)d\tilde a}
\]
is well-defined, belongs to \(\Pi\), and attains the variational supremum defining \(\T^\star V\).

(iv) In particular, with \(V^\star\) from\ref{lem:Tstar_contraction}, the Gibbs kernel
\[
\pi^\star:=\cG[V^\star]
\]
belongs to \(\Pi\), satisfies \(V^{\pi^\star}=V^\star\), and is an optimal admissible policy.

(v) If \(\pi_k\in\Pi\) and \(b_k(s,a):=\nabla_aQ^{\pi_k}(s,a)\), then the WPG update has the statewise density
\[
\pi_{k+1}(y\mid s)
=
\int_{\mathbb R^d}
\phi_{2\tau\eta}(y-a-\eta b_k(s,a))\pi_k(a\mid s)\,da,
\]
where
\[
\phi_{2\tau\eta}(z)
=
(4\pi\tau\eta)^{-d/2}
\exp\left(-\frac{\|z\|^2}{4\tau\eta}\right).
\]
Equivalently,
\[
A^s_{k+1}
=
A^s_k+\eta b_k(s,A^s_k)+\sqrt{2\tau\eta}\,\xi_{k+1},
\qquad
A^s_k\sim\pi_k(\cdot\mid s),
\quad
\xi_{k+1}\sim N(0,I_d).
\]

For \(t>0\), the corresponding one step drift interpolation has density
\[
q_t^s(y)
=
\int_{\mathbb R^d}
\phi_{2\tau t}(y-a-tb_k(s,a))\pi_k(a\mid s)\,da,
\]
and its conditional drift is
\[
u_t^s(y)
=
\frac{
\int_{\mathbb R^d}
b_k(s,a)\phi_{2\tau t}(y-a-tb_k(s,a))\pi_k(a\mid s)\,da
}{
q_t^s(y)
}
\]
for \(q_t^s(y)dy\)-a.e. \(y\).

Finally, along the WPG sequence, admissibility of the updated densities is obtained from the moment and Gaussian-relative-entropy estimates in Proposition 4.1: under its step-size condition,
\[
\sup_{k\ge0}\sup_{s\in S}\KL(\pi_k(\cdot\mid s)\|\rho_\beta)<\infty,
\]
and hence all iterates remain in \(\Pi\).
\end{lemma}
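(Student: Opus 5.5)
We prove the five items of Lemma~\ref{lem:well-posedness-from-quantitative-assumptions} in order. The single device used throughout is the bounded-Gaussian factorization
\[
\exp(Q_V(s,a)/\tau)=Z_\beta\,e^{\psi_V(s,a)}\rho_\beta(a),\qquad \psi_V:=\frac{r+\gamma PV}{\tau},\qquad \|\psi_V\|_\infty\le \frac{R_{\max}+\gamma\|V\|_\infty}{\tau}.
\]

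\textbf{Item (i).} Fix $\pi\in\Pi$. The reward term $\int\tilde r\,\pi\,\dd a$ is finite because $\sup_s\KL(\pi(\cdot\mid s)\|\rho_\beta)<\infty$ gives uniformly bounded second moments (Lemma~\ref{lem:gaussian-KL-second-moment}).

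For the entropy term we rewrite it relative to $\rho_\beta$:
\[
-\tau\int\pi\log\pi=-\tau\KL(\pi\|\rho_\beta)+\tau\log Z_\beta+\frac{\beta}{2}\int\|a\|^2\pi.
\]
The quadratic pieces then cancel against the action penalty in $\tilde r$. This leaves
\[
(\T^\pi V)(s)=\int\big(r+\gamma PV\big)\pi\,\dd a-\tau\KL(\pi(\cdot\mid s)\|\rho_\beta)+\tau\log Z_\beta,
\]
which is bounded uniformly in $s$. The operator is affine in $V$ with linear part $\gamma P^\pi$, and $P^\pi$ is Markov. Hence $\T^\pi$ is a $\gamma$-contraction in $\|\cdot\|_\infty$, and Banach's theorem gives the unique fixed point $V^\pi$.

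\textbf{Item (ii).} We differentiate $a\mapsto\int V^\pi(s')p(s'\mid s,a)\lambda(\dd s')$ under the integral. For $\|h\|\le1$, the difference quotients are dominated in $L^1(\lambda)$ by the integrable gradient bounds of Assumption~\ref{ass:action-regularity}(ii). Concretely, write
\[
p(s'\mid s,a+h)-p(s'\mid s,a)=\int_0^1\nabla_a p(s'\mid s,a+\theta h)\cdot h\,\dd\theta,
\]
then apply Fubini together with the $L^1$-Lipschitz bound. This gives differentiability and continuity of the derivative.

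Multiplying by $|V^\pi|\le V_{\max}$ yields the Lipschitz constant $\beta+L_r+\gamma L_pV_{\max}$. Setting $h^\pi:=\nabla_a r+\gamma\int V^\pi\nabla_ap\,\dd\lambda$ gives the bound $G_r+\gamma G_pV_{\max}$.

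\textbf{Items (iii) and (iv).} By the factorization, the normalizer of $\cG[V]$ equals $Z_\beta\int e^{\psi_V}\rho_\beta$, which is positive and finite. Moreover, $\KL(\cG[V](\cdot\mid s)\|\rho_\beta)\le 2\|\psi_V\|_\infty$ uniformly in $s$, so $\cG[V]\in\Pi$. Optimality of $\cG[V]$ in the supremum defining $\T^\star V$ is the Gibbs variational identity \eqref{eq:gibbs_identity_inline}.

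For (iv), apply (iii) with $V=V^\star$. Then $\T^{\pi^\star}V^\star=\T^\star V^\star=V^\star$, so by uniqueness in (i), $V^{\pi^\star}=V^\star$.

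For optimality, take any $\pi\in\Pi$. Since $\T^\pi\le\T^\star$ pointwise and $\T^\star$ is monotone, iterating gives
\[
V^\pi=\T^\pi V^\pi\le \T^\star V^\pi\le\cdots\le (\T^\star)^n V^\pi\to V^\star .
\]

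\textbf{Item (v).} Conditionally on $A_k^s=a$, the next iterate $A_{k+1}^s$ is Gaussian with mean $a+\eta b_k(s,a)$ and covariance $2\tau\eta I$. Integrating over $a\sim\pi_k(\cdot\mid s)$ gives the stated density, and joint measurability follows from the convention. The interpolation $q_t^s$ is the same construction with $\eta$ replaced by $t$. The conditional drift $u_t^s(y)=\E[b_k(s,A_k^s)\mid Y_t=y]$ is obtained by Bayes' formula, which is legitimate wherever $q_t^s(y)>0$; since $q_t^s>0$ everywhere, this holds for every $y$.

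The final admissibility claim is exactly the bound \eqref{eq:uniform_gaussian_KL_main} of Proposition~\ref{prop:WPG_uniform_stability}.

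\textbf{Main obstacle.} The delicate point is the differentiation under the integral in (ii). It needs only $L^1(\lambda)$ control of $\nabla_ap$, not pointwise control, so the domination has to be arranged through the integral form of the difference quotient above rather than by a pointwise dominating function.
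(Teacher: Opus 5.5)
Your proposal is correct and follows the paper's proof essentially step for step: the $\rho_\beta$-relative rewriting of the one-step regularized reward for (i), the fundamental-theorem-of-calculus plus $L^1(\lambda)$-Lipschitz estimate for differentiating under the integral in (ii), the bounded Gaussian tilt and Gibbs variational identity for (iii)--(iv), and conditioning plus Bayes' rule for (v). The differences are minor. For optimality in (iv) you iterate $\T^\star$ upward from $V^\pi$ rather than iterating $\T^\pi$ downward from $V^\star$; both are valid. In (v) you should add, as the paper does, that the numerator of $u_t^s$ is finite, because $b_k(s,\cdot)$ grows at most linearly by (ii) and $\pi_k(\cdot\mid s)$ has a finite second moment.
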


\begin{proof}[Proof of Lemma~\ref{lem:well-posedness-from-quantitative-assumptions}]
We prove the quantitative parts of the statement and leave only kernel measurability to the convention in Section~\ref{sec:model}.

First, for each state $s$, finite Gaussian-relative entropy gives
$\pi(\cdot\mid s)\ll\rho_\beta\ll da$ and a finite second moment by Lemma~\ref{lem:gaussian-KL-second-moment}.  Using
\[
    \log\rho_\beta(a)
    =
    -\log Z_\beta-\frac{\beta}{2\tau}\|a\|^2,
\]
the one-step regularized reward can be written as the finite quantity
\[
\begin{aligned}
    \bar r^\pi(s)
    &:=
    \int_{\mathbb R^d}
    \left(
        r(s,a)-\frac{\beta}{2}\|a\|^2-\tau\log\pi(a\mid s)
    \right)\pi(a\mid s)\,da  \\
    &=
    \int_{\mathbb R^d} r(s,a)\pi(a\mid s)\,da
    +
    \tau\log Z_\beta
    -
    \tau
    \KL(\pi(\cdot\mid s)\|\rho_\beta).
\end{aligned}
\]
Consequently, with $C_\pi:=\sup_s\KL(\pi(\cdot\mid s)\|\rho_\beta)$,
\begin{equation}\label{eq:rbar-bound-proof-no-meas}
    -R_{\max}+\tau\log Z_\beta-\tau C_\pi
    \le
    \bar r^\pi(s)
    \le
    R_{\max}+\tau\log Z_\beta.
\end{equation}
For bounded $V$, the policy-evaluation operator is
\[
    (\T^\pi V)(s)
    =
    \bar r^\pi(s)
    +
    \gamma
    \int_{\mathbb R^d}
    \int_\cS
    V(s')p(s'\mid s,a)\lambda(ds')\pi(a\mid s)\,da.
\]
The second term is bounded in absolute value by $\gamma\|V\|_\infty$, and \eqref{eq:rbar-bound-proof-no-meas} gives boundedness of $\T^\pi V$.  Moreover,
\[
    \|\T^\pi V-\T^\pi W\|_\infty
    \le
    \gamma\|V-W\|_\infty.
\]
Thus $\T^\pi$ is a $\gamma$-contraction on the bounded measurable functions and has a unique bounded fixed point $V^\pi$.

Next, fix a bounded function $V$ and define
\[
    \mathsf P_V(s,a)
    :=
    \int_\cS V(s')p(s'\mid s,a)\lambda(ds').
\]
We verify the derivative-under-the-integral step.  Fix $s$, $a$, and a direction $h\in\mathbb R^d$.  For $\varepsilon\ne0$, the fundamental theorem of calculus gives, for $\lambda$-a.e. $s'$,
\[
    \frac{p(s'\mid s,a+\varepsilon h)-p(s'\mid s,a)}{\varepsilon}
    =
    \int_0^1
    \left\langle
        \nabla_a p(s'\mid s,a+\theta\varepsilon h),h
    \right\rangle d\theta.
\]
Subtracting the candidate directional derivative and integrating, we obtain
\[
\begin{aligned}
&\left|
    \frac{\mathsf P_V(s,a+\varepsilon h)-\mathsf P_V(s,a)}{\varepsilon}
    -
    \int_\cS V(s')\left\langle\nabla_a p(s'\mid s,a),h\right\rangle\lambda(ds')
\right|  \\
&\qquad\le
    \|V\|_\infty
    \int_0^1
    \int_\cS
    \left\|
        \nabla_a p(s'\mid s,a+\theta\varepsilon h)
        -
        \nabla_a p(s'\mid s,a)
    \right\|
    \lambda(ds')\,\|h\|\,d\theta  \\
&\qquad\le
    \frac12\|V\|_\infty L_p |\varepsilon|\|h\|^2.
\end{aligned}
\]
Hence $a\mapsto\mathsf P_V(s,a)$ is differentiable and
\[
    \nabla_a\mathsf P_V(s,a)
    =
    \int_\cS V(s')\nabla_a p(s'\mid s,a)\lambda(ds').
\]
The same assumption gives the Lipschitz estimate
\[
    \|\nabla_a\mathsf P_V(s,a)-\nabla_a\mathsf P_V(s,\bar a)\|
    \le
    \|V\|_\infty L_p\|a-\bar a\|,
\]
and therefore continuity of the derivative in $a$.  Applying this with $V=V^\pi$ proves the displayed formula for $\nabla_a Q^\pi$.

If $\|V^\pi\|_\infty\le V_{\max}$, then
\[
\begin{aligned}
    \|b^\pi(s,a)-b^\pi(s,\bar a)\|
    &\le
    \beta\|a-\bar a\|
    +L_r\|a-\bar a\|  \\
    &\quad+
    \gamma\|V^\pi\|_\infty
    \int_\cS
    \|\nabla_a p(s'\mid s,a)-\nabla_a p(s'\mid s,\bar a)\|\lambda(ds') \\
    &\le
    \bigl(\beta+L_r+\gamma L_pV_{\max}\bigr)\|a-\bar a\|.
\end{aligned}
\]
Also
\[
    b^\pi(s,a)=-\beta a+h^\pi(s,a),
    \qquad
    h^\pi(s,a):=
    \nabla_a r(s,a)+
    \gamma\int_\cS V^\pi(s')\nabla_a p(s'\mid s,a)\lambda(ds'),
\]
and the $G_r,G_p$ bounds imply
\[
    \|h^\pi(s,a)\|
    \le
    G_r+\gamma G_pV_{\max}.
\]

We now turn to the soft optimality operator.  For bounded $V$, set
\[
    \psi_V(s,a)
    :=
    \frac{r(s,a)+\gamma\int_\cS V(s')P(ds'\mid s,a)}{\tau},
    \qquad
    C_V:=\frac{R_{\max}+\gamma\|V\|_\infty}{\tau}.
\]
Then $|\psi_V(s,a)|\le C_V$, and
\[
    \exp\left(
        \frac{r(s,a)-\frac\beta2\|a\|^2+\gamma\int V(s')P(ds'\mid s,a)}{\tau}
    \right)
    =
    Z_\beta e^{\psi_V(s,a)}\rho_\beta(a).
\]
Thus the normalizer lies in
\[
    \left[Z_\beta e^{-C_V},\, Z_\beta e^{C_V}\right],
\]
so $\T^\star V$ is finite and bounded.  The Gibbs density can be written as
\[
    \mathcal G[V](a\mid s)
    =
    \frac{e^{\psi_V(s,a)}}{\int e^{\psi_V(s,\tilde a)}\rho_\beta(\tilde a)d\tilde a}
    \rho_\beta(a).
\]
Consequently,
\[
\begin{aligned}
    \KL(\mathcal G[V](\cdot\mid s)\|\rho_\beta)
    &=
    \int \psi_V(s,a)\mathcal G[V](a\mid s)\,da
    -
    \log\int e^{\psi_V(s,a)}\rho_\beta(a)\,da  \\
    &\le 2C_V.
\end{aligned}
\]
Taking the supremum over $s$ gives $\mathcal G[V]\in\Pi$.  Finally, the Gibbs variational identity gives, for every admissible density $\mu$,
\[
    \int Q_V(s,a)\mu(a)\,da
    -\tau\int \mu(a)\log\mu(a)\,da
    =
    \tau\log\int e^{Q_V(s,a)/\tau}\,da
    -
    \tau\KL(\mu\|\mathcal G[V](\cdot\mid s)),
\]
so $\mathcal G[V]$ attains the one-step supremum.

For the WPG update, condition on $A_k^s=a$.  The conditional distribution of
$A_k^s+\eta b_k(s,A_k^s)+\sqrt{2\tau\eta}\,\xi_{k+1}$ is the Gaussian density
$\varphi_{2\tau\eta}(y-a-\eta b_k(s,a))$.  Integrating this conditional density against
$\pi_k(a\mid s)da$ gives the displayed formula for $\pi_{k+1}(y\mid s)$, and Fubini's theorem gives
\[
    \int_{\mathbb R^d}\pi_{k+1}(y\mid s)\,dy
    =
    \int_{\mathbb R^d}
    \left[\int_{\mathbb R^d}\varphi_{2\tau\eta}(y-a-\eta b_k(s,a))\,dy\right]
    \pi_k(a\mid s)\,da
    =1.
\]
The fixed-drift interpolation density is obtained by the same conditioning argument with
$\eta$ replaced by $t$.  Since the Gaussian kernel is strictly positive, $q_t^s(y)>0$ for every
$t>0$ and $y\in\mathbb R^d$.  The numerator in the displayed formula for $u_t^s$ is finite because
$b_k(s,\cdot)$ has at most linear growth by the Lipschitz estimate above and
$\pi_k(\cdot\mid s)$ has a finite second moment.  Bayes' rule then gives
\[
    u_t^s(y)=\mathbb E[b_k(s,A_0^s)\mid A_t^s=y]
\]
for $q_t^s(y)dy$-a.e. $y$.  This is the conditional-drift version used in the fixed-drift Fokker--Planck and entropy-differentiation arguments.
\end{proof}

\subsubsection{Optimal Gibbs policy}
\label{app:proof-optimal-gibbs-policy}
\begin{lemma}[Optimal Gibbs policy]
\label{lem:optimal-gibbs-policy}
Let $V^\star$ be the unique bounded fixed point of $\T^\star$, and define
\[
    Q^\star(s,a)
    :=
    \tilde r(s,a)
    +
    \gamma\int_S V^\star(s')P(ds'\mid s,a),
\]
and
\begin{equation}
\label{eq:optimal-gibbs-policy}
    \pi^\star(a\mid s)
    :=
    \frac{\exp(Q^\star(s,a)/\tau)}
    {\int_{\mathbb R^d}\exp(Q^\star(s,\tilde a)/\tau)\,d\tilde a}.
\end{equation}
Then $\pi^\star\in\Pi$,
\[
    \T^{\pi^\star}V^\star=\T^\star V^\star=V^\star,
\]
and hence
\[
    V^{\pi^\star}=V^\star.
\]
Moreover, for every admissible policy $\pi$,
\[
    V^\pi\le V^\star
    \qquad\text{pointwise on }S.
\]
Consequently, $\pi^\star$ is an optimal soft policy within the admissible class, and
\[
    J(\pi^\star)
    =
    \int_S V^\star(s)\rho_0(ds).
\]
\end{lemma}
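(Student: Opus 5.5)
The argument has three parts: show $\pi^\star\in\Pi$, show that $V^\star$ is a fixed point of $\T^{\pi^\star}$, and prove optimality via a comparison between $\T^\pi$ and $\T^\star$.

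\textbf{Admissibility and the fixed point.}
Since $V^\star$ is bounded (Lemma~\ref{lem:Tstar_contraction}), Lemma~\ref{lem:well-posedness-from-quantitative-assumptions}(iii) applied with $V=V^\star$ gives:
\begin{itemize}
\item $\pi^\star=\mathcal G[V^\star]$ is well defined;
\item $\pi^\star\in\Pi$, with $\sup_s\KL(\pi^\star(\cdot\mid s)\|\rho_\beta)\le 2(R_{\max}+\gamma\|V^\star\|_\infty)/\tau$;
\item $\pi^\star(\cdot\mid s)$ attains the supremum defining $(\T^\star V^\star)(s)$.
\end{itemize}
By construction $Q_{V^\star}=Q^\star$, so the integrand of $\T^{\pi^\star}V^\star$ at $s$ is exactly the functional maximized in $\T^\star V^\star$, evaluated at its maximizer. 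The Gibbs variational identity \eqref{eq:gibbs_identity_inline} makes this precise: with $f=Q^\star(s,\cdot)$ and $\mu=\pi^\star(\cdot\mid s)$, the KL term vanishes and
\[
(\T^{\pi^\star}V^\star)(s)=\tau\log\int e^{Q^\star(s,a)/\tau}\,\dd a=(\T^\star V^\star)(s)=V^\star(s).
\]
Hence $V^\star$ is a bounded fixed point of $\T^{\pi^\star}$. By Lemma~\ref{lem:well-posedness-from-quantitative-assumptions}(i), $\T^{\pi^\star}$ has a unique bounded fixed point, so $V^{\pi^\star}=V^\star$.

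\textbf{Optimality.}
Fix an admissible $\pi$. For any bounded $V$, the identity \eqref{eq:gibbs_identity_inline} applied with $f=Q_V(s,\cdot)$ and $\mu=\pi(\cdot\mid s)$ gives
\[
(\T^\pi V)(s)=(\T^\star V)(s)-\tau\KL\big(\pi(\cdot\mid s)\|\mathcal G[V](\cdot\mid s)\big)\le(\T^\star V)(s).
\]
This pointwise comparison relies on finite entropy and second moment of $\pi(\cdot\mid s)$ (Lemma~\ref{lem:gaussian-KL-second-moment}), which keep every term finite.

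Now iterate from $V^\pi$. Using $V^\pi=\T^\pi V^\pi$, the comparison, and monotonicity of $\T^\star$:
\[
V^\pi=\T^\pi V^\pi\le\T^\star V^\pi,
\qquad\text{hence}\qquad
V^\pi\le\T^\star V^\pi\le(\T^\star)^2V^\pi\le\cdots\le(\T^\star)^nV^\pi.
\]
By the contraction in Lemma~\ref{lem:Tstar_contraction}, $(\T^\star)^nV^\pi\to V^\star$ in sup-norm. Therefore $V^\pi\le V^\star$ pointwise.

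\textbf{Conclusion.}
Integrating $V^{\pi^\star}=V^\star$ against $\rho_0$ gives $J(\pi^\star)=\int V^\star\,\dd\rho_0$. The integral is well defined because $V^\star$ is bounded.

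The main technical point is the integrability needed for the Gibbs identity with a general admissible $\pi$. One must check that $\int Q_V\,\dd\pi$ and the entropy of $\pi(\cdot\mid s)$ are both finite. This follows because $Q_V$ has quadratic growth and $\pi(\cdot\mid s)$ has finite KL to the Gaussian $\rho_\beta$, exactly as in the $\bar r^\pi$ computation in the proof of Lemma~\ref{lem:well-posedness-from-quantitative-assumptions}.
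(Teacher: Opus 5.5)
Your proof is correct and follows the paper's argument. You get admissibility of $\pi^\star$ from the bounded-tilt KL bound, show $\T^{\pi^\star}V^\star=\tau\log\int e^{Q^\star(s,a)/\tau}\,\dd a=\T^\star V^\star$ and invoke uniqueness of the fixed point of $\T^{\pi^\star}$, and then use the pointwise comparison $\T^\pi V\le\T^\star V$. The only cosmetic difference is in the last step: you iterate $\T^\star$ upward from $V^\pi$ using monotonicity and contraction of $\T^\star$, whereas the paper iterates $\T^\pi$ downward from $V^\star$ using $\T^\pi V^\star\le V^\star$ and monotonicity and contraction of $\T^\pi$; both are equally valid.
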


\begin{proof}[Proof of Lemma~\ref{lem:optimal-gibbs-policy}]
Since $V^\star$ is bounded and $|r|\le R_{\max}$,
\[
    \left|
        r(s,a)
        +
        \gamma\int_S V^\star(s')P(ds'\mid s,a)
    \right|
    \le
    R_{\max}+\gamma\|V^\star\|_\infty.
\]
Write
\[
    \psi_s^\star(a)
    :=
    \frac{
        r(s,a)
        +
        \gamma\int_S V^\star(s')P(ds'\mid s,a)
    }{\tau}.
\]
Then
\[
    Q^\star(s,a)
    =
    -\frac{\beta}{2}\|a\|^2
    +
    \tau\psi_s^\star(a),
\]
and therefore
\[
    \exp(Q^\star(s,a)/\tau)
    =
    \exp(\psi_s^\star(a))
    \exp\left(-\frac{\beta}{2\tau}\|a\|^2\right).
\]
Let
\[
    C_\star
    :=
    \frac{R_{\max}+\gamma\|V^\star\|_\infty}{\tau}.
\]
Then $\|\psi_s^\star\|_\infty\le C_\star$ for every $s$. Hence the normalizer in
\eqref{eq:optimal-gibbs-policy} satisfies
\[
    e^{-C_\star}Z_\beta
    \le
    \int_{\mathbb R^d}\exp(Q^\star(s,a)/\tau)\,da
    \le
    e^{C_\star}Z_\beta,
\]
so it is finite and strictly positive. Moreover,
\[
    \pi^\star(a\mid s)
    =
    \frac{\exp(\psi_s^\star(a))}
    {\int_{\mathbb R^d}\exp(\psi_s^\star(\tilde a))\rho_\beta(\tilde a)\,d\tilde a}
    \rho_\beta(a).
\]
Thus $\pi^\star(\cdot\mid s)$ is a bounded perturbation of $\rho_\beta$. In particular,
\[
    \operatorname{KL}\bigl(\pi^\star(\cdot\mid s)\,\|\,\rho_\beta\bigr)
    =
    \int_{\mathbb R^d}\psi_s^\star(a)\pi^\star(a\mid s)\,da
    -
    \log
    \int_{\mathbb R^d}\exp(\psi_s^\star(a))\rho_\beta(a)\,da
    \le
    2C_\star.
\]
Taking the supremum over $s$ gives
\[
    \sup_{s\in S}
    \operatorname{KL}\bigl(\pi^\star(\cdot\mid s)\,\|\,\rho_\beta\bigr)
    \le
    2C_\star
    <\infty.
\]
Therefore $\pi^\star\in\Pi$. The required kernel measurability is part of the convention in Section~\ref{sec:model}.

We now show that $\pi^\star$ attains the soft optimality operator at $V^\star$. For each fixed
state $s$, let
\[
    Z^\star_s
    :=
    \int_{\mathbb R^d}
    \exp(Q^\star(s,a)/\tau)\,da.
\]
By definition,
\[
    \log\pi^\star(a\mid s)
    =
    \frac{Q^\star(s,a)}{\tau}
    -
    \log Z^\star_s.
\]
Therefore
\[
    Q^\star(s,a)-\tau\log\pi^\star(a\mid s)
    =
    \tau\log Z^\star_s
\]
for every $a$. Hence
\begin{align*}
    (\T^{\pi^\star}V^\star)(s)
    &=
    \int_{\mathbb R^d}
    \left(
        Q^\star(s,a)
        -
        \tau\log\pi^\star(a\mid s)
    \right)
    \pi^\star(a\mid s)\,da \\
    &=
    \tau\log Z^\star_s.
\end{align*}
On the other hand, the Gibbs variational formula gives
\[
    (\T^\star V^\star)(s)
    =
    \tau\log Z^\star_s.
\]
Since $V^\star$ is the fixed point of $\T^\star$,
\[
    \T^{\pi^\star}V^\star
    =
    \T^\star V^\star
    =
    V^\star.
\]
Because $\pi^\star\in\Pi$, the policy-evaluation operator $\T^{\pi^\star}$ is a
$\gamma$-contraction on bounded measurable functions and has a unique bounded fixed point,
namely $V^{\pi^\star}$. Since $V^\star$ is also a bounded fixed point of $\T^{\pi^\star}$, uniqueness
implies
\[
    V^{\pi^\star}=V^\star.
\]

Finally, for any $\pi\in\Pi$ and any bounded measurable $V$,
\[
    \T^\pi V\le \T^\star V
\]
pointwise, because $\T^\star$ is the supremum over admissible action densities in the one-step
soft Bellman expression. Applying this with $V=V^\star$ gives
\[
    \T^\pi V^\star\le \T^\star V^\star=V^\star.
\]
By monotonicity of $\T^\pi$,
\[
    (\T^\pi)^n V^\star\le V^\star
    \qquad
    \forall n\ge 1.
\]
Since $\T^\pi$ is a $\gamma$-contraction, $(\T^\pi)^n V^\star$ converges uniformly to its unique
bounded fixed point $V^\pi$. Passing to the limit yields
\[
    V^\pi\le V^\star.
\]
Thus $\pi^\star$ is optimal among admissible policies. Since
$V^{\pi^\star}=V^\star$,
\[
    J(\pi^\star)
    =
    \int_S V^{\pi^\star}(s)\rho_0(ds)
    =
    \int_S V^\star(s)\rho_0(ds).
\]
This completes the proof.
\end{proof}

\subsection{Value bounds and Bellman drift estimates}

For every $\pi\in\Pi$, the one-step regularized reward obeys
\begin{equation}\label{eq:rbar_admissible_bound}
\begin{aligned}
\bar r^\pi(s)
&:=\int_{\R^d}\bigl(\tilde r(s,a)-\tau\log\pi(a\mid s)\bigr)\pi(a\mid s)\,\dd a\\
&=\int r(s,a)\pi(\dd a\mid s)+\tau\log Z_\beta-
\tau\KL(\pi(\cdot\mid s)\|\rho_\beta).
\end{aligned}
\end{equation}
Consequently, under Assumption~\ref{ass:standing},
\begin{equation}\label{eq:rbar_admissible_range}
-R_{\max}+\tau\log Z_\beta-\tau C_\pi
\le \bar r^\pi(s)\le R_{\max}+\tau\log Z_\beta,
\qquad
C_\pi:=\sup_s\KL(\pi(\cdot\mid s)\|\rho_\beta).
\end{equation}
This coercive identity is the basic reason admissibility relative to $\rho_\beta$ makes Bellman evaluation finite on the unbounded action space.

\begin{lemma}[Basic value bounds]\label{lem:basic-value-bounds}\label{lem:uniform-upper-bound-admissible}\label{lem:V_lower_bound_pi0_dt}\label{lem:opt_value_bounds_E0}
Let
\[
U:=\frac{R_{\max}+\tau\log Z_\beta}{1-\gamma},
\qquad
L_\star:=\frac{-R_{\max}+\tau\log Z_\beta}{1-\gamma},
\qquad
\bar E_{0}:=\frac{2R_{\max}+\tau K_0}{1-\gamma}.
\]
Under Assumption~\ref{ass:standing}, every admissible policy $\pi\in\Pi$ satisfies
\[
V^\pi(s)\le U\qquad\forall s\in\cS.
\]
The optimal value satisfies
\[
L_\star\le V^\star(s)\le U\qquad\forall s\in\cS,
\]
and the initial policy satisfies
\[
V^{\pi_0}(s)\ge \frac{-R_{\max}+\tau\log Z_\beta-\tau K_0}{1-\gamma},
\qquad
\|V^\star-V^{\pi_0}\|_\infty\le \bar E_0.
\]
\end{lemma}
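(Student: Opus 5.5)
The plan is to reduce everything to bounds on the one-step regularized reward $\bar r^\pi$ from \eqref{eq:rbar_admissible_bound}--\eqref{eq:rbar_admissible_range}. Then I propagate those bounds through the policy-evaluation fixed point using monotonicity and $\gamma$-contraction of $\T^\pi$ (Lemma~\ref{lem:well-posedness-from-quantitative-assumptions}(i)).

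\emph{Upper bound.} For $\pi\in\Pi$, \eqref{eq:rbar_admissible_range} gives $\bar r^\pi(s)\le R_{\max}+\tau\log Z_\beta$, using $\KL\ge0$. Hence for any bounded $V$ with $V\le U$ pointwise,
\[
(\T^\pi V)(s)=\bar r^\pi(s)+\gamma(P^\pi V)(s)\le R_{\max}+\tau\log Z_\beta+\gamma U=U.
\]
So the set $\{V\le U\}$ is invariant under $\T^\pi$. Start the iteration at any bounded $V\le U$ (for instance the constant $U$). Since $(\T^\pi)^nV\to V^\pi$ uniformly and each iterate satisfies $(\T^\pi)^nV\le U$, we get $V^\pi\le U$. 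By Lemma~\ref{lem:optimal-gibbs-policy}, $\pi^\star\in\Pi$ and $V^{\pi^\star}=V^\star$, so the same argument gives $V^\star\le U$.

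\emph{Lower bounds.} The same invariance argument, applied to $\{V\ge c/(1-\gamma)\}$ with $c=-R_{\max}+\tau\log Z_\beta-\tau C_\pi$, yields
\[
V^\pi\ge \frac{-R_{\max}+\tau\log Z_\beta-\tau C_\pi}{1-\gamma}.
\]
For $\pi_0$, Assumption~\ref{ass:standing}(ii) gives $C_{\pi_0}\le K_0$, which is the stated lower bound on $V^{\pi_0}$. For $V^\star$ I would not use the Gibbs KL bound $2C_\star$, since it would introduce a circular dependence on $\|V^\star\|_\infty$. Instead I compare with the reference policy $\pi(\cdot\mid s)=\rho_\beta$, which lies in $\Pi$ with $C_\pi=0$. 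It gives $V^{\rho_\beta}\ge L_\star$. Lemma~\ref{lem:optimal-gibbs-policy} gives $V^\star\ge V^\pi$ for every admissible $\pi$, so $V^\star\ge V^{\rho_\beta}\ge L_\star$. One could argue directly from $\T^\star V\ge\T^{\rho_\beta}V$ as well.

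\emph{Initial gap.} Lemma~\ref{lem:optimal-gibbs-policy} gives $V^\star-V^{\pi_0}\ge0$. Combining the two estimates above,
\[
V^\star-V^{\pi_0}\le U-\frac{-R_{\max}+\tau\log Z_\beta-\tau K_0}{1-\gamma}=\frac{2R_{\max}+\tau K_0}{1-\gamma}=\bar E_0 .
\]
The $\tau\log Z_\beta$ terms cancel, which produces exactly $\bar E_0$.

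There is no real obstacle; the computation is routine. The one point needing care is justifying $\bar r^\pi$ as a finite quantity and the identity \eqref{eq:rbar_admissible_bound}. This uses finiteness of the second moment (Lemma~\ref{lem:gaussian-KL-second-moment}) so that $\int\|a\|^2\pi$ and the entropy can be split, and it has already been recorded in the proof of Lemma~\ref{lem:well-posedness-from-quantitative-assumptions}.
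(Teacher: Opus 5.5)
Your proof is correct and follows the paper's argument. You bound $\bar r^\pi$ through the Gaussian-KL identity, propagate this through the monotone, contractive policy-evaluation operator, lower-bound $V^\star$ by comparison with the $\rho_\beta$ policy, and subtract to obtain $\bar E_0$. The only deviation is that you get $V^\star\le U$ from $\pi^\star\in\Pi$ and $V^{\pi^\star}=V^\star$ (Lemma~\ref{lem:optimal-gibbs-policy}), whereas the paper iterates $\T^\star$ from the constant $U$ using its log-partition form; both routes are valid and non-circular, since $\pi^\star\in\Pi$ needs only that $V^\star$ is bounded.
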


\begin{proof}[Proof of Lemma~\ref{lem:basic-value-bounds}]
For $\pi\in\Pi$, write $C_\pi=\sup_s\KL(\pi(\cdot\mid s)\|\rho_\beta)$.  The identity \eqref{eq:rbar_admissible_bound} gives, for every state $s$,
\[
-R_{\max}+\tau\log Z_\beta-\tau C_\pi
\le \bar r^\pi(s)
\le R_{\max}+\tau\log Z_\beta.
\]
Thus $\T^\pi$ is an affine discounted Bellman operator with uniformly bounded one-step reward.  For the constant function $U=(R_{\max}+\tau\log Z_\beta)/(1-\gamma)$,
\[
\T^\pi U\le R_{\max}+\tau\log Z_\beta+\gamma U=U.
\]
By monotonicity of $\T^\pi$ and contraction of policy evaluation, $(\T^\pi)^nU\to V^\pi$ uniformly and hence $V^\pi\le U$.

For the lower bound on $V^\star$, use the stationary Gaussian policy $\pi^\beta(da\mid s)=\rho_\beta(a)da$.  For this policy,
\[
\int_{\R^d}\bigl(\tilde r(s,a)-\tau\log\rho_\beta(a)\bigr)\rho_\beta(a)da
=
\int r(s,a)\rho_\beta(a)da+\tau\log Z_\beta
\ge -R_{\max}+\tau\log Z_\beta.
\]
Therefore $V^{\pi^\beta}\ge L_\star$.  Since $\T^\star V\ge \T^{\pi^\beta}V$ for every bounded $V$, monotone comparison of the discounted fixed points gives $V^\star\ge V^{\pi^\beta}\ge L_\star$.  The upper bound $V^\star\le U$ follows similarly from the log-partition form of $\T^\star$: for the constant function $U$,
\[
(\T^\star U)(s)
\le \tau\log\left(e^{(R_{\max}+\gamma U)/\tau}
\int e^{-\beta\|a\|^2/(2\tau)}da\right)
=R_{\max}+\gamma U+\tau\log Z_\beta=U,
\]
and iterating $\T^\star$ from $U$ yields $V^\star\le U$.

Finally, Assumption~\ref{ass:standing}(ii) and \eqref{eq:rbar_admissible_bound} imply
\[
\bar r^{\pi_0}(s)\ge -R_{\max}+\tau\log Z_\beta-\tau K_0.
\]
Applying the same monotone-comparison argument to $\T^{\pi_0}$ gives
\[
V^{\pi_0}(s)\ge \frac{-R_{\max}+\tau\log Z_\beta-\tau K_0}{1-\gamma}.
\]
Since Lemma~\ref{lem:optimal-gibbs-policy} gives $V^{\pi_0}\le V^\star$, the preceding initial lower bound and $V^\star\le U$ yield
\[
\|V^\star-V^{\pi_0}\|_\infty
\le
\frac{R_{\max}+\tau\log Z_\beta}{1-\gamma}
-
\frac{-R_{\max}+\tau\log Z_\beta-\tau K_0}{1-\gamma}
=
\bar E_{0}.
\]
\end{proof}

\begin{proof}[Proof of Lemma~\ref{lem:bellman-drift-regularity}]
Lemma~\ref{lem:well-posedness-from-quantitative-assumptions} proves the Bellman action-gradient formula
\[
 b_k(s,a)=\nabla_a r(s,a)-\beta a
 +\gamma\int_\cS V^{\pi_k}(s')\nabla_a p(s'\mid s,a)\lambda(ds').
\]
Set
\[
 h_k(s,a):=\nabla_a r(s,a)
 +\gamma\int_\cS V^{\pi_k}(s')\nabla_a p(s'\mid s,a)\lambda(ds').
\]
Then $b_k(s,a)=-\beta a+h_k(s,a)$.  If $\|V^{\pi_k}\|_\infty\le V_{\max}$, Assumption~\ref{ass:action-regularity} gives
\[
\|h_k(s,a)\|
\le G_r+\gamma V_{\max}\int_\cS\|\nabla_a p(s'\mid s,a)\|\lambda(ds')
\le G_r+\gamma G_pV_{\max}.
\]
This proves the dissipative decomposition and \eqref{eq:hk_bound_dt}.  For the Lipschitz bound, subtract the displayed gradient formula at $a$ and $\bar a$ and use the reward-gradient Lipschitz bound and the $L^1(\lambda)$ Lipschitz bound on $\nabla_a p$:
\[
\begin{aligned}
\|b_k(s,a)-b_k(s,\bar a)\|
&\le \beta\|a-\bar a\|+L_r\|a-\bar a\| \\
&\quad +\gamma\|V^{\pi_k}\|_\infty
\int_\cS\|\nabla_a p(s'\mid s,a)-\nabla_a p(s'\mid s,\bar a)\|\lambda(ds') \\
&\le (\beta+L_r+\gamma L_pV_{\max})\|a-\bar a\|.
\end{aligned}
\]
\end{proof}

\subsection{Moment bounds, step sizes, and uniform stability}

\begin{lemma}[Second-moment bound under dissipativity]\label{lem:ULA_second_moment}
Suppose that for all $0\le k\le K-1$, Lemma~\ref{lem:bellman-drift-regularity} holds with $G(V_{\max}) = G$.
Assume $M_0:=\sup_{s\in\cS}\E\|A_0^s\|^2<\infty$ and $0<\eta\le \frac{1}{4\beta}$.
Then
\begin{equation}\label{eq:moment_bound_result}
\sup_{0\le k\le K}\sup_{s\in\cS}\E\|A_k^s\|^2
\ \le\
\max \{M_0,\ M_\infty(\eta)\},
\end{equation}
where one may take
\begin{equation}\label{eq:Minfty_def}
M_\infty(\eta)
:=
\frac{2}{\beta}\Big(\frac{G^2}{\beta}+2\tau d\Big)
+\frac{4G^2}{\beta}\,\eta.
\end{equation}
\end{lemma}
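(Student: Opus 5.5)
The plan is a one-step Lyapunov recursion for $m_k(s):=\E\|A_k^s\|^2$, which we then close by induction with a max-type bound. Fix $s$ and $0\le k\le K-1$. By the dissipative decomposition \eqref{eq:dissipative_decomp_dt} with $\|h_k\|_\infty\le G$, the update \eqref{eq:ULA_update_dt} reads
\[
A_{k+1}^s=(1-\eta\beta)A_k^s+\eta h_k(s,A_k^s)+\sqrt{2\tau\eta}\,\xi_{k+1}.
\]
The noise $\xi_{k+1}$ is independent of $A_k^s$ and centered, so the cross terms with it vanish. Hence
\[
m_{k+1}(s)=\E\bigl\|(1-\eta\beta)A_k^s+\eta h_k(s,A_k^s)\bigr\|^2+2\tau\eta d .
\]
Finiteness of $m_k(s)$ is propagated along the induction, starting from $M_0<\infty$.

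Next we expand the square and control the cross term with Young's inequality in the form
\[
2\eta(1-\eta\beta)\langle a,h\rangle\le \eta\beta\|a\|^2+\frac{\eta}{\beta}G^2,
\]
using $0\le 1-\eta\beta\le1$. The quadratic coefficient becomes $(1-\eta\beta)^2+\eta\beta=1-\eta\beta+\eta^2\beta^2$. Since $\eta\beta\le 1/4$, this is at most $1-\tfrac34\eta\beta$. We also bound $\eta^2\|h\|^2\le\eta^2G^2$. Taking the supremum over $s$ gives, with $m_k:=\sup_s m_k(s)$,
\[
m_{k+1}\le\Bigl(1-\tfrac34\eta\beta\Bigr)m_k+\eta\Bigl(\frac{G^2}{\beta}+\eta G^2+2\tau d\Bigr).
\]

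The fixed point of this affine recursion is
\[
M^\ast=\frac{4}{3\beta}\Bigl(\frac{G^2}{\beta}+2\tau d+\eta G^2\Bigr)=\frac{4G^2}{3\beta^2}+\frac{8\tau d}{3\beta}+\frac{4G^2}{3\beta}\eta .
\]
Comparing term by term with \eqref{eq:Minfty_def} gives $M^\ast\le M_\infty(\eta)$. Writing the recursion as $m_{k+1}\le (1-c)m_k+cM^\ast$ with $c=\tfrac34\eta\beta\in(0,1)$, we get $m_{k+1}\le\max\{m_k,M^\ast\}$. Induction from $m_0\le M_0$ then yields \eqref{eq:moment_bound_result}.

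The only delicate point is choosing the Young weight. It must leave a strictly contracting coefficient after absorbing the $\eta^2\beta^2$ term via $\eta\le 1/(4\beta)$, while keeping the constant in front of $G^2/\beta^2$ at most $2/\beta\cdot 1/\beta$. A weight of $\eta\beta$ does both: it gives coefficient $4/3\le2$. The hypothesis that Lemma~\ref{lem:bellman-drift-regularity} holds at every $k\le K-1$ with a common $G$ is exactly what makes the recursion uniform in $k$ and $s$.
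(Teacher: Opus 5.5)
Your proof is correct and follows the paper's route: a one-step second-moment recursion built from the dissipative decomposition, Young's inequality, and the condition $\eta\le 1/(4\beta)$, which gives a contracting affine recursion bounded by the maximum of the initial moment and the fixed point. The only difference is bookkeeping. You keep the exact factor $(1-\eta\beta)^2$ instead of bounding $\|b_k\|^2\le 2\beta^2\|a\|^2+2G^2$, so you get the slightly sharper contraction $1-\tfrac34\eta\beta$ and a fixed point $M^\ast\le M_\infty(\eta)$.
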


\begin{proof}[Proof of Lemma~\ref{lem:ULA_second_moment}]
Fix $s\in\cS$ and write $A_k:=A_k^s$ for brevity.  Let
\[
m_k:=\E\|A_k\|^2,\qquad 0\le k\le K.
\]
Recall the update
\[
A_{k+1}=A_k+\eta\,b_k(s,A_k)+\sqrt{2\tau\eta}\,\xi_{k+1},
\qquad
\xi_{k+1}\sim \mathcal{N}(0,I_d),
\]
with $\xi_{k+1}$ independent of $A_k$. Taking squared norms and expectations gives
\begin{align}
m_{k+1}
&=\E\Big\|A_k+\eta b_k(s,A_k)+\sqrt{2\tau\eta}\,\xi_{k+1}\Big\|^2 \nonumber\\
&=\E\Big\|A_k+\eta b_k(s,A_k)\Big\|^2
+2\tau\eta\,\E\|\xi_{k+1}\|^2
\label{eq:mom_step1}
\end{align}
because the cross term vanishes by independence and $\E[\xi_{k+1}]=0$.
Since $\E\|\xi_{k+1}\|^2=d$, we have
\begin{equation}\label{eq:mom_step2}
m_{k+1}
=\E\Big\|A_k+\eta b_k(s,A_k)\Big\|^2+2\tau\eta d.
\end{equation}
Expanding the first term yields
\begin{equation}\label{eq:mom_expand}
\E\Big\|A_k+\eta b_k(s,A_k)\Big\|^2
=
m_k
+2\eta\E\langle A_k,b_k(s,A_k)\rangle
+\eta^2\E\|b_k(s,A_k)\|^2.
\end{equation}

By the assumed dissipativity bound (Lemma~\ref{lem:bellman-drift-regularity} with $G(V_{\max})=G$), for each
$0\le k\le K-1$ we may write
\[
b_k(s,a)=-\beta a+h_k(s,a),
\qquad
\sup_{s,a}\|h_k(s,a)\|\le G.
\]
Hence, for any $a\in\R^d$,
\[
\langle a,b_k(s,a)\rangle
=
-\beta\|a\|^2+\langle a,h_k(s,a)\rangle
\le
-\beta\|a\|^2+G\|a\|
\le
-\frac{\beta}{2}\|a\|^2+\frac{G^2}{2\beta},
\]
where the last step uses Young's inequality $G\|a\|\le \frac{\beta}{2}\|a\|^2+\frac{G^2}{2\beta}$.
Thus
\begin{equation}\label{eq:innerprod_bound}
\E\langle A_k,b_k(s,A_k)\rangle
\le
-\frac{\beta}{2}m_k+\frac{G^2}{2\beta}.
\end{equation}
Moreover, using $(x+y)^2\le 2x^2+2y^2$ and $\|h_k\|\le G$,
\[
\|b_k(s,a)\|^2=\|-\beta a+h_k(s,a)\|^2 \le 2\beta^2\|a\|^2+2G^2,
\]
so
\begin{equation}\label{eq:bnorm_bound}
\E\|b_k(s,A_k)\|^2 \le 2\beta^2 m_k+2G^2.
\end{equation}

Plugging \eqref{eq:innerprod_bound} and \eqref{eq:bnorm_bound} into
\eqref{eq:mom_step2}-\eqref{eq:mom_expand} gives, for $0\le k\le K-1$,
\begin{align*}
m_{k+1}
&\le
m_k
+2\eta\Big(-\frac{\beta}{2}m_k+\frac{G^2}{2\beta}\Big)
+\eta^2\Big(2\beta^2 m_k+2G^2\Big)
+2\tau\eta d \\
&=
\big(1-\beta\eta+2\beta^2\eta^2\big)m_k
+\eta\Big(\frac{G^2}{\beta}+2\tau d\Big)
+2G^2\eta^2.
\end{align*}
Under the step-size condition $\eta\le \frac{1}{4\beta}$,
\[
2\beta^2\eta^2 \le \frac{\beta\eta}{2}
\qquad\Longrightarrow\qquad
1-\beta\eta+2\beta^2\eta^2 \le 1-\frac{\beta\eta}{2}.
\]
Define $\rho:=1-\frac{\beta\eta}{2}\in(0,1)$ and
$\zeta_\eta:=\eta\Big(\frac{G^2}{\beta}+2\tau d\Big)+2G^2\eta^2.$

\noindent Then we have the affine recursion
\begin{equation}\label{eq:affine_recursion}
m_{k+1}\le \rho\,m_k+\zeta_\eta,\qquad 0\le k\le K-1.
\end{equation}
Iterating \eqref{eq:affine_recursion} yields
\[
m_k \le \rho^k m_0 + \zeta_\eta\sum_{j=0}^{k-1}\rho^j
=
\rho^k m_0 + \zeta_\eta\frac{1-\rho^k}{1-\rho}
=
\rho^k m_0 + (1-\rho^k)\frac{\zeta_\eta}{1-\rho}.
\]
\[
\frac{\zeta_\eta}{1-\rho}
=
\frac{\eta(\frac{G^2}{\beta}+2\tau d)+2G^2\eta^2}{\beta\eta/2}
=
\frac{2}{\beta}\Big(\frac{G^2}{\beta}+2\tau d\Big)+\frac{4G^2}{\beta}\eta
=:M_\infty(\eta).
\]
Therefore
\[
m_k \le \rho^k m_0 + (1-\rho^k)M_\infty(\eta) \le \max\{m_0,M_\infty(\eta)\}
\qquad\forall\,0\le k\le K.
\]
Finally, taking $\sup_{0\le k\le K}$ and then $\sup_{s\in\cS}$, and using $m_0=\E\|A_0^s\|^2\le M_0$,
we obtain
\[
\sup_{0\le k\le K}\sup_{s\in\cS}\E\|A_k^s\|^2
\le
\max\{M_0,M_\infty(\eta)\},
\]
which is \eqref{eq:moment_bound_result}.
\end{proof}

\begin{lemma}[Feasibility of the stability step-size condition]
\label{lem:stepsize-feasibility}
Let the constants in \eqref{eq:self_consistent_Vbar}-\eqref{eq:eta0_def} be defined as above.  Then every $0<\eta\le\eta_0$ satisfies the exact stability condition \eqref{eq:uniform_stepsize_varyingK}.  Moreover,
\[
\bar\delta_\eta\le C_\delta\eta^2.
\]
\end{lemma}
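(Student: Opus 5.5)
The plan is to verify the three conditions in \eqref{eq:uniform_stepsize_varyingK} one at a time, directly from the definition \eqref{eq:eta0_def} of $\eta_0$. The bound $\bar\delta_\eta\le C_\delta\eta^2$ is proved first, because the third condition depends on it.

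\emph{First two conditions.} Since $\eta\le\eta_0\le 1/(4\beta)$, we have $\eta\le 1/(4\beta)$. Since $\eta\le\eta_0\le 1/(\bar\alpha\tau)$, we have $\bar\alpha\tau\eta\le1$. Both are immediate from \eqref{eq:eta0_def}.

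\emph{The bound $\bar\delta_\eta\le C_\delta\eta^2$.} I read the drift-moment constant $B_\infty^2$ in \eqref{eq:delta_bar_eta} as $2\beta^2\max\{M_0,\bar M_\infty(\eta)\}+2\bar G^2$, which is the bound \eqref{eq:bnorm_bound} combined with Lemma~\ref{lem:ULA_second_moment}. The aim is to show $B_\infty^2\le\bar B^2$.
\begin{itemize}
\item Using $\eta\le 1/(4\beta)$ in the definition of $\bar M_\infty(\eta)$ gives
\[
\bar M_\infty(\eta)\le \frac{2\bar G^2}{\beta^2}+\frac{4\tau d}{\beta}+\frac{\bar G^2}{\beta^2}=\frac{3\bar G^2}{\beta^2}+\frac{4\tau d}{\beta}.
\]
\item Hence $\max\{M_0,\bar M_\infty(\eta)\}\le\bar M$, and therefore $B_\infty^2\le 2\beta^2\bar M+2\bar G^2=\bar B^2$.
\item Since $\eta\le\eta_0\le1$, we have $\eta^3\le\eta^2$.
\end{itemize}
Together these give
\[
\bar\delta_\eta\le\Big(\frac{\bar L_b^2d}{2}+\frac{\bar L_b^2\bar B^2}{6\tau}\Big)\eta^2=C_\delta\eta^2,
\]
with $C_\delta$ as in \eqref{eq:Cdelta_def}.

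\emph{Third condition.} This is the only step that needs a small estimate: a lower bound on $c_\eta=1-e^{-x}$, where $x:=\bar\alpha\tau\eta\in(0,1]$.
\begin{itemize}
\item On $[0,1]$ we have $1-e^{-x}\ge x-x^2/2\ge x/2$. Equivalently, $x\mapsto(1-e^{-x})/x$ is decreasing and equals $1-e^{-1}>1/2$ at $x=1$. So $c_\eta\ge\bar\alpha\tau\eta/2$.
\item Combining this with $\bar\delta_\eta\le C_\delta\eta^2$ gives
\[
\frac{\tau}{(1-\gamma)^2}\frac{\bar\delta_\eta}{c_\eta}\le\frac{\tau}{(1-\gamma)^2}\cdot\frac{2C_\delta\eta^2}{\bar\alpha\tau\eta}=\frac{2C_\delta\eta}{\bar\alpha(1-\gamma)^2}.
\]
\item The right-hand side is at most $1$ because $\eta\le\bar\alpha(1-\gamma)^2/(2C_\delta)$ by \eqref{eq:eta0_def}.
\end{itemize}

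I do not expect a real obstacle. The only points needing care are matching the constant $B_\infty^2$ (whose display in the appendix is slightly garbled) to $\bar B^2$, and the elementary inequality for $1-e^{-x}$.
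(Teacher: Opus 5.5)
Your proposal is correct and follows the paper's proof essentially line for line. The first two conditions are read off from $\eta_0$. The bound $\bar M_\infty(\eta)\le 3\bar G^2/\beta^2+4\tau d/\beta$ gives $B^2\le\bar B^2$, and with $\eta\le 1$ this yields $\bar\delta_\eta\le C_\delta\eta^2$. The inequality $1-e^{-x}\ge x/2$ on $[0,1]$ then closes the third condition. Your reading of the garbled drift-moment constant as $2\beta^2\max\{M_0,\bar M_\infty(\eta)\}+2\bar G^2$ matches how the paper's proof uses it.
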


\begin{proof}[Proof of Lemma~\ref{lem:stepsize-feasibility}]
All constants in \eqref{eq:eta0_def} are finite and positive under the standing assumptions, so $\eta_0>0$.  Fix $0<\eta\le\eta_0$.  The first two inequalities in \eqref{eq:uniform_stepsize_varyingK} follow from
\[
\eta\le \frac{1}{4\beta},
\qquad
\eta\le\frac{1}{\bar\alpha\tau}.
\]

Since $\eta\le1/(4\beta)$,
\[
\bar M_\infty(\eta)
=
\frac{2}{\beta}\left(\frac{\bar G^2}{\beta}+2\tau d\right)
+\frac{4\bar G^2}{\beta}\eta
\le
\frac{3\bar G^2}{\beta^2}+\frac{4\tau d}{\beta}.
\]
Therefore $\max\Big\{M_0, \ \bar M_\infty(\eta)\Big\}\le\bar M$ and $B \leq \bar B$.  Substituting this into \eqref{eq:delta_bar_eta} and using $\eta\le1$ gives
\[
\bar\delta_\eta
=
\frac{\bar L_b^2d}{2}\eta^2
+
\frac{\bar L_b^2B^2}{6\tau}\eta^3
\le
\left(
\frac{\bar L_b^2d}{2}
+
\frac{\bar L_b^2\bar B^2}{6\tau}
\right)\eta^2
=
C_\delta\eta^2.
\]
Finally, since $\bar\alpha\tau\eta\le1$ and $1-e^{-x}\ge x/2$ for $x\in[0,1]$,
\[
c_\eta=1-e^{-\bar\alpha\tau\eta}
\ge\frac{\bar\alpha\tau\eta}{2}.
\]
Thus
\[
\frac{\tau}{(1-\gamma)^2}\frac{\bar\delta_\eta}{c_\eta}
\le
\frac{\tau}{(1-\gamma)^2}\frac{C_\delta\eta^2}{\bar\alpha\tau\eta/2}
=
\frac{2C_\delta}{\bar\alpha(1-\gamma)^2}\eta
\le 1,
\]
where the last inequality uses the definition of $\eta_0$.  Hence \eqref{eq:uniform_stepsize_varyingK} holds.
\end{proof}

\begin{proof}[Proof of Proposition~\ref{prop:WPG_uniform_stability}]
By Lemma~\ref{lem:stepsize-feasibility}, every $0<\eta\le\eta_0$ satisfies the exact stability condition \eqref{eq:uniform_stepsize_varyingK}.  The proof below uses this exact condition.
For any index $k$ for which $\pi_k$ is admissible, write
\[
b_k(s,a):=\nabla_a Q^{\pi_k}(s,a).
\]
For such $k$ and each $s\in\cS$, let
\[
p_s^{\pi_k}(a)=\frac{\exp(Q^{\pi_k}(s,a)/\tau)}{\int_{\R^d}\exp(Q^{\pi_k}(s,\tilde a)/\tau)d\tilde a}.
\]
Define
\[
W_k(s):=V^\star(s)-V^{\pi_k}(s),
\qquad
R_k(s):=(\T^\star V^{\pi_k})(s)-V^{\pi_k}(s),
\qquad
E_k:=\|W_k\|_\infty.
\]
By Lemma~\ref{prop:residual_identity_ct},
\[
R_k(s)=\tau\KL(\pi_k(\cdot\mid s)\|p_s^{\pi_k}).
\]
Moreover, by Lemma~\ref{lem:Tstar_contraction}, $\T^\star$ is monotone and a $\gamma$-contraction; since $V^\star\ge V^{\pi_k}$ pointwise,
\begin{equation}\label{eq:delta_contraction_used_stability}
0\le V^\star(s)-(\T^\star V^{\pi_k})(s)
= (\T^\star V^\star)(s)-(\T^\star V^{\pi_k})(s)
\le \gamma E_k,
\qquad \forall s\in\cS.
\end{equation}

For each $m\ge0$, let $\mathcal I_m$ denote the assertion that, for every $0\le j\le m$,
\begin{equation}\label{eq:Ek_uniform_crude}
\pi_j\in\Pi,
\qquad
\|V^{\pi_j}\|_\infty\le \bar V.
\end{equation}
We prove $\mathcal I_m$ for all $m$ by induction. The admissibility part ensures that Lemma~\ref{lem:well-posedness-from-quantitative-assumptions} applies recursively to the Bellman objects and to the WPG push-forward density; measurability is handled throughout by the convention in Section~\ref{sec:model}.

\paragraph{Base case.}
The admissibility of $\pi_0$ follows from Assumption~\ref{ass:standing}(ii), while kernel measurability is covered by the convention in Section~\ref{sec:model}. By Lemma~\ref{lem:uniform-upper-bound-admissible}, $\sup_s V^{\pi_0}(s)\le U\le\bar V$. By Lemma~\ref{lem:opt_value_bounds_E0},
\[
V^{\pi_0}(s)\ge V^\star(s)-E_0\ge L_\star-E_0.
\]
By Lemma~\ref{lem:opt_value_bounds_E0}, $E_0\le \bar E_0$. The definition \eqref{eq:self_consistent_Vbar} implies $\bar V\ge \bar E_0-L_\star+1$, and hence $L_\star-E_0\ge L_\star- \bar E_0\ge -\bar V$. Thus $\|V^{\pi_0}\|_\infty\le\bar V$, so $\mathcal I_0$ holds.

\paragraph{Induction step.}
Assume $\mathcal I_k$ holds. Then Lemma~\ref{lem:well-posedness-from-quantitative-assumptions} gives the Bellman objects $V^{\pi_j},Q^{\pi_j},b_j$ for $0\le j\le k$ and the explicit WPG push-forward density for $\pi_{k+1}$. Lemma~\ref{lem:bellman-drift-regularity} implies that, for all $0\le j\le k$,
\[
\|b_j(s,a)-b_j(s,\bar a)\|\le \bar L_b\|a-\bar a\|,
\qquad
b_j(s,a)=-\beta a+h_j(s,a),
\qquad
\sup_{s,a}\|h_j(s,a)\|\le \bar G.
\]
Applying Lemma~\ref{lem:ULA_second_moment} on the finite horizon $\{0,1,\ldots,k+1\}$ with $G=\bar G$ yields
\begin{equation}\label{eq:stability_moment_to_kplus1}
\sup_{0\le j\le k+1}\sup_{s\in\cS}\E\|A_j^s\|^2\le \max \{M_0,\ \bar M_\infty(\eta)\} \leq \bar M.
\end{equation}
Consequently, for every $0\le j\le k$,
\[
\sup_{s\in\cS}\E\|b_j(s,A_j^s)\|^2
\le 2\beta^2\max \{M_0,\ \bar M_\infty(\eta)\}+2\bar G^2 \leq \bar B^2.
\]
At the current step $k$, Lemma~\ref{lem:uniform_LSI_dt} gives that $p_s^{\pi_k}$ satisfies an LSI with constant at least $\bar\alpha$. The Gibbs definition and the Bellman-gradient formula in Lemma~\ref{lem:well-posedness-from-quantitative-assumptions} also give $p_s^{\pi_k}>0$, $\log p_s^{\pi_k}\in C^1(\R^d)$, and the score identity $\tau\nabla_a\log p_s^{\pi_k}=b_k(s,\cdot)$. We next verify the finite-second-moment and finite-KL hypotheses in Lemma~\ref{lem:KL_contraction_frozen} quantitatively, and the admissibility/finite-KL hypotheses needed for Lemma~\ref{lem:KL_contraction_bellman_consequence}.

For $j=0$, Assumption~\ref{ass:standing}(ii) gives
\[
\sup_s\KL(\pi_0(\cdot\mid s)\|\rho_\beta)\le K_0\le \bar K_\eta.
\]
For $1\le j\le k+1$, the update writes
\[
A_j^s=X_{j-1}^s+\sqrt{2\tau\eta}\xi_j,
\qquad
X_{j-1}^s:=A_{j-1}^s+\eta b_{j-1}(s,A_{j-1}^s),
\]
with $\xi_j\sim\mathcal N(0,I_d)$ independent of $X_{j-1}^s$. Since \eqref{eq:stability_moment_to_kplus1} gives $\E\|A_j^s\|^2\le M_\infty$, Lemma~\ref{lem:finite_gaussian_KL} with $\sigma^2=2\tau\eta$ yields, for $1\le j\le k+1$,
\begin{equation}\label{eq:gaussian_KL_in_stability_proof}
\KL(\pi_j(\cdot\mid s)\|\rho_\beta)
\le
\frac{\beta M_\infty}{2\tau}+\log Z_\beta-\frac d2\log(4\pi e\tau\eta)
\le \bar K_\eta.
\end{equation}
Combining the initialization bound for $j=0$ with the smoothing bound for $j\ge1$ gives the uniform finite-horizon estimate
\begin{equation}\label{eq:gaussian_KL_all_times_stability}
\sup_{0\le j\le k+1}\sup_{s\in\cS}
\KL(\pi_j(\cdot\mid s)\|\rho_\beta)\le \bar K_\eta.
\end{equation}
Thus the Gaussian-KL bound holds for all policies generated up to time $k+1$, and in particular $\pi_j\in\Pi$ for all $j\le k+1$. The remaining kernel measurability is covered by the convention in Section~\ref{sec:model}, so the admissibility part of the induction is closed.

Under the value bound, $p_s^{\pi_k}$ is a bounded perturbation of $\rho_\beta$:
\[
p_s^{\pi_k}(a)=\frac{e^{\psi_{k,s}(a)}}{\int e^{\psi_{k,s}}\,d\rho_\beta}\rho_\beta(a),
\qquad
\psi_{k,s}(a):=\frac{r(s,a)+\gamma\int V^{\pi_k}(s')P(\dd s'\mid s,a)}{\tau}.
\]
Since $|r|\le R_{\max}$ and $\|V^{\pi_k}\|_\infty\le\bar V$, $\|\psi_{k,s}\|_\infty\le (R_{\max}+\gamma\bar V)/\tau$. The bounded-perturbation part of Lemma~\ref{lem:finite_gaussian_KL}, applied with the $j=k$ case of \eqref{eq:gaussian_KL_all_times_stability}, gives
\begin{equation}\label{eq:frozen_KL_in_stability_proof}
\KL(\pi_k(\cdot\mid s)\|p_s^{\pi_k})
\le \bar K_\eta+\frac{2(R_{\max}+\gamma\bar V)}{\tau}
=\bar H_\eta<\infty.
\end{equation}
The finite-second-moment hypothesis of Lemma~\ref{lem:KL_contraction_frozen} follows from \eqref{eq:stability_moment_to_kplus1}; the drift second-moment hypothesis follows from
\[
\sup_{s\in\cS}\E\|b_k(s,A_k^s)\|^2\le 2\beta^2\max \{M_0,\ \bar M_\infty(\eta)\}+2\bar G^2=\bar B^2.
\]

Applying Lemma~\ref{lem:KL_contraction_frozen} with $\alpha=\bar\alpha$, $L_b=\bar L_b$, and $B^2=\bar B^2$ gives, for every $s$,
\begin{equation}\label{eq:frozen_KL_contraction_stability}
\KL(\pi_{k+1}(\cdot\mid s)\|p_s^{\pi_k})
\le e^{-\bar\alpha\tau\eta}\KL(\pi_k(\cdot\mid s)\|p_s^{\pi_k})+\bar\delta_\eta.
\end{equation}
The right-hand side of \eqref{eq:frozen_KL_contraction_stability} is finite by \eqref{eq:frozen_KL_in_stability_proof}, and hence $\KL(\pi_{k+1}(\cdot\mid s)\|p_s^{\pi_k})<\infty$. Thus the admissibility and finite-KL hypotheses of Lemma~\ref{lem:V_increment_resolvent_dt} are verified for $(\pi_k,\pi_{k+1})$. Applying Lemma~\ref{lem:KL_contraction_bellman_consequence} to \eqref{eq:frozen_KL_contraction_stability} with $c_\eta:=1-e^{-\bar\alpha\tau\eta}$ and $\delta=\bar\delta_\eta$ gives
\begin{equation}\label{eq:gk_lower_stability}
g_k(s)\ge c_\eta R_k(s)-\tau\bar\delta_\eta.
\end{equation}
In particular, since $R_k\ge0$, $g_k(s)\ge-\tau\bar\delta_\eta$ and $\|g_k^-\|_\infty\le\tau\bar\delta_\eta$.

The one-step resolvent identity gives
\[
V^{\pi_{k+1}}-V^{\pi_k}=(I-\gamma P^{\pi_{k+1}})^{-1}g_k
=\sum_{n\ge0}\gamma^n(P^{\pi_{k+1}})^n g_k.
\]
For $n\ge1$, $(P^{\pi_{k+1}})^n g_k(s)\ge-\|g_k^-\|_\infty$, so
\begin{align}
(V^{\pi_{k+1}}-V^{\pi_k})(s)
&\ge g_k(s)-\sum_{n\ge1}\gamma^n\|g_k^-\|_\infty \nonumber\\
&\ge c_\eta R_k(s)-\tau\bar\delta_\eta-\frac{\gamma}{1-\gamma}\tau\bar\delta_\eta
= c_\eta R_k(s)-\frac{\tau}{1-\gamma}\bar\delta_\eta.
\label{eq:Vk_increment_lower_stability}
\end{align}
Equivalently,
\begin{equation}\label{eq:Wk_update_stability}
W_{k+1}(s)\le W_k(s)-c_\eta R_k(s)+\frac{\tau}{1-\gamma}\bar\delta_\eta.
\end{equation}
Using \eqref{eq:delta_contraction_used_stability},
\[
R_k(s)=W_k(s)-\bigl(V^\star(s)-(\T^\star V^{\pi_k})(s)\bigr)
\ge W_k(s)-\gamma E_k.
\]
Combining this bound with \eqref{eq:Wk_update_stability} yields
\[
W_{k+1}(s)
\le (1-c_\eta)W_k(s)+c_\eta\gamma E_k+\frac{\tau}{1-\gamma}\bar\delta_\eta
\le \bigl(1-(1-\gamma)c_\eta\bigr)E_k+\frac{\tau}{1-\gamma}\bar\delta_\eta.
\]
Taking the supremum over $s$ gives
\begin{equation}\label{eq:Ek_recursion_stability}
E_{k+1}\le \kappa_\eta E_k+\frac{\tau}{1-\gamma}\bar\delta_\eta,
\qquad \kappa_\eta:=1-(1-\gamma)c_\eta.
\end{equation}
Iterating \eqref{eq:Ek_recursion_stability} up to $k+1$ gives
\[
E_{k+1}
\le \kappa_\eta^{k+1}E_0+\frac{\tau}{1-\gamma}\bar\delta_\eta\sum_{j=0}^{k}\kappa_\eta^j
\le E_0+\frac{\tau}{(1-\gamma)^2}\frac{\bar\delta_\eta}{c_\eta}
\le \bar E_{0}+1,
\]
where the last inequality is the stability step-size condition.

Finally, Lemma~\ref{lem:uniform-upper-bound-admissible} gives $V^{\pi_{k+1}}(s)\le U\le\bar V$, while Lemma~\ref{lem:opt_value_bounds_E0} gives
\[
V^{\pi_{k+1}}(s)\ge V^\star(s)-E_{k+1}\ge L_\star-\bar E_{0}-1\ge -\bar V,
\]
again by \eqref{eq:self_consistent_Vbar}. Therefore $\|V^{\pi_{k+1}}\|_\infty\le\bar V$. Together with the admissibility conclusion above, this proves $\mathcal I_{k+1}$ and closes the induction.

It remains to record the uniform consequences. The value bound just proved implies the drift Lipschitz bound and dissipative decomposition for every $k$ via Lemma~\ref{lem:bellman-drift-regularity}. Applying Lemma~\ref{lem:ULA_second_moment} on an arbitrary finite horizon and then letting that horizon vary gives $\sup_s\E\|A_k^s\|^2\le \max \{M_0,\ \bar M_\infty(\eta)\}$ for every $k$. The drift second-moment bound follows from $\|b_k(s,a)\|^2\le2\beta^2\|a\|^2+2\bar G^2$. The quantitative Gaussian-KL bound \eqref{eq:uniform_gaussian_KL_main} follows by applying \eqref{eq:gaussian_KL_all_times_stability} on arbitrary finite horizons; the $k=0$ case is supplied by Assumption~\ref{ass:standing}(ii), while every $k\ge1$ case is supplied by the Gaussian-smoothing estimate \eqref{eq:gaussian_KL_in_stability_proof}. The kernel measurability assertion follows from the convention in Section~\ref{sec:model}. Once the value bound is closed for all iterations, the bounded-perturbation argument above applies to every $p_s^{\pi_k}$ and gives \eqref{eq:uniform_frozen_KL_main}. Finally, Lemma~\ref{lem:uniform_LSI_dt} with $V_{\max}=\bar V$ gives the uniform LSI constant $\bar\alpha$ for every $p_s^{\pi_k}$.
\end{proof}

\begin{proof}[Proof of Lemma~\ref{lem:uniform_LSI_dt}]
Fix $s\in\cS$. By the definition of $p_s^{\pi_k}$, write
\[
p_s^{\pi_k}(a)\propto
\exp\left(\frac{r(s,a)+\gamma\int_{\cS}V^{\pi_k}(s')P(\dd s'\mid s,a)}{\tau}\right)
\exp\left(-\frac{\beta}{2\tau}\|a\|^2\right).
\]
Thus $p_s^{\pi_k}$ is a bounded perturbation of the Gaussian
$\rho_\beta(a)\propto \exp(-\beta\|a\|^2/(2\tau))$. If $\|V^{\pi_k}\|_\infty\le V_{\max}$, then the perturbation
\[
\psi_{k,s}(a):=\frac{r(s,a)+\gamma\int_{\cS}V^{\pi_k}(s')P(\dd s'\mid s,a)}{\tau}
\]
satisfies
\[
|\psi_{k,s}(a)|\le \frac{R_{\max}+\gamma V_{\max}}{\tau},
\qquad
\osc(\psi_{k,s})\le \frac{2(R_{\max}+\gamma V_{\max})}{\tau}.
\]
Since $\rho_\beta$ satisfies an LSI with constant $\beta/\tau$, the Holley-Stroock bounded perturbation lemma implies that $p_s^{\pi_k}$ satisfies an LSI with constant at least
\[
\frac{\beta}{\tau}\exp\left(-\frac{2(R_{\max}+\gamma V_{\max})}{\tau}\right)
=\alpha(V_{\max}).
\]
This proves the claim.
\end{proof}

\section{Core Bellman Contraction Proof}\label{app:core-bellman-proof}

This appendix proves the Bellman part of the argument.  These are the nonstandard identities that replace the flat-convex gap-conversion step in mean-field Langevin analysis.

\subsection{Bellman residual, residual-to-gap, and resolvent identities}\label{app:bellman-one-step-proofs}

\begin{proof}[Proof of Lemma~\ref{lem:residual_lower_epsmax_dt}]
Recall
\[
W_k(s)=V^\star(s)-V^{\pi_k}(s),\qquad E_k=\|W_k\|_\infty,\qquad R_k(s)=(\T^\star V^{\pi_k})(s)-V^{\pi_k}(s).
\]
Decompose $W_k$ by adding and subtracting $(\T^\star V^{\pi_k})(s)$:
\begin{equation}\label{eq:Wk_decomp_residual}
W_k(s)=\big(V^\star(s)-(\T^\star V^{\pi_k})(s)\big)+\big((\T^\star V^{\pi_k})(s)-V^{\pi_k}(s)\big)
=\big(V^\star-(\T^\star V^{\pi_k})\big)(s)+R_k(s).
\end{equation}
Since $V^\star=\T^\star V^\star$ and $\T^\star$ is a $\gamma$-contraction in $\|\cdot\|_\infty$ by Lemma~\ref{lem:Tstar_contraction},
\[
\|V^\star-\T^\star V^{\pi_k}\|_\infty=\|\T^\star V^\star-\T^\star V^{\pi_k}\|_\infty\le \gamma\|V^\star-V^{\pi_k}\|_\infty=\gamma E_k.
\]
In particular, for every $s\in\cS$,
\begin{equation}\label{eq:Vstar_minus_TstarVk_upper}
V^\star(s)-(\T^\star V^{\pi_k})(s)\le \gamma E_k.
\end{equation}
Using \eqref{eq:Wk_decomp_residual} and \eqref{eq:Vstar_minus_TstarVk_upper},
\[
R_k(s)=W_k(s)-\big(V^\star-(\T^\star V^{\pi_k})\big)(s)\ge W_k(s)-\gamma E_k.
\]
\end{proof}

\begin{proof}[Proof of Lemma~\ref{lem:V_increment_resolvent_dt}]
Recall the policy evaluation operator $\T^\pi$ \eqref{eq:Tpi} and the induced state kernel $P^\pi(\dd s'\mid s)$.
For a fixed policy $\pi$, define the regularized immediate reward
\[
\bar r^\pi(s):=\int_{\cA}\big(\tilde r(s,a)-\tau\log\pi(a\mid s)\big)\pi(a\mid s)\,\dd a.
\]
Then we can rewrite $\T^\pi$ as the affine map
\begin{equation}\label{eq:T_affine}
(\T^\pi V)(s)=\bar r^\pi(s)+\gamma (P^\pi V)(s).
\end{equation}

Since $V^{\pi_{k+1}}$ is the fixed point of $\T^{\pi_{k+1}}$, we have
\[
V^{\pi_{k+1}}=\T^{\pi_{k+1}}V^{\pi_{k+1}}.
\]
Subtract $V^{\pi_{k}}$ from both sides and add and subtract $\T^{\pi_{k+1}}V^{\pi_{k}}$:
\[
V^{\pi_{k+1}}-V^{\pi_{k}}
=\big(\T^{\pi_{k+1}}V^{\pi_{k+1}}-\T^{\pi_{k+1}}V^{\pi_{k}}\big)+\big(\T^{\pi_{k+1}}V^{\pi_{k}}-V^{\pi_{k}}\big)
= \gamma P^{\pi_{k+1}}(V^{\pi_{k+1}}-V^{\pi_{k}}) + g_k,
\]
which rearranges to the claimed resolvent identity
\[
(I-\gamma P^{\pi_{k+1}})(V^{\pi_{k+1}}-V^{\pi_k})=g_k.
\]
This proves \eqref{eq:V_increment_resolvent_dt}.

Now compute $(\T^{\pi_{k+1}}V^{\pi_k})(s)$ using the definition of $\T^\pi$ and the definition of $Q^{\pi_k}$:
\begin{align*}
(\T^{\pi_{k+1}}V^{\pi_k})(s)
&=
\int_{\cA}\Big(Q^{\pi_k}(s,a)-\tau\log\pi_{k+1}(a\mid s)\Big)\pi_{k+1}(a\mid s) \dd a
\end{align*}
Similarly, since $V^{\pi_k}=\T^{\pi_k}V^{\pi_k}$,
\begin{align*}
V^{\pi_k}(s)
&=
\int_{\cA}\Big(Q^{\pi_k}(s,a)-\tau\log\pi_k(a\mid s)\Big)\pi_k(a\mid s) \dd a.
\end{align*}
Using the standard Gibbs identity
\begin{equation}\label{eq:gibbs_identity_dt_proof}
\int_{\cA} f(a)\mu(a)\,\dd a-\tau\int_{\cA}\mu(a)\log\mu(a)\,\dd a
=
\tau\log Z-\tau\KL(\mu\|p),
\end{equation}
we have that
\begin{align*}
(\T^{\pi_{k+1}}V^{\pi_k})(s)
&=\tau\log Z-\tau\KL(\pi_{k+1}(\cdot\mid s)\|p_s^{\pi_k}),
\\
V^{\pi_k}(s)
&=\tau\log Z-\tau\KL(\pi_k(\cdot\mid s)\|p_s^{\pi_k}).
\end{align*}
Subtracting gives
\[
g_k(s)=(\T^{\pi_{k+1}}V^{\pi_k})(s)-V^{\pi_k}(s)
=
\tau\Big(
\KL(\pi_k(\cdot\mid s)\|p_s^{\pi_k})
-
\KL(\pi_{k+1}(\cdot\mid s)\|p_s^{\pi_k})
\Big),
\]
which is exactly \eqref{eq:gk_as_KL_dt}.
\end{proof}

\subsection{One-step KL contraction with Discretization Error}\label{app:fixed-drift-ula}

\begin{lemma}[Fokker--Planck with conditional drift]\label{lem:conditional_fokker_planck}
Let $A_0\sim \pi_k(\cdot\mid s)$ and let $(B_t)_{t\ge 0}$ be standard Brownian motion in
$\R^d$, independent of $A_0$. Let $b:\R^d\to\R^d$ be Borel and assume
$\E\|b(A_0)\|<\infty$. For $t\in[0,\eta]$ define
\[
A_t := A_0 + t\,b(A_0) + \sqrt{2\tau}\,B_t,\qquad \mu_t := \Law(A_t).
\]
Define the conditional drift
\[
u_t(a):=\E[b(A_0)\mid A_t=a].
\]
Then, for $t>0$, $\mu_t$ satisfies the Fokker--Planck equation
\[
\partial_t \mu_t = -\nabla\cdot(u_t\,\mu_t) + \tau \Delta \mu_t
\]
in the sense of distributions.
\end{lemma}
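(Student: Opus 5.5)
We verify the distributional equation by testing against $\varphi\in C_c^\infty(\R^d)$. The claim is that for every such $\varphi$ the map $t\mapsto\E\varphi(A_t)$ is differentiable on $(0,\eta]$, with
\[
\frac{\dd}{\dd t}\E\varphi(A_t)=\E\bigl[\langle\nabla\varphi(A_t),u_t(A_t)\rangle\bigr]+\tau\,\E\bigl[\Delta\varphi(A_t)\bigr].
\]
Since $\E\,\psi(A_t)=\int\psi\,\dd\mu_t$, this identity is exactly the weak form of $\partial_t\mu_t=-\nabla\cdot(u_t\mu_t)+\tau\Delta\mu_t$. It also gives the equation tested against time-dependent test functions, once integrated in time.

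\textbf{Step 1: Itô's formula along the interpolation.} Condition on $A_0=a$. The path $t\mapsto a+t\,b(a)+\sqrt{2\tau}B_t$ is an Itô process with constant drift $b(a)$ and diffusion coefficient $\sqrt{2\tau}I$. Itô's formula gives
\[
\varphi(A_t)=\varphi(A_0)+\int_0^t\bigl(\langle\nabla\varphi(A_r),b(A_0)\rangle+\tau\Delta\varphi(A_r)\bigr)\dd r+\sqrt{2\tau}\int_0^t\nabla\varphi(A_r)\cdot \dd B_r .
\]
This holds jointly in $(A_0,B)$ because $A_0$ is $\mathcal F_0$-measurable for the enlarged filtration $\sigma(A_0)\vee\mathcal F^B_t$. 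The stochastic integral has bounded integrand, so it is a true martingale with mean zero. The drift term is integrable since $|\langle\nabla\varphi,b(A_0)\rangle|\le\|\nabla\varphi\|_\infty\|b(A_0)\|$ and $\E\|b(A_0)\|<\infty$. Fubini then gives
\[
\E\varphi(A_t)-\E\varphi(A_0)=\int_0^t\Bigl(\E\langle\nabla\varphi(A_r),b(A_0)\rangle+\tau\E\Delta\varphi(A_r)\Bigr)\dd r .
\]

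\textbf{Step 2: replace $b(A_0)$ by the conditional drift.} By the tower property, since $\nabla\varphi(A_r)$ is $\sigma(A_r)$-measurable and $b(A_0)$ is integrable,
\[
\E\langle\nabla\varphi(A_r),b(A_0)\rangle=\E\langle\nabla\varphi(A_r),\E[b(A_0)\mid A_r]\rangle=\int\langle\nabla\varphi,u_r\rangle\,\dd\mu_r .
\]
Here $u_r$ is taken to be the Bayes version identified in Lemma~\ref{lem:well-posedness-from-quantitative-assumptions}(v). That version is well defined for $r>0$ because the Gaussian kernel makes the density $q_r$ strictly positive. Substituting this into Step 1 gives the integrated weak form of the Fokker--Planck equation on $[0,t]$. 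Differentiating in $t$ for $t>0$ gives the claim, provided the integrand is continuous in $r$.

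\textbf{Step 3 (main obstacle): regularity in $r$.} Differentiating the integrated form pointwise in $t$ requires continuity of $r\mapsto\E\langle\nabla\varphi(A_r),b(A_0)\rangle$ and of $r\mapsto\E\Delta\varphi(A_r)$. Both follow by dominated convergence: $A_r\to A_{r'}$ almost surely by path continuity, $\varphi$ is smooth with compact support, and $\|\nabla\varphi\|_\infty\|b(A_0)\|$ is an integrable dominating function. Thus the integrand is continuous on $[0,\eta]$, and the fundamental theorem of calculus gives the pointwise derivative. The only delicate point is that $u_t$ is defined merely $\mu_t$-almost everywhere. This does no harm, because it enters only through $u_t\mu_t$, a finite signed measure with total variation at most $\E\|b(A_0)\|$. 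That bound comes from conditional Jensen, and it also shows that $\nabla\cdot(u_t\mu_t)$ is a well-defined distribution.
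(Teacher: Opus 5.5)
Your argument is correct. It follows the same skeleton as the paper's proof: condition on $A_0$, use that the conditional dynamics is Brownian motion with constant drift $b(A_0)$, average over $A_0$, and identify the averaged drift with $\E[b(A_0)\mid A_t]$. The difference is that you work with test functions, while the paper works with densities. The paper writes the conditional law of $A_t$ given $A_0=a_0$ explicitly as the Gaussian with mean $a_0+tb(a_0)$ and covariance $2\tau t I_d$, and notes that it solves the constant-drift Fokker--Planck equation. It then integrates that PDE against $\pi_k(\dd a_0\mid s)$ to get the flux $j_t(a)=\int b(a_0)p_t(a\mid a_0)\,\pi_k(\dd a_0\mid s)$, and uses Bayes' rule to write $j_t=u_t\mu_t$. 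In your version, It\^o's formula replaces the conditional Fokker--Planck equation and the tower property replaces Bayes' rule.

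Your route makes two things fully explicit using only $\E\|b(A_0)\|<\infty$, through Fubini and dominated convergence: the distributional meaning of the equation, and the interchange of $\partial_t$ and $\nabla\cdot$ with the mixture integral. The paper compresses that interchange into the phrase ``integrating the above Fokker--Planck equation over $a_0$''. Your route also never needs $\mu_t$ to have a density. The paper's route, in exchange, gives the explicit strictly positive density of $\mu_t$ and an everywhere-defined version of $u_t$. That is the form manipulated in the subsequent KL-derivative computation of Lemma~\ref{lem:KL_contraction_frozen}.

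Your appeal to the Bayes version from Lemma~\ref{lem:well-posedness-from-quantitative-assumptions}(v) is not needed, since that lemma is stated for $b_k$ under stronger hypotheses. It is harmless, though: as you observe, $u_t$ enters only through $u_t\mu_t$. Moreover, for a general Borel $b$ with $\E\|b(A_0)\|<\infty$, the Bayes numerator is finite anyway because the Gaussian kernel is bounded.
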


\begin{proof}[Proof of Lemma~\ref{lem:conditional_fokker_planck}]
Condition on $A_0=a_0$. Then
\[
A_t = a_0 + t\,b(a_0) + \sqrt{2\tau}\,B_t,
\]
so the conditional law of $A_t$ has density
\[
p_t(a\mid a_0)=\varphi_{2\tau t}\bigl(a-a_0-t b(a_0)\bigr),
\]
where $\varphi_{2\tau t}$ is the Gaussian density with covariance $2\tau t\,I_d$.
Equivalently, $p_t(\cdot\mid a_0)$ solves the standard Fokker--Planck equation
\[
\partial_t p_t(\cdot\mid a_0)
= -\nabla\cdot\bigl(b(a_0)\,p_t(\cdot\mid a_0)\bigr) + \tau \Delta p_t(\cdot\mid a_0).
\]
Now the marginal density of $\mu_t$ is the mixture
\[
\mu_t(a)=\int p_t(a\mid a_0)\,\pi_k(da_0\mid s).
\]
Integrating the above Fokker--Planck equation over $a_0$ gives
\[
\partial_t \mu_t(a)
= -\nabla\cdot\Bigl(\int b(a_0)\,p_t(a\mid a_0)\,\pi_k(da_0\mid s)\Bigr)
+ \tau \Delta \mu_t(a).
\]
Define the flux
\[
j_t(a):=\int b(a_0)\,p_t(a\mid a_0)\,\pi_k(da_0\mid s).
\]
By Bayes' rule, for $\mu_t$-a.e.\ $a$,
\[
u_t(a)=\E[b(A_0)\mid A_t=a]
      =\frac{\int b(a_0)\,p_t(a\mid a_0)\,\pi_k(da_0\mid s)}{\int p_t(a\mid a_0)\,\pi_k(da_0\mid s)}
      =\frac{j_t(a)}{\mu_t(a)}.
\]
Hence $j_t(a)=u_t(a)\,\mu_t(a)$, and substituting this into the previous display yields
\[
\partial_t \mu_t = -\nabla\cdot(u_t\,\mu_t) + \tau \Delta \mu_t,
\]
as claimed.
\end{proof}

\begin{proof}[Proof of Lemma~\ref{lem:KL_contraction_frozen}]
Fix $k\ge 0$ and $s\in\cS$. For brevity write
\[
b(a):=b_k(s,a),\qquad p(a):=p_s^{\pi_k}(a).
\]
Let $A_0\sim \pi_k(\cdot\mid s)$ and let $\{B_t\}_{t\ge 0}$ be standard Brownian motion in
$\R^d$, independent of $A_0$. For $t\in[0,\eta]$ define the one-step interpolation
\begin{equation}\label{eq:interp_frozen_dt}
A_t:=A_0+t\,b(A_0)+\sqrt{2\tau}\,B_t,
\qquad
\mu_t:=\Law(A_t).
\end{equation}
Then $\mu_0=\pi_k(\cdot\mid s)$ and $\mu_\eta=\pi_{k+1}(\cdot\mid s)$. We denote the
density of $\mu_t$ again by $\mu_t(a)$ and define
\[
u_t(a):=\E[b(A_0)\mid A_t=a].
\]
By Lemma~\ref{lem:conditional_fokker_planck}, $\mu_t$ solves
\[
\partial_t\mu_t=-\nabla\cdot(u_t\mu_t)+\tau\Delta\mu_t.
\]

Set
\[
K(t):=\KL(\mu_t\|p)
=
\int_{\R^d}\mu_t(a)\log\frac{\mu_t(a)}{p(a)}\,da.
\]
Using the Fokker--Planck equation and differentiating under the integral sign,
\[
\begin{aligned}
K'(t)
&=
\frac{d}{dt}\int_{\R^d}\mu_t(a)\log\frac{\mu_t(a)}{p(a)}\,da \\
&=
\int_{\R^d}\log\frac{\mu_t(a)}{p(a)}\,\partial_t\mu_t(a)\,da
+
\int_{\R^d}\partial_t\mu_t(a)\,da.
\end{aligned}
\]
Since $\int_{\R^d}\partial_t\mu_t(a)\,da=0$, this becomes
\[
K'(t)
=
\int_{\R^d}\log\frac{\mu_t(a)}{p(a)}
\left[-\nabla\cdot(u_t(a)\mu_t(a))+\tau\Delta\mu_t(a)\right]da.
\]
Integrating by parts gives
\[
\begin{aligned}
K'(t)
&=
\int_{\R^d}
\left\langle
\nabla\log\frac{\mu_t}{p}(a),
u_t(a)
\right\rangle
\mu_t(a)\,da
-
\tau
\int_{\R^d}
\left\langle
\nabla\log\frac{\mu_t}{p}(a),
\nabla\mu_t(a)
\right\rangle da \\
&=
\int_{\R^d}
\left\langle
\nabla\log\frac{\mu_t}{p}(a),
u_t(a)
\right\rangle
\mu_t(a)\,da
-
\tau
\int_{\R^d}
\left\langle
\nabla\log\frac{\mu_t}{p}(a),
\nabla\log\mu_t(a)
\right\rangle
\mu_t(a)\,da .
\end{aligned}
\]
By the Gibbs-score identity,
\[
\tau\nabla\log p=b.
\]
Therefore
\[
\nabla\log\mu_t
=
\nabla\log\frac{\mu_t}{p}+\nabla\log p
=
\nabla\log\frac{\mu_t}{p}+\frac{1}{\tau}b.
\]
Substituting this identity into the preceding display yields
\begin{equation}\label{eq:KL_derivative_frozen}
K'(t)
=
\E_{\mu_t}\Big[
\big\langle u_t(A)-b(A),\nabla\log\frac{\mu_t}{p}(A)\big\rangle
\Big]
-\tau \mathcal I(\mu_t\|p).
\end{equation}

Young's inequality gives
\[
\begin{aligned}
\E_{\mu_t}\Big[
\big\langle u_t(A)-b(A),\nabla\log\frac{\mu_t}{p}(A)\big\rangle
\Big]
&\le
\frac{1}{2\tau}\E_{\mu_t}\|u_t(A)-b(A)\|^2
+
\frac{\tau}{2} \mathcal I(\mu_t\|p).
\end{aligned}
\]
Combining this with \eqref{eq:KL_derivative_frozen},
\[
K'(t)
\le
-\frac{\tau}{2}\mathcal I(\mu_t\|p)
+
\frac{1}{2\tau}\E_{\mu_t}\|u_t(A)-b(A)\|^2.
\]
By the LSI for $p$,
\[
\mathcal I(\mu_t\|p)\ge 2\alpha\KL(\mu_t\|p)=2\alpha K(t).
\]
Hence
\begin{equation}\label{eq:KL_derivative_with_LSI}
K'(t)\le -\alpha\tau K(t)+\frac{1}{2\tau}\E_{\mu_t}\|u_t(A)-b(A)\|^2.
\end{equation}

It remains to bound the interpolation error. By conditional Jensen's inequality,
\[
\begin{aligned}
\E_{\mu_t}\|u_t(A)-b(A)\|^2
&=
\E\left\|
\E[b(A_0)\mid A_t]-b(A_t)
\right\|^2 \\
&=
\E\left\|
\E[b(A_0)-b(A_t)\mid A_t]
\right\|^2 \\
&\le
\E\left[
\E\left[\|b(A_0)-b(A_t)\|^2\mid A_t\right]
\right] \\
&=
\E\|b(A_0)-b(A_t)\|^2.
\end{aligned}
\]
Using the $L_b$-Lipschitzness of $b$,
\[
\E_{\mu_t}\|u_t(A)-b(A)\|^2
\le
L_b^2\E\|A_0-A_t\|^2.
\]
From \eqref{eq:interp_frozen_dt},
\[
A_t-A_0=t\,b(A_0)+\sqrt{2\tau}\,B_t.
\]
Since $B_t$ is independent of $A_0$ and has mean zero,
\[
\E\|A_t-A_0\|^2
=
t^2\E\|b(A_0)\|^2+2\tau d\,t
\le
B^2t^2+2\tau d\,t.
\]
Thus
\[
\E_{\mu_t}\|u_t(A)-b(A)\|^2
\le
L_b^2B^2t^2+2\tau L_b^2d\,t.
\]
Substituting this bound into \eqref{eq:KL_derivative_with_LSI}, we get
\[
K'(t)
\le
-\alpha\tau K(t)
+
L_b^2 d\,t
+
\frac{L_b^2B^2}{2\tau}t^2.
\]
By Gronwall's inequality,
\begin{align*}
K(\eta)
&\le
e^{-\alpha\tau\eta}K(0)
+
\int_0^\eta
e^{-\alpha\tau(\eta-u)}
\left(
L_b^2 d\,u+\frac{L_b^2B^2}{2\tau}u^2
\right)\dd u \\
&\le
e^{-\alpha\tau\eta}K(0)
+
\frac{L_b^2 d}{2}\eta^2
+
\frac{L_b^2B^2}{6\tau}\eta^3.
\end{align*}
Recalling that
\[
K(0)=\KL(\pi_k(\cdot\mid s)\|p_s^{\pi_k}),
\qquad
K(\eta)=\KL(\pi_{k+1}(\cdot\mid s)\|p_s^{\pi_k}),
\]
we obtain
\[
\KL(\pi_{k+1}(\cdot\mid s)\|p_s^{\pi_k})
\le
e^{-\alpha\tau\eta}\KL(\pi_k(\cdot\mid s)\|p_s^{\pi_k})
+
\frac{L_b^2 d}{2}\eta^2
+
\frac{L_b^2B^2}{6\tau}\eta^3.
\]
This proves \eqref{eq:KL_contraction_step}.
\end{proof}

\subsection{From statewise KL contraction to Bellman improvement}

\begin{proof}[Proof of Lemma~\ref{lem:KL_contraction_bellman_consequence}]
By Lemma~\ref{lem:V_increment_resolvent_dt},
\[
g_k(s)=\tau\Big(
\KL(\pi_k(\cdot\mid s)\|p_s^{\pi_k})
-
\KL(\pi_{k+1}(\cdot\mid s)\|p_s^{\pi_k})
\Big).
\]
Using \eqref{eq:abstract_KL_contraction_to_g_assumption},
\begin{align*}
g_k(s)
&\ge
\tau\Big(
\KL(\pi_k(\cdot\mid s)\|p_s^{\pi_k})
-(1-c)\KL(\pi_k(\cdot\mid s)\|p_s^{\pi_k})
-\delta
\Big)\\
&=c\,\tau\KL(\pi_k(\cdot\mid s)\|p_s^{\pi_k})-\tau\delta
=cR_k(s)-\tau\delta.
\end{align*}
This proves \eqref{eq:gk_lower_from_KL_contraction}.  The final sentence follows by substituting the KL contraction from Lemma~\ref{lem:KL_contraction_frozen}.
\end{proof}

\subsection{Proof of the main theorem}\label{app:proof-main-discrete}
\begin{proof}[Proof of Theorem~\ref{thm:main_discrete}]
Define
\[
W_k(s):=V^\star(s)-V^{\pi_k}(s),
\qquad
E_k:=\|W_k\|_\infty,
\qquad
R_k(s):=(\T^\star V^{\pi_k})(s)-V^{\pi_k}(s).
\]
By Proposition~\ref{prop:WPG_uniform_stability}, the constants $\bar L_b$, $\bar M$, $\bar B^2$, $\bar\alpha$, $\bar\delta_\eta$, and $c_\eta$ are valid uniformly over all iterates.  The Gibbs targets are strictly positive bounded perturbations of $\rho_\beta$, the score identity \eqref{eq:stationarity_pk_dt} holds, and the finite-second-moment, drift-moment, admissibility, and $\KL(\pi_k(\cdot\mid s)\|p_s^{\pi_k})<\infty$ hypotheses needed for Lemmas~\ref{lem:KL_contraction_frozen} and~\ref{lem:V_increment_resolvent_dt} hold at every step. The one-step contraction below follows from the same  argument used in the proof of Proposition~\ref{prop:WPG_uniform_stability}; for completeness and readability, we spell out the derivation.

Lemma~\ref{lem:KL_contraction_frozen}, applied with $(\alpha,L_b,B^2)=(\bar\alpha,\bar L_b,\bar B^2)$, gives
\[
\KL(\pi_{k+1}(\cdot\mid s)\|p_s^{\pi_k})
\le e^{-\bar\alpha\tau\eta}\KL(\pi_k(\cdot\mid s)\|p_s^{\pi_k})+\bar\delta_\eta.
\]
Applying Lemma~\ref{lem:KL_contraction_bellman_consequence} with $c_\eta=1-e^{-\bar\alpha\tau\eta}$ yields
\begin{equation}\label{eq:gk_lower_global_proof}
g_k(s)\ge c_\eta R_k(s)-\tau\bar\delta_\eta.
\end{equation}
Since $R_k\ge0$, this also gives $g_k(s)\ge-\tau\bar\delta_\eta$.  The one-step resolvent identity gives
\[
V^{\pi_{k+1}}-V^{\pi_k}=(I-\gamma P^{\pi_{k+1}})^{-1}g_k
=\sum_{n\ge0}\gamma^n(P^{\pi_{k+1}})^n g_k,
\]
and hence
\begin{equation}\label{eq:Vk_increment_lower_global_proof}
(V^{\pi_{k+1}}-V^{\pi_k})(s)
\ge c_\eta R_k(s)-\frac{\tau}{1-\gamma}\bar\delta_\eta.
\end{equation}
Therefore
\begin{equation}\label{eq:W_update_global_proof}
W_{k+1}(s)
\le W_k(s)-c_\eta R_k(s)+\frac{\tau}{1-\gamma}\bar\delta_\eta.
\end{equation}
By Lemma~\ref{lem:residual_lower_epsmax_dt}, $R_k(s)\ge W_k(s)-\gamma E_k$.  Substituting this into \eqref{eq:W_update_global_proof} and taking the supremum over $s$ gives
\begin{equation}\label{eq:Ek_recursion_global_proof}
E_{k+1}\le \kappa_\eta E_k+\frac{\tau}{1-\gamma}\bar\delta_\eta,
\qquad
\kappa_\eta:=1-(1-\gamma)c_\eta.
\end{equation}
Iterating,
\[
E_k\le \kappa_\eta^kE_0+\frac{\tau}{(1-\gamma)^2}\frac{\bar\delta_\eta}{c_\eta}.
\]
Since $0<\eta\le\eta_0$, we have $\bar\alpha\tau\eta\le1$, so
$c_\eta=1-e^{-\bar\alpha\tau\eta}\ge \bar\alpha\tau\eta/2$, and
$\kappa_\eta^k\le\exp(-(1-\gamma)c_\eta k)\le\exp(-\tfrac12\bar\alpha\tau(1-\gamma)\eta k)$.
Also, by \eqref{eq:Cdelta_def} and \eqref{eq:eta0_def}, $\bar\delta_\eta\le C_\delta\eta^2$.  Thus
\[
\|V^\star-V^{\pi_k}\|_\infty=E_k
\le
\exp\Big(-\tfrac12\bar\alpha\tau(1-\gamma)\eta k\Big)
\|V^\star-V^{\pi_0}\|_\infty
+
\frac{2C_\delta}{\bar\alpha(1-\gamma)^2}\eta.
\]
Finally, Lemma~\ref{lem:optimal-gibbs-policy} gives $V^{\pi_k}\le V^\star$, and hence
\[
J(\pi^\star)-J(\pi_k)
=\int_\cS (V^\star(s)-V^{\pi_k}(s))\rho_0(ds)
\le \|V^\star-V^{\pi_k}\|_\infty.
\]
\end{proof}

\section{Standard analytic tools}\label{app:analytic-tools}\label{app:auxiliary}

This appendix collects standard analytic inputs used in the fixed-drift Langevin estimate and the uniform bounds.  These tools are stated separately so that the paper's main proof remains focused on the Bellman residual/resolvent mechanism.

\subsection{Gaussian entropy, smoothing, and bounded perturbations}

\begin{lemma}[Finite Gaussian-relative entropy gives a second moment]
\label{lem:gaussian-KL-second-moment}
Let $\mu$ be a probability measure on $\R^d$. If $\KL(\mu\|\rho_\beta)<\infty$, then
$\int \|a\|^2\,\mu(da)<\infty$. More quantitatively, for every
$c\in(0,\beta/(2\tau))$,
\[
    c\int_{\R^d}\|a\|^2\,\mu(da)
    \le
    \KL(\mu\|\rho_\beta)
    +
    \log\int_{\R^d} e^{c\|a\|^2}\rho_\beta(a)\,da
    <\infty.
\]
\end{lemma}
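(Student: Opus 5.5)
The natural route is the Donsker--Varadhan (Gibbs) variational inequality applied to the test function $f(a)=c\|a\|^2$. For any probability measure $\mu\ll\rho_\beta$ and any measurable $f\ge0$, we have
\[
\int f\,d\mu\le \KL(\mu\|\rho_\beta)+\log\int e^{f}\,d\rho_\beta .
\]
This is the dual form of relative entropy, and it is valid in $[0,\infty]$ whenever the left-hand side makes sense. Note that $\KL(\mu\|\rho_\beta)<\infty$ forces $\mu\ll\rho_\beta$, so the inequality applies.

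I would prove the inequality directly rather than cite it, because unbounded $f$ needs care. First take a bounded truncation $f_n=\min\{f,n\}$ and form the tilted probability $\nu_n\propto e^{f_n}\rho_\beta$. Then
\[
0\le\KL(\mu\|\nu_n)=\KL(\mu\|\rho_\beta)-\int f_n\,d\mu+\log\int e^{f_n}\,d\rho_\beta .
\]
All terms here are finite: $f_n$ is bounded and the KL is finite by hypothesis. This gives the inequality for each $f_n$. Letting $n\to\infty$, monotone convergence on both integrals yields it for $f$, with the right-hand side possibly infinite at this stage.

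Next I would compute the Gaussian moment generating factor explicitly. For $c<\beta/(2\tau)$,
\[
\int e^{c\|a\|^2}\rho_\beta(a)\,da=Z_\beta^{-1}\int e^{-(\beta/(2\tau)-c)\|a\|^2}da=\Bigl(\frac{\beta/(2\tau)}{\beta/(2\tau)-c}\Bigr)^{d/2}=\Bigl(1-\frac{2\tau c}{\beta}\Bigr)^{-d/2}<\infty .
\]
This shows the right-hand side is finite exactly in the stated range $c\in(0,\beta/(2\tau))$. As a consistency check, $c=\beta/(4\tau)$ gives $2^{d/2}$, matching the value $\tfrac d2\log2$ used in \eqref{eq:M0_explicit_bound}.

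The qualitative claim $\int\|a\|^2\,d\mu<\infty$ then follows by dividing the inequality by $c>0$.

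The only delicate step is the truncation and limit argument. It is needed so that no $\infty-\infty$ appears when the integrability of $\|a\|^2$ under $\mu$ is not yet known. Everything else is routine.
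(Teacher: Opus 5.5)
Your proposal is correct and follows the paper's proof: both apply the entropy (Donsker--Varadhan) variational inequality with $f(a)=c\|a\|^2$ against $\rho_\beta$, then check that the Gaussian exponential moment is finite for $c<\beta/(2\tau)$. Your truncation-and-monotone-convergence justification for unbounded $f$ and the explicit value $(1-2\tau c/\beta)^{-d/2}$ add rigor and precision; the paper simply cites the inequality and notes finiteness.
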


\begin{proof}
We use the entropy variational inequality: for any probability measures
\(\mu,\nu\) and any measurable function \(f\) such that the right-hand side is
well-defined,
\[
    \int f\,d\mu
    \le
    \KL(\mu\|\nu)+\log\int e^f\,d\nu .
\]
Applying this inequality with \(\nu=\rho_\beta\) and
\(f(a)=c\|a\|^2\) gives
\[
    c\int_{\R^d}\|a\|^2\,\mu(da)
    \le
    \KL(\mu\|\rho_\beta)
    +
    \log\int_{\R^d} e^{c\|a\|^2}\rho_\beta(a)\,da .
\]
It remains to check that the Gaussian exponential moment is finite. Since
\[
    \rho_\beta(a)
    =
    Z_\beta^{-1}
    \exp\left(-\frac{\beta}{2\tau}\|a\|^2\right),
\]
we have
\[
    \int_{\R^d} e^{c\|a\|^2}\rho_\beta(a)\,da
    =
    Z_\beta^{-1}
    \int_{\R^d}
    \exp\left(
        -\left(\frac{\beta}{2\tau}-c\right)\|a\|^2
    \right)da
    <\infty
\]
whenever \(c\in(0,\beta/(2\tau))\). Therefore the displayed bound holds, and
in particular \(\int \|a\|^2\,\mu(da)<\infty\).
\end{proof}

\begin{lemma}[Holley--Stroock bounded perturbation]\label{lem:Holley-stroock}
Let $\mu$ be a probability measure on $\mathbb{R}^n$ satisfying $\mathrm{LSI}(\alpha)$, namely,
\[
    \mathcal I(\nu\|\mu)\ge 2\alpha\,\KL(\nu\|\mu),
    \qquad \forall\,\nu\ll\mu .
\]
Let $\psi\in L^\infty(\mu)$ and define
\[
d\tilde{\mu}
:=
\frac{e^\psi}{\int e^\psi\,d\mu}\,d\mu,
\qquad
\operatorname{Osc}(\psi)
:=
\operatorname*{ess\,sup}\psi-\operatorname*{ess\,inf}\psi.
\]
Then $\tilde{\mu}$ satisfies $\mathrm{LSI}(\tilde{\alpha})$ with
\[
\tilde{\alpha}
\ge
e^{-\operatorname{Osc}(\psi)}\alpha.
\]
Equivalently,
\[
    \mathcal I(\nu\|\tilde{\mu})
    \ge
    2\alpha e^{-\operatorname{Osc}(\psi)}
    \KL(\nu\|\tilde{\mu}),
    \qquad \forall\,\nu\ll\tilde{\mu}.
\]
\end{lemma}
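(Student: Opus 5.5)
The proof will go through the entropy-variational characterization of LSI rather than through Fisher information directly. Writing $f=\sqrt{d\nu/d\tilde\mu}$, the inequality $\mathcal I(\nu\|\tilde\mu)\ge 2\tilde\alpha\KL(\nu\|\tilde\mu)$ is equivalent to the functional form
\[
\mathrm{Ent}_{\tilde\mu}(f^2)\le \frac{2}{\tilde\alpha}\int\|\nabla f\|^2\,d\tilde\mu
\]
for smooth $f$. This holds because $\KL(\nu\|\tilde\mu)=\mathrm{Ent}_{\tilde\mu}(f^2)$ when $\int f^2\,d\tilde\mu=1$, and $\mathcal I(\nu\|\tilde\mu)=4\int\|\nabla f\|^2\,d\tilde\mu$. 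So the first step is to restate both the hypothesis on $\mu$ and the desired conclusion in this Dirichlet-form language, with constant $2/\alpha$ for $\mu$. The target constant for $\tilde\mu$ is then $2e^{\mathrm{Osc}(\psi)}/\alpha$.

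The second step uses the variational formula for entropy:
\[
\mathrm{Ent}_m(g)=\inf_{c>0}\int\bigl(g\log g-g\log c-g+c\bigr)\,dm,
\]
where the integrand $\phi_c(g)=g\log(g/c)-g+c$ is pointwise nonnegative. Write $d\tilde\mu=w\,d\mu$ with $w=e^\psi/\int e^\psi\,d\mu$. Then
\[
e^{\operatorname*{ess\,inf}\psi}/\int e^\psi\,d\mu\le w\le e^{\operatorname*{ess\,sup}\psi}/\int e^\psi\,d\mu .
\]
Nonnegativity of $\phi_c$ gives
\[
\mathrm{Ent}_{\tilde\mu}(f^2)\le\inf_c\int\phi_c(f^2)\,w\,d\mu\le \sup w\cdot \mathrm{Ent}_\mu(f^2)\le \sup w\cdot\frac{2}{\alpha}\int\|\nabla f\|^2\,d\mu.
\]
Since $d\mu\le(\inf w)^{-1}d\tilde\mu$, we have $\int\|\nabla f\|^2\,d\mu\le(\inf w)^{-1}\int\|\nabla f\|^2\,d\tilde\mu$. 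The normalization cancels in $\sup w/\inf w=e^{\mathrm{Osc}(\psi)}$, which yields the claimed constant $\tilde\alpha\ge e^{-\mathrm{Osc}(\psi)}\alpha$.

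The final step translates back to the KL/Fisher form for arbitrary $\nu\ll\tilde\mu$ (equivalently $\nu\ll\mu$, since $w$ is bounded above and below). If $\mathcal I(\nu\|\tilde\mu)=\infty$ there is nothing to prove. Otherwise $f$ lies in the weighted Sobolev space, and I would pass from smooth compactly supported functions to $f$ by a standard truncation and mollification argument. Entropy is lower semicontinuous along the approximation and the Dirichlet energy converges, so the inequality survives the limit.

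I expect this density/approximation step to be the only technical point. The paper's definition of LSI is stated "for every $\mu$ for which both sides are well-defined," so it suffices to note that the two-sided bound on $w$ makes the Sobolev spaces for $\mu$ and $\tilde\mu$ coincide. The approximation used for $\mu$ therefore transfers verbatim to $\tilde\mu$.
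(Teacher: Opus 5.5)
Your argument is correct. It is the classical Holley--Stroock proof: the variational formula for entropy with the pointwise nonnegative integrand $\phi_c$, followed by $\sup w/\inf w=e^{\operatorname{Osc}(\psi)}$. The paper states this lemma without proof and relies on the citation to Holley and Stroock, so that argument is exactly what it is deferring to.

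The closing density/approximation step is harmless but not needed. Apply the hypothesis on $\mu$ directly to $\nu'\propto f^2\mu$, using the same $f=\sqrt{d\nu/d\tilde\mu}$. This runs your chain of inequalities for every $\nu\ll\tilde\mu$ with $\mathcal I(\nu\|\tilde\mu)<\infty$, with no approximation, and it never requires $\psi$ to be differentiable.
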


\begin{lemma}[Quantitative Gaussian KL after smoothing]\label{lem:finite_gaussian_KL}
Let $\rho_\beta(a)=Z_\beta^{-1}\exp(-\beta\|a\|^2/(2\tau))$ on $\R^d$. Let $Y=X+\sigma\xi$, where $\sigma>0$, $\xi\sim\mathcal N(0,I_d)$ is independent of $X$, and $\E\|Y\|^2<\infty$. Then $\nu:=\Law(Y)$ has a density and
\begin{equation}\label{eq:quant_gaussian_KL}
\KL(\nu\|\rho_\beta)
\le
\frac{\beta}{2\tau}\E\|Y\|^2+\log Z_\beta-\frac d2\log(2\pi e\sigma^2).
\end{equation}
In particular, if $\sigma^2=2\tau\eta$ and $\E\|Y\|^2\le M$, then
\begin{equation}\label{eq:quant_gaussian_KL_eta}
\KL(\nu\|\rho_\beta)
\le
\frac{\beta M}{2\tau}+\log Z_\beta-\frac d2\log(4\pi e\tau\eta).
\end{equation}
Moreover, if $p(a)=Z_p^{-1}e^{\psi(a)}\rho_\beta(a)$ with $\|\psi\|_\infty\le C$, then every probability measure $\mu$ satisfying $\KL(\mu\|\rho_\beta)<\infty$ obeys
\begin{equation}\label{eq:bounded_tilt_KL}
\KL(\mu\|p)\le \KL(\mu\|\rho_\beta)+2C.
\end{equation}
\end{lemma}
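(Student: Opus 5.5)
We prove the three claims in turn: the Gaussian-smoothing entropy bound, its specialisation to $\sigma^2=2\tau\eta$, and the bounded-tilt comparison.

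\textbf{Density and entropy bound.} Since $Y=X+\sigma\xi$ with $\xi$ independent of $X$, conditioning on $X=x$ shows that $\nu$ has the density
\[
\nu(y)=\E\,\phi_{\sigma^2}(y-X),
\]
a mixture of Gaussians. This density is strictly positive, and it is bounded by $(2\pi\sigma^2)^{-d/2}$.

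We then split the relative entropy as
\[
\KL(\nu\|\rho_\beta)=\int\nu\log\nu\,\dd y-\int\nu\log\rho_\beta\,\dd y .
\]
The second term equals $\frac{\beta}{2\tau}\E\|Y\|^2+\log Z_\beta$, which is finite because $\E\|Y\|^2<\infty$.

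For the first term we need the upper bound
\[
\int\nu\log\nu\le-\tfrac d2\log(2\pi e\sigma^2),
\]
i.e.\ the differential entropy satisfies $h(Y)\ge h(\sigma\xi)$. This is the standard fact that conditioning reduces entropy: $h(Y)\ge h(Y\mid X)=h(\sigma\xi)$.

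To avoid mutual-information formalism, argue via convexity instead. The map $u\mapsto u\log u$ is convex, so Jensen's inequality applied to the mixture gives
\[
\int\nu\log\nu\le\E\int\phi_{\sigma^2}(y-X)\log\phi_{\sigma^2}(y-X)\,\dd y=-\tfrac d2\log(2\pi e\sigma^2).
\]
Concretely, Jensen is applied pointwise in $y$ to the $X$-average inside $u\log u$, and the order of integration is then exchanged by Tonelli. Tonelli requires a sign condition, which we obtain by writing
\[
u\log u=u\log\bigl(u(2\pi\sigma^2)^{d/2}\bigr)-u\,\tfrac d2\log(2\pi\sigma^2),
\]
where the first piece is nonpositive because $u\le(2\pi\sigma^2)^{-d/2}$.

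Combining the two terms gives \eqref{eq:quant_gaussian_KL}. Substituting $\sigma^2=2\tau\eta$ and $\E\|Y\|^2\le M$ then gives \eqref{eq:quant_gaussian_KL_eta}.

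\textbf{Bounded tilt.} Let $p=Z_p^{-1}e^{\psi}\rho_\beta$ with $\|\psi\|_\infty\le C$, where
\[
Z_p=\int e^{\psi}\rho_\beta\in[e^{-C},e^{C}].
\]
Since $\log(\dd\mu/\dd p)=\log(\dd\mu/\dd\rho_\beta)-\psi+\log Z_p$, integrating against $\mu$ gives
\[
\KL(\mu\|p)=\KL(\mu\|\rho_\beta)-\int\psi\,\dd\mu+\log Z_p\le\KL(\mu\|\rho_\beta)+C+C,
\]
which is \eqref{eq:bounded_tilt_KL}. The integrals are legitimate because $\psi$ is bounded and $\KL(\mu\|\rho_\beta)<\infty$.

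\textbf{Main obstacle.} The only delicate step is justifying the Jensen and Tonelli interchange in the entropy bound, since $u\log u$ changes sign. This is handled by the uniform density bound and the decomposition described above. The finiteness of $\int\nu|\log\rho_\beta|$ follows from the second-moment assumption.
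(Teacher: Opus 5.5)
Your proof is correct and follows the paper's structure. Like the paper, you split $\KL(\nu\|\rho_\beta)$ into $-h(Y)$ plus the explicit second-moment and $\log Z_\beta$ term, lower-bound $h(Y)$ by $h(\sigma\xi)=\frac d2\log(2\pi e\sigma^2)$, and carry out the identical bounded-tilt computation. The one difference is how you justify $h(Y)\ge h(\sigma\xi)$: you use pointwise Jensen for $u\log u$, with the sup-bound $\nu\le(2\pi\sigma^2)^{-d/2}$ making Tonelli legitimate, whereas the paper writes the inequality as nonnegativity of the mutual information $I(X;Y)$ and invokes the Gaussian maximum-entropy bound to ensure $h(Y)$ is finite; your version handles the integrability bookkeeping a bit more explicitly.
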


\begin{proof}[Proof of Lemma~\ref{lem:finite_gaussian_KL}]
Let
\[
g_\sigma(z):=(2\pi\sigma^2)^{-d/2}
\exp\left(-\frac{\|z\|^2}{2\sigma^2}\right)
\]
be the density of \(\sigma\xi\). If \(\mu_X=\Law(X)\), then the law
\(\nu=\Law(Y)\) has the Lebesgue density
\[
q(y)=\int_{\R^d} g_\sigma(y-x)\,\mu_X(dx).
\]
Since \(g_\sigma>0\), we have \(q(y)>0\) for every \(y\in\R^d\).

We first justify the entropy lower bound without assuming that \(X\) has a
density. The joint law of \((X,Y)\) is
\[
P_{X,Y}(dx,dy)=\mu_X(dx)\,g_\sigma(y-x)\,dy,
\]
whereas \(P_X\otimes \nu\) is
\[
(P_X\otimes \nu)(dx,dy)=\mu_X(dx)\,q(y)\,dy.
\]
Therefore \(P_{X,Y}\ll P_X\otimes\nu\), and the mutual information satisfies
\[
\begin{aligned}
0
&\le I(X;Y)
 :=\KL(P_{X,Y}\|P_X\otimes\nu)  \\
&=
\int_{\R^d}\int_{\R^d}
\log\left(\frac{g_\sigma(y-x)}{q(y)}\right)
g_\sigma(y-x)\,dy\,\mu_X(dx)  \\
&=
\int_{\R^d}\int_{\R^d}
g_\sigma(y-x)\log g_\sigma(y-x)\,dy\,\mu_X(dx)
-\int_{\R^d} q(y)\log q(y)\,dy  \\
&=
-h(\sigma\xi)+h(Y).
\end{aligned}
\]
Equivalently,
\[
h(Y)\ge h(\sigma\xi)
=
\frac d2\log(2\pi e\sigma^2).
\]
This argument is understood in the usual extended-real sense; the last
display gives a finite lower bound. On the other hand, the Gaussian
maximum-entropy inequality under the second-moment constraint gives a finite
upper bound on \(h(Y)\), since \(\E\|Y\|^2<\infty\). Hence \(h(Y)\) is a
finite real number.

Expanding the relative entropy to the Gaussian reference gives
\[
\KL(\nu\|\rho_\beta)
=
\int_{\R^d} q(y)\log q(y)\,dy
-\int_{\R^d}q(y)\log\rho_\beta(y)\,dy
=
\frac{\beta}{2\tau}\E\|Y\|^2-h(Y)+\log Z_\beta.
\]
Using the entropy lower bound above proves
\[
\KL(\nu\|\rho_\beta)
\le
\frac{\beta}{2\tau}\E\|Y\|^2+\log Z_\beta
-\frac d2\log(2\pi e\sigma^2),
\]
which is \eqref{eq:quant_gaussian_KL}. The bound
\eqref{eq:quant_gaussian_KL_eta} follows by substituting
\(\sigma^2=2\tau\eta\) and \(\E\|Y\|^2\le M\).

For the bounded-perturbation claim, write \(Z_p=\int e^\psi\,d\rho_\beta\). Since
\(\|\psi\|_\infty\le C\), we have
\[
e^{-C}\le Z_p\le e^C,
\qquad
\log Z_p\le C.
\]
Moreover, if \(\KL(\mu\|\rho_\beta)<\infty\), then \(\mu\ll\rho_\beta\), and
since \(p\) is equivalent to \(\rho_\beta\), the following computation is
legitimate:
\[
\KL(\mu\|p)
=
\KL(\mu\|\rho_\beta)-\int \psi\,d\mu+\log Z_p
\le
\KL(\mu\|\rho_\beta)+C+C.
\]
This proves \eqref{eq:bounded_tilt_KL}.
\end{proof}

\end{document}